\documentclass{article}

\PassOptionsToPackage{round, authoryear}{natbib}

\usepackage[main, final]{neurips_2026}

\usepackage[utf8]{inputenc} %
\usepackage[T1]{fontenc}    %
\usepackage[hidelinks]{hyperref}       %
\usepackage{url}            %
\usepackage{booktabs}       %
\usepackage{amsfonts}       %
\usepackage{nicefrac}       %
\usepackage{microtype}      %
\usepackage{xcolor}         %
\usepackage{amsmath, amssymb, amsthm}
\usepackage{mathtools}
\usepackage[ruled,vlined,linesnumbered]{algorithm2e}
\usepackage{tcolorbox}
\usepackage{wrapfig}
\usepackage{capt-of}  %

\makeatletter
\newcommand{\appendixtoc}{\@starttoc{apx}}

\makeatother

\tcbuselibrary{skins, breakable}

\newtcolorbox{remark}[1][]{
  enhanced,
  breakable,
  colback=gray!10,
  colframe=gray!50,
  fonttitle=\bfseries\color{black},
  title=Remark,
  #1
}

\newtheorem{proposition}{Proposition}
\newtheorem{lemma}{Lemma}
\newtheorem{corollary}{Corollary}
\newtheorem{definition}{Definition}

\newcommand{\piref}{\pi_{\theta_{\mathit{ref}}}}

\newcommand{\KL}{\mathit{KL}}

\newcommand{\eqdef}{\overset{\Delta}{=}}

\DeclareMathOperator{\argmax}{\arg \max}
\DeclareMathOperator{\argmin}{\arg \min}

\newcommand{\indep}{\perp\!\!\!\perp}

\newcommand{\setA}{\mathcal{A}}
\newcommand{\setB}{\mathcal{B}}
\newcommand{\setC}{\mathcal{C}}

\newcommand{\setL}{\mathcal{L}}

\newcommand{\setO}{\mathcal{O}}

\newcommand{\setZ}{\mathcal{Z}}

\newcommand{\old}[1]{}

\newcommand{\bbE}{\mathbb{E}}



\title{Follow the Winners: Conservative Policy Improvement with the Cross-Entropy Method for Critic-Free RFT}

\author{%
  Joery A. de Vries$^{1}$ \qquad Neil D. Lawrence$^{1,2}$ \qquad Zhenwen Dai$^{1}$ \\[0.6em]
  $^{1}$Trent AI Limited, London, United Kingdom \\
  $^{2}$Computer Laboratory, University of Cambridge, Cambridge, United Kingdom \\[0.4em]
  \texttt{\{joery,neil,zhenwen\}@trent.ai} \qquad \texttt{ndl21@cam.ac.uk} \\
}

\begin{document}

\maketitle

\begin{abstract}
Critic-free reinforcement fine-tuning (RFT) for agentic large language models is often done through GRPO-style methods, which compute a group baseline over repeated rollouts to reduce target variance.
However, this setup is ill-suited to agents acting in stateful environments such as live services or security sandboxes, where repeated rollouts are impractical to obtain and aggressive updates entrench the noise of long, sparsely verified trajectories.
We propose \textit{Follow the Winners} (FTW), a critic-free policy-learning algorithm that adapts the cross-entropy method to RFT, replacing group rollouts with an ordinal filter on replay-buffer samples that yields polynomial concentration in the order statistic of returns.
We derive FTW through a control-as-inference lens, which also recovers GRPO and DPO as specific modelling choices, identifying GRPO as risk-neutral while DPO and FTW share a bounded risk-seeking offset that FTW controls.
We identify this offset as an inherent trade-off of variance reduction through ordinal filters on samples, whereas a critic model induces a different trade-off between bias and variance.
Scaled to agentic LLM post-training, FTW matches GRPO and PPO on Sokoban and Search-R1 baselines, showing a viable trade-off from a value model or group rollouts to CPU memory.
\end{abstract}

\section{Introduction}  \label{section:intro}

Agentic large language models (LLMs) are increasingly post-trained with reinforcement learning fine-tuning (RFT) to act on software systems \citep{zhang2026agenticllmsurvey, wei2025swerl}, where tasks range from writing or repairing code \citep{wei2025swerl}, to investigating security incidents through logs and configurations \citep{wang2024rcagent, roy2024rca}, to discovering, reproducing, and autonomously patching security vulnerabilities \citep{zhang2025cybench, lee2025secbench, wang2025cvebench}.
But, when an agent trajectory spans many tool calls and reasoning chains, reproducibility and variance become increasingly problematic due to sparse verifiable outcomes, partial observability, or context compression.
Even under deterministic rewards, the resulting trajectory returns exhibit high variance due to long horizons, partial observability, and state aliasing \citep{kaelbling1998planning, spaan_partially_2012, ouyang2022training, yao2023react}.
Policy-gradient methods such as PPO \citep{schulman2017proximal} partly reduce variance with a learned critic, at the cost of GPU memory for a second model and possible bias depending on how we bootstrap \citep{schulman2016gae}.
Critic-free methods such as GRPO \citep{shao2024deepseekmath} instead compute a baseline from a group of repeated rollouts per prompt, which moves the cost to throughput and requires every rollout in the group to have identical initial states.
This assumption is fine for static code-writing, but is unrealistic in live or production systems where agents operate in stateful sandboxes.
Moreover, despite these variance-reduction methods, most policy gradient methods still concentrate aggressively on absolute returns \citep{mei2020escaping}.
% Instead, a practical critic-free RFT algorithm should update more conservatively to reduce noise entrenchment while drawing variance reduction from sources without GRPO's or PPO's constraints.

These costs compound when trajectories are expensive to collect, comparable rollouts are difficult to obtain, and returns depend on sparse or approximate verification. 
For instance, in a security workflow, an attack-chain investigation may yield a single useful trajectory whose initial sandbox state is costly or impractical to reconstruct to gather more data.
Likewise, a long-horizon inspection of logs to locate a bug or vulnerability can drift, hallucinate, or reason incorrectly, so that two trajectories on similar incidents yield different returns.
Such data and post-training considerations suggest that a practical critic-free RFT algorithm should update conservatively to reduce noise entrenchment while drawing variance reduction from sources without GRPO's or PPO's stringent constraints.

We propose \textit{Follow the Winners} (FTW), a critic-free policy-learning algorithm that induces polynomial concentration in the order statistic of returns.
This behavior stems from our trajectory filtering procedure, which adapts the cross-entropy method \citep{rubinstein_optimization_1997, rubinstein1999cem, botev2013cemopt} to deep RL through minibatched filtering over a replay-buffer proposal that amortizes data generation and serves as the alternative source of variance reduction.
We show that the well-known post-training algorithm DPO \citep{rafailov2023direct} and FTW share a bounded risk-seeking offset, which we identify as inherent to reducing variance through ordinal filters on samples.
Furthermore, we show that GRPO is risk-neutral and that FTW controls its risk offset through its selection pressure.

Our analysis is grounded in the Control-as-Inference (CAI) framework \citep{ziebart_modeling_2010, levine2018reinforcement, Kappen2012}, which we extend by recasting critic-free RFT as inference over potentials defined on trajectory batches rather than individual trajectories.
The resulting variational objective motivates the cross-entropy method as a simple sampling-and-projection scheme for optimization, from which we derive FTW.
Under the same view, DPO emerges as the pairwise special case of a Plackett--Luce potential \citep{luce1959, plackett1975, rafailov2023direct}, while GRPO \citep{shao2024deepseekmath} optimizes a Rao--Blackwellized upper bound on the evidence of a multinomial potential.
This places GRPO, DPO, and FTW in a shared design space in which we can juxtapose their modelling decisions and risk offsets, a second theoretical axis of trade-offs next to the practical considerations.

We validate our theory predictions on diagnostic bandit problems and show that FTW extends as a general policy learning method in Brax continuous-control.
Scaled to agentic LLM post-training, FTW trains more stably and with lower seed variance than REINFORCE++ \citep{hu2025reinforcestabilizingcriticfreepolicy}, and matches GRPO and PPO on Sokoban and the published Search-R1 baselines but with our alternative variance-reduction scheme, i.e., no value model, group rollouts or environment duplication.
The LLM results show that the combined effect of FTW's polynomially concentrating learning rule and replay-buffer proposal is a viable alternative to critics or group rollouts for counteracting variance.

\section{Background}  \label{section:background}

\paragraph{Large Language Models.}
A large language model (LLM) defines an autoregressive distribution over token sequences that typically uses an attention-based neural architecture \citep{vaswani2017attention}. 
We write this as $p_\theta(X_{1:T}) = \prod_{t=1}^{T} p_\theta(X_t \mid X_{<t})$, where $X_{<t} = (X_1, \dots, X_{t-1})$ denotes the prefix of previously generated tokens, $X_t$ denotes the next token, and $\theta$ parameterizes the model. 
The parameters $\theta^*_{\mathrm{pre}}$ are fitted through supervised pre-training. 
Given a dataset of example sequences $\mathcal{D} = \{x_{1:T}^{(i)}\}_{i=1}^N$, this corresponds to solving $\theta^*_{\mathrm{pre}} = \arg\max_\theta \mathbb{E}_{x_{1:T} \sim \mu_{\mathcal{D}}}\ln p_\theta(X_{1:T}=x_{1:T})$. 
In practice, many additional techniques are used to stabilize and improve training, including positional encodings \citep{su2024roformer}, layer normalization \citep{ba2016layer}, gradient clipping, and tokenization.

\paragraph{Reinforcement Learning for Agentic LLMs.}
Agentic problems are partially observable Markov decision processes \citep[POMDPs, ][]{kaelbling1998planning, spaan_partially_2012}, specified by states $s \in \mathcal{S}$, observations $o \in \Omega$, actions $a \in \mathcal{A}$, a reward function $R(s_t, a_t) \eqdef R_t$, transition dynamics $p(S_{t+1} \mid S_t=s, A_t=a)$, observation kernel $p(O_t | S_t=s)$ and a discount factor $\gamma \in [0,1]$. 
In the LLM setting, observations and actions both occupy a textual medium, although the formulation naturally extends to mixed-modality settings. 
Given a pretrained LLM $p_\theta(x_{1:T})$ that maps histories to next-token predictions, we write it as a history-dependent policy $\pi_\theta(A_t \mid H_t)$, where $H_t = (O_0, A_0, \dots, O_{t-1}, A_{t-1}, O_t)$ denotes the interaction history available before taking action $A_t$.
Observations $O_t$ can be viewed as externally provided context tokens, which are masked in the output likelihood as they are not generated by the policy itself.
The policy induces the marginal,
\begin{align}  \label{eq:pomdp_marginal}
    p_{\pi_\theta}(H_{T})
    &= \int dS_{0:T}\, p(S_0)\,p(O_0 \mid S_0)\prod_{t=0}^{T-1} \pi_\theta(A_t \mid H_t)\,p(S_{t+1} \mid S_t, A_t)\,p(O_{t+1} \mid S_{t+1}) .
\end{align}
The goal of reinforcement learning is then to find parameters $\theta^*_{\mathrm{RFT}}$ that maximize expected return, initialized from pretrained weights $\theta^*_{\mathrm{pre}}$, i.e., $\theta^*_{\mathrm{RFT}} = \arg\max_\theta \mathbb{E}_{p_{\pi_\theta}(H_T, S_{0:T})}[\sum_{t=0}^{T} \gamma^t R(S_t, A_t)]$. 
We write the returns starting at any $t$ as $G_t=G(H_{t:T}) \sim \mathrm{Law}(\sum_{i=t}^T R_i \mid H_{t:T})$.
When rewards are immediately available they can also be absorbed in the observations for to enable adaptation.

\paragraph{Control-as-Inference.}
The Control-as-Inference (CAI) framework \citep{levine2018reinforcement, Kappen2012} reframes the optimization of $\theta^*_{\mathrm{RFT}}$ as a probabilistic inference problem. 
The central idea is to tilt a prior trajectory distribution $p_{\pi_\theta}$ by an energy function $\Phi : \mathcal{H} \to \mathbb{R}$ that biases towards the expected return. 
This is commonly written by introducing a binary optimality variable $\mathcal{O}_t \in \{0,1\}$ with conditional likelihood $\Phi_{1:T} \eqdef p(\mathcal{O}_{1:T} = 1 \mid H_T) \propto \exp \bbE_{S_{0:T} | H_T}G(H_T)=\prod_{t=1}^T\!\exp \bbE_{S_{t}\mid H_t}R_t$, so that high return trajectories receive larger posterior mass. 
By Bayes' rule, at any step $t$ this induces a posterior policy, which is optimality-conditioned from $t$ to $T$,
\begin{align} \label{eq:posterior_policy_cai}
    p_{\pi_\theta}(A_t \mid H_t, \mathcal{O}_{t:T}=1)
    =
    \pi_\theta(A_t \mid H_t)\,
    \frac{
        \mathbb{E}_{H_T \sim p_{\pi_\theta}(\cdot \mid H_t, A_t)}
        \!\left[
            \mathcal{Z}(H_T, A_t)^{-1}\exp \bbE_{S_{t:T}|H_T} G_t
        \right]
    }{
        \mathbb{E}_{H_T \sim p_{\pi_\theta}(\cdot \mid H_t)}
        \!\left[
            \mathcal{Z}(H_T)^{-1}\exp \bbE_{S_{t:T}|H_T}G_t
        \right]
    } ,
\end{align}
which reweights the prior policy $\pi_\theta(A_t \mid H_t)$ according to the exponentiated expected return. The partition function $\setZ$ normalizes the potential, and $G$ is a deterministic function over the trajectories.
A well known property of this posterior policy is that it is exactly the optimal policy for a KL-regularized augmentation of the base MDP (see Appendix~\ref{appendix:derivation}).
In particular, it solves the variational optimization,
\begin{align} \label{eq:posterior_policy_is_variational_optimization}
    p_{\pi_\theta}(\cdot \mid \mathcal{O}_{t:T}=1)
    = p_{q^*}(\cdot) = \argmax_{q \in \Pi} \mathbb{E}_{p_q}\!\left[\ln p(\mathcal{O}_{t:T}=1 \mid H_{t:T})\right]
        - \bbE_{p_q} D_{\mathrm{KL}}\!\left(q \,\|\, \pi_\theta \right),
\end{align}
where $\Pi = \{\pi \mid \pi : \mathcal{H} \to \Delta(\mathcal{A})\}$ denotes the set of admissible policies.
This makes the framework especially useful in deep RL, where regularized policy improvement has proven effective in practice.

\paragraph{The Cross-Entropy Method.}

\begin{wrapfigure}{r}{0.48\textwidth}
    \vspace{-0.2em}
    \begin{minipage}{0.48\textwidth}
        \begin{algorithm}[H]
            \caption{Cross-Entropy Method.}
            \label{alg:cem}
            \SetAlgoLined
            \SetAlgoVlined
            \DontPrintSemicolon
            \LinesNotNumbered
            \linespread{1.2}\selectfont
            \KwIn{Proposal $q_\psi$, score $S$, batch size $N$, elite fraction $\rho \in (0,1)$, iterations $T$.}
            \For{$t = 1, \ldots, T$}{
                Sample $\{x_i\}_{i=1}^N \sim q_{\psi_t}$\;
                Score each sample, $s_i \leftarrow S(x_i)$\;
                Set threshold $\mu_t$ to the $\lceil \rho N \rceil$-th best $s_i$\;
                Form elite set $\mathcal{E}_t \leftarrow \{x_i : s_i \geq \mu_t\}$\;
                Project $\psi_{t+1}\!\leftarrow\!\argmax_\psi \sum_{\mathcal{E}_t} \ln q_\psi(x)$\;
            }
            \textbf{Return} $q_{\psi_T}$
        \end{algorithm}
    \end{minipage}
    \vspace{-1.5em}
\end{wrapfigure}

To solve Eq.~\eqref{eq:posterior_policy_is_variational_optimization}, algorithms such as MPO \citep{abdolmaleki2018maximum} sample directly from Eq.~\eqref{eq:posterior_policy_cai} before projecting the resulting solution back onto a parametric model\footnote{This is orthogonal to the constrained optimization of the Lagrangian in MPO, and the fact that MPO iteratively updates the reference policy $\pi_\theta$ as part of an expectation-maximization loop \citep{Neal1998}.}. 
This reflects a broader \emph{bootstrap-and-project} scheme: trajectories are drawn from the proposal $p_{\pi_\theta}(H_T)$, reweighted by an unnormalized potential $\Phi(\cdot)$ encoding the optimization direction, and the resulting $\Phi$-tilted empirical distribution is projected back onto the parametric family of $\pi_\theta$ to define the next prior.
A useful parallel can be drawn here to the cross-entropy method (CEM) for optimization \citep{rubinstein_optimization_1997, rubinstein1999cem, botev2013cemopt}, see Algorithm~\ref{alg:cem}. 
CEM similarly draws a batch $\{x_i\}_{i=1}^N \sim q_{\psi_t}$, scores each sample by $S(x_i)$, and iteratively concentrates $q_{\psi_t}$ on high-scoring $x_i$ in order to approach $\argmax_x S(x)$. 
Concretely, it uses an indicator potential $\Phi(x) = \mathbf{1}_{S(x) \geq \mu_t}$, where $\mu_t$ is the $(1-\rho)$-quantile of $S$ under $q_{\psi_t}$ for $\rho \in (0,1)$. 
In practice, $\mu_t$ is estimated by the score of the $\lceil \rho N \rceil$-th best batch-sample. 
The indicator then retains the top-quantile samples, and the parametric family is refit to this elite set by cross-entropy minimization, typically in closed-form.
We adapt this for deep-RL in Section~\ref{section:ftw}.

\section{Follow the Winners: Model Derivation \& Elite Filtering} \label{section:derivation} \label{section:ftw}

We derive \textit{Follow the Winners} (FTW) from a CAI problem in which optimality is specified through ordinal comparisons rather than cardinal returns. 
The idea is to perform inference not over a single trajectory, but over a finite batch of trajectories, and to then condition this batch on an ordinal observation which specifies trajectory preference.
This yields an evidence lower bound whose optimization naturally suggests a sample-and-project method: tilt the sampling distribution toward trajectories that win local comparisons, then project the resulting distribution onto the policy class.

\paragraph{Model Derivation.} Standard CAI formulates policy optimization as inference over individual trajectories $H_T$, typically by conditioning on binary optimality variables $\setO_t \in \{0,1\}$. 
In agentic RFT, however, scalar trajectory returns can be sparse, noisy, and difficult to calibrate across tasks, while relative comparisons among sampled rollouts are often more reliable. 
We therefore replace inference over individual trajectories with inference over finite collections $\setB_N = (H_T^{1}, \dots, H_T^{N})$.
Under a base sampling policy $\pi$, a collection is distributed according to the product measure $p_\pi(\setB_N) = \prod_{n=1}^{N} p_\pi(H_T^{n})$.
The collection $\setB_N$ provides a local comparison set for defining ordinal feedback.

To encode this feedback, let $\setC(\setB_N) = (i_1,\dots,i_N)$ 
denote the ranking induced over the collection, where $H_T^{i_1} \succ \cdots \succ H_T^{i_N}$. 
The binary optimality variable of standard CAI is then replaced by an ordinal observation variable $\setO_T$, whose support depends on the likelihood model. 
For instance, under a Plackett--Luce model, $\setO_T$ denotes a full permutation $\sigma$ of the trajectories, while for a multinomial model, $\setO_T \in \{1,\dots,N\}$ denotes the identity of a preferred trajectory. 
We write $\Phi_T(\setB_N)$ for the likelihood, or unnormalized potential, assigned to the batch $\setB_N$ under the ordinal observation. 

For simplicity, we focus on the sparse final-reward setting and condition only on $\setO_T$, avoiding partial trajectory feedback through $\setO_t$ for $t \neq T$. 
We further assume that the ranking rule $\mathcal{C}$ is consistent across subsets, satisfying independence of irrelevant alternatives (IIA), which implies a total ordering over the trajectory domain \citep{mcfadden1974conditional, luce1959}. 
This assumption is important for our algorithmic construction: IIA ensures that the per-minibatch rankings used by FTW's elite-filtering step are consistent with an underlying total ordering of conditional returns.

Under our ordinal CAI construction, the marginal is lower-bounded as (compare with Eq.~\eqref{eq:posterior_policy_is_variational_optimization}):
\begin{align}
    \ln p_{\pi_\theta}(\setO_{T})
    = \ln \bbE_{p_{\pi_\theta}(\setB_N)} \Phi_{T}(\setB_N)
    \geq
    \mathbb{E}_{p_q}
    \left[
        \ln \Phi_{T}(\setB_N)
        -
        N \cdot D_{\mathrm{KL}}\!\left(q \,\|\, \pi_\theta\right)
    \right]
    = \setL_\Phi(q, \pi_\theta).
    \label{eq:ell_lowerbound_updated}
\end{align}
The ELBO $\setL_\Phi$ has a direct (regularized) policy-improvement interpretation. 
The potential $\Phi_T$ tilts the proposal $q$ toward the preferred in-batch items, while the KL term constrains the update to the reference $\pi_\theta$. 
Optimizing the bound therefore alternates between two simple operations: filtering better trajectories and projecting the tilted distribution onto the policy class.
This structure is exactly what underpins the cross-entropy method (CEM) for optimization, as shown in Algorithm~\ref{alg:cem}.

\begin{figure}[t]
    \centering
    \begin{minipage}[t]{0.55\textwidth}
        \vspace{0pt}
        \begin{algorithm}[H]
            \caption{Follow the Winners (FTW).}
            \label{alg:ftw}
            \SetAlgoLined
            \SetAlgoVlined
            \DontPrintSemicolon
            \LinesNotNumbered
            \linespread{1.15}\selectfont
            \KwIn{Policies $\pi_\theta, \pi_{\mathrm{ref}}$, buffer size $M$, batch size $C$, winners $K$, batches $J$, KL weight $\lambda$.}
            Initialize FIFO queue $\mathcal{B}_M \leftarrow \emptyset$ s.t. $|\mathcal{B}_M| \le M$\;
            \While{not converged}{
                Collect (async) $H_{0:T} \sim \pi_{\theta_t}$ and append to $\mathcal{B}_M$\;
                \For{$j = 1, \ldots, J$}{
                    Sample $\mathcal{B}_C^{(j)}=\{H_{0:T}^{(i)}\}_{i=1}^C \sim \mathrm{Unif}(\mathcal{B}_M)$\;
                    Score each $H^{(i)}_{0:T} \in \mathcal{B}_C^{(j)}$ by $G(H^{(i)}_{0:T})$\;
                    Elite-set $\mathcal{E}^{(j)} \leftarrow$ top-$K$ trajectories under $G$\;
                }
                $\nabla_\theta \mathcal{L} \leftarrow \mathbb{E}_{\bigcup_j \mathcal{E}^{(j)}}\!\left[\nabla_\theta \ln \pi_\theta\right] - \lambda \nabla_\theta D_{\mathrm{KL}}$ \hspace{\fill} Eq.\eqref{eq:ell_lowerbound_updated}\;
                Project $\theta_{t+1} \leftarrow \theta_t + \alpha \nabla_\theta \mathcal{L}$\;
            }
            \textbf{Return} $\pi_\theta$
        \end{algorithm}
    \end{minipage}\hfill
    \begin{minipage}[t]{0.42\textwidth}
        \vspace{0pt}
        \includegraphics[width=.95\linewidth]{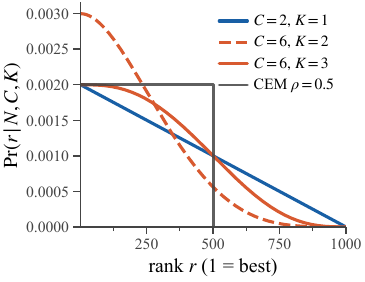}
        \captionof{figure}{
            Normalized marginal occupancy of rank $r$ under FTW's elite-filtering, Eq.~(\ref{eq:probability_of_tourney_win}), for $N=1000$ and varying $C$, $K$. 
            Larger $C$ sharpens concentration on top-ranked trajectories; larger $K$ softens it.
        }
        \label{fig:elite_marginal_main}
    \end{minipage}
\end{figure}

\paragraph{Algorithm Construction.}
FTW instantiates an adaptation of CEM for agentic RFT.
As motivated in Section~\ref{section:intro}, we highlight three desiderata on the choice of proposal and ordinal potential relevant for the RFT setting.
First, updates should be conservative: trajectory returns are high variance, and poor intermediate updates should not destabilize subsequent learning by collapsing onto noisy locally-strong trajectories.
Second, variance reduction should come from the samples alone, for example through batch-level baselines \citep{hu2025reinforcestabilizingcriticfreepolicy}, antithetic sampling, or rank transforms of returns, to avoid both a critic model and the environment duplication required for group rollouts.
Third, the algorithm should reuse past trajectories to amortize data-collection costs and benefit from regularization and stability from replay \citep{mnih_humanlevel_2015, fedus2020replayfundamentals, arnal2026efficientrltrainingllms}.

We show our FTW pseudocode in Algorithm~\ref{alg:ftw} as our adaptation of CEM (Alg.~\ref{alg:cem}).
The proposal distribution $q$ is the mixture of the $h$ past policies, approximated in practice by the empirical occupancy of a FIFO replay buffer, $\hat{q} = \tfrac{1}{\min(h,\, t+1)} \sum_{i=\max(0,\, t-h+1)}^{t} \pi_{\theta_i}$, where $h$ is the maximum age of samples retained in the buffer \citep{cornuet2012amis}.
Then, the closed-form projection step of classical CEM is replaced by a gradient-based projection onto the policy class.
The main departure from standard CEM is how FTW constructs the elite set (filtering).
Rather than sorting the entire replay buffer and retaining a global top $(1{-}\rho)$-quantile, FTW instead samples $J$ independent minibatches of size $C$ from the buffer and retains the top $K = \lceil \rho C \rceil$ per batch.
The resulting union of $J \cdot K$ elites are then used in a cross-entropy gradient update.
Thus, trajectories that are repeatedly preferred over other trajectories obtain larger weight in consecutive proposal distributions.

\paragraph{Filtering Probability.}
Our elite-filtering rule defines a combinatorial tilting distribution $\Phi$ over the replay buffer without explicit evaluation (which would require sorting the buffer).
For a $N$-size buffer $\setB_N$, let $r$ denote the rank of an item under the buffer's theoretical ordering (lower is better).
Then, the probability that an item of rank $r$ is selected as one of the $K$ elites in a minibatch of size $C$,
\begin{align}
    \mathrm{Pr}(r \mid N, C, K)
    =
    \frac{1}{K\binom{N}{C}}
    {\textstyle \sum_{k=1}^{K}}
    \binom{r-1}{k-1}
    \binom{N-r}{C-k}.
    \label{eq:probability_of_tourney_win}
\end{align}
The derivation is given in Appendix~\ref{appendix:proofs:elite_filtering}, Lemma~\ref{lemma:elite_filtering_marginal}.
The induced potential $\Phi$, as shown in Figure~\ref{fig:elite_marginal_main}, is a degree-$(C-1)$ polynomial in $r$ and is monotonically decreasing for all $K \geq 1$ with $K<C$ \citep{goldberg1991comparative, david2003order}.
Increasing $K$ spreads probability mass from the highest-ranked trajectories toward lower-ranked ones, softening selection pressure.
The minibatch size $C$ acts as a concentration parameter: $C=K$ recovers uniform sampling over the buffer, while increasing $C$ for fixed $K$ sharpens concentration on high-ranked trajectories.
Together, the ratio $\rho = K/C$ controls the elite fraction and $C$ controls how sharply that fraction is enforced.
The limiting case $C=N$ with $J=1$ recovers standard CEM filtering exactly, but amortizes data-collection poorly since the $N-K$ non-elite trajectories in the buffer are discarded each update.
For intermediate $K < C < N$, the induced selection profile becomes smoother and more sigmoid-like, as illustrated for $C=6$ and $K=3$ in Figure~\ref{fig:elite_marginal_main}.
The special case $K=1$, $C=2$ reduces to binary tournament selection, connecting FTW to classical selection mechanisms in genetic algorithms \citep{goldberg1989genetic, goldberg1991comparative}.
Thus, our polynomial profile meets two of the aforementioned desiderata, conservative updates and sample-only variance reduction, purely through the $(C, K)$ filtering parameters.

\section{Theoretical Analysis} \label{section:recovering_algorithms}

This section diagnoses the characteristics of our algorithm, specifically its risk profile relative to baselines.
We first recover GRPO and DPO within our extension of the CAI framework (Section~\ref{section:derivation}) as the multinomial and Plackett-Luce potentials over sampled batches.
We then use this recovery to derive the implicit risk profile of each method through the certainty equivalent of its selection rule.
We show that GRPO is risk-neutral and that its otherwise (non-Jensen bounded) exponential multinomial potential is unboundedly risk-seeking.
Then we show that DPO and FTW are boundedly risk-seeking toward a selection threshold that varies by the selection ratio $K/C$.
We assume reader familiarity with GRPO and DPO and provide a self-contained primer in Appendix~\ref{appendix:primer}.

\subsection{Recovering GRPO and DPO} \label{section:recover_grpo_dpo}

We treat the two potentials mentioned in Section~\ref{section:derivation} in turn.
The Plackett--Luce potential at $|\setB_N|=2$ recovers DPO directly, while the multinomial potential admits a Jensen upper bound whose looser surrogate is exactly GRPO's group-mean advantage.
We close the subsection by noting that, within our framework, the remaining difference between the two methods is the choice of baseline normalizer.

We first recover DPO through our formalism simply by plugging in the Plackett--Luce model for $N=2$, under minor technical assumptions.
Note that the authors of DPO \citep{rafailov2023direct} also sketch a Plackett--Luce generalization in their Appendix~A.5, our result is complementary.

\begin{corollary}[DPO from a $N{=}2$ PL]\label{cor:dpo}
    At $|\setB_N|=2$ with ordering $H_T^{(1)} \succ H_T^{(2)}$, the Plackett--Luce potential with weighting $g(H_T) = \exp(s_\theta(H_T) / \beta)$ and log-ratio score $s_\theta^{(i)} \eqdef \log \tfrac{\pi_\theta(H_T^{(i)})}{\pi_\text{ref}(H_T^{(i)})}$ reduces to $p(\setO_T = (1,2) \mid \setB_2) = \sigma(\tfrac{1}{\beta}(s_\theta^{(1)} - s_\theta^{(2)}))$, the Bradley--Terry sigmoid.
    The ELBO in Eq.~\eqref{eq:ell_lowerbound_updated} then recovers DPO's objective, with the prior $\pi_\theta$ in the KL term playing the role of DPO's reference $\pi_\text{ref}$ and temperature $\beta$ matching DPO's up to $N{=}2$ rescaling of our ELBO's KL (derived in App.~\ref{appendix:derivation:recovery}).
\end{corollary}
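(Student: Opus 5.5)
The proof has two parts. The first is an algebraic reduction of the Plackett--Luce likelihood at $N=2$. The second matches the resulting ELBO to the DPO loss.

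\emph{Reduction of the potential.} The Plackett--Luce model with positive weights $g$ assigns to a permutation $\sigma=(i_1,\dots,i_N)$ the probability $\prod_{k=1}^{N} g(H_T^{(i_k)})/\sum_{j\ge k} g(H_T^{(i_j)})$. At $N=2$ with $\sigma=(1,2)$ the second factor equals one, so
\begin{align*}
p(\setO_T=(1,2)\mid\setB_2)=\frac{g^{(1)}}{g^{(1)}+g^{(2)}}=\frac{1}{1+\exp\bigl(-(s_\theta^{(1)}-s_\theta^{(2)})/\beta\bigr)}=\sigma\bigl(\tfrac{1}{\beta}(s_\theta^{(1)}-s_\theta^{(2)})\bigr).
\end{align*}
This is the Bradley--Terry sigmoid, and it proves the first claim. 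I will also note that the trajectory log-ratio reduces to a sum of per-token log-ratios. The reason is that the dynamics and observation factors in Eq.~\eqref{eq:pomdp_marginal} do not depend on the policy and cancel. This cancellation is what makes $s_\theta$ computable.

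\emph{Matching the ELBO.} I substitute this potential into $\setL_\Phi(q,\pi_\theta)$ of Eq.~\eqref{eq:ell_lowerbound_updated}. The first term becomes $\bbE_{p_q}\ln\sigma(\tfrac1\beta(s^{(1)}-s^{(2)}))$. In DPO the pair is drawn from a fixed preference dataset and the policy being trained appears only through the log-ratio. So the key identification is to treat the variational policy $q$ as the trained policy inside the score, with the reference $\pi_\mathrm{ref}$ in $s$ identified with the ELBO prior $\pi_\theta$. I also replace the expectation over $p_q(\setB_2)$ by the empirical distribution of ordered preference pairs. Under these substitutions the first term of $\setL_\Phi$ is exactly $-\mathcal{L}_{\mathrm{DPO}}$ with inverse temperature $1/\beta$. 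It then remains to account for the term $-2D_{\mathrm{KL}}(q\|\pi_\theta)$. DPO itself comes from the KL-regularized objective $\bbE[r]-\beta D_{\mathrm{KL}}$, whose optimizer satisfies $r=\beta\log(\pi^*/\pi_\mathrm{ref})+\text{const}$. I will show that our ELBO, divided by the batch size $N=2$, has the same stationarity structure with the KL weight rescaled by a factor of $N$. This gives DPO's temperature up to that $N=2$ rescaling, as the statement asserts.

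\emph{Main obstacle.} The delicate step is the second one. The ELBO has $\theta$ in two roles, in the score $s_\theta$ and in the prior, and the KL term does not appear in the DPO loss. I plan to resolve this as DPO does. I will first argue that maximizing the regularized bound over $q$ gives the closed-form tilted optimum of Eq.~\eqref{eq:posterior_policy_is_variational_optimization}. I will then use that optimum to reparameterize the implicit reward as the scaled log-ratio, so that the KL term is absorbed into the parameterization and the residual objective is the Bradley--Terry log-likelihood alone. The constant and partition-function terms cancel in the difference $s^{(1)}-s^{(2)}$. What is left is to track the factor $N$ multiplying the KL in order to state the temperature correspondence precisely. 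I will defer this bookkeeping to the appendix derivation.
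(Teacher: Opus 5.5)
Your first part is the paper's argument. At $N=2$ the second Plackett--Luce factor equals $1$, the likelihood is $g^{(1)}/(g^{(1)}+g^{(2)})$, and the assumed weighting turns it into $\sigma\bigl(\tfrac1\beta(s_\theta^{(1)}-s_\theta^{(2)})\bigr)$. Your remark that the dynamics and observation factors of Eq.~\eqref{eq:pomdp_marginal} cancel in the log-ratio is a correct detail the paper leaves implicit. Your identification of the variational $q$ with the trained policy inside the score, and of the ELBO prior $\pi_\theta$ with $\pi_{\mathrm{ref}}$, is also what the paper intends through the EM step $\theta\leftarrow\phi$ of Appendix~\ref{app:cai_preliminaries}.

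The step that fails is your plan to eliminate $-2D_{\mathrm{KL}}(q\,\|\,\pi_\theta)$ by the DPO trick.

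\emph{The closed-form tilt is unavailable.} The Gibbs tilt of Eq.~\eqref{eq:posterior_policy_is_variational_optimization} maximizes $\bbE_{p_q}[\ln\Phi]-D_{\mathrm{KL}}$ only when $\Phi$ does not depend on $q$ and the first term is an expectation under $q$. Your identifications break both conditions. $\Phi$ now contains $q$ through the score. Once you replace $p_q(\setB_2)$ by the empirical pair distribution, the first term is not an expectation under $q$ at all.

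\emph{Even a fixed potential does not give DPO's reparameterization.} Hold $\Phi$ fixed, with a latent score $r$ independent of $q$. Maximizing over the product family $q\otimes q$ then gives only the mean-field fixed point
\[
q(H)\;\propto\;\pi_\theta(H)\exp\Bigl(\tfrac12\,\bbE_{H'\sim q}\bigl[\ln\Phi(H,H')+\ln\Phi(H',H)\bigr]\Bigr).
\]
Inverting this expresses the averaged log-sigmoid as a log-ratio, not the score inside $\sigma$. So you cannot conclude $r=\beta_{\mathrm{DPO}}\log(q^*/\pi_\theta)+\mathrm{const}$ and substitute back. DPO's inversion works because its reward enters $\bbE[r]-\beta D_{\mathrm{KL}}$ linearly, and the ELBO, whose potential enters through $\ln\sigma(r^{(1)}-r^{(2)})$, does not reproduce that.

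\emph{The step is also unnecessary.} The log-ratio weighting is a hypothesis of the corollary, not something to be derived. The paper does not make the KL term vanish. It absorbs the factor $N=2$ into the KL coefficient $\lambda$ and reads the remaining regularizer as the counterpart of DPO's anchoring to $\pi_{\mathrm{ref}}$, with temperature rescaled by $N$. The recovery is therefore a structural identification, and you should state your conclusion at that level rather than as the residual objective being the Bradley--Terry log-likelihood alone.
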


The multinomial potential at general $N$ requires more work.
The log-partition $\ln \sum_k g_k$ in the ELBO couples all trajectories in the batch, so its policy gradient is dense.
This can be seen by plugging the multinomial potential into Eq.~\eqref{eq:ell_lowerbound_updated} and averaging across all indices $i \in \{1, \ldots, N\}$ of $\setB_N$,
\begin{align} \label{eq:batch_policy_gradient}
    \nabla_q \mathcal{L}_\Phi(q,\pi_\theta)
    \propto \mathbb{E}_{p_q}\Biggl[ \frac{1}{N} \sum_{n=1}^{N} \underbrace{\left(\ln g(n) - b_n\right)}_{\eqdef \alpha_n(\setB_N)} \sum_{m=1}^N \nabla_q \ln p_q(H_T^m) - \nabla_q D_{\mathrm{KL}}\!\left(q \,\|\, \pi_\theta\right) \Biggr],
\end{align}
which produces the double-summation over coefficients $\alpha_n$ and policy gradients.
We sidestep this double-sum through a Rao--Blackwell argument that decouples the gradient into per-trajectory coefficients $\alpha_n$, upon which a Jensen upper bound on the log-partition reveals GRPO's baseline.

\begin{proposition}[GRPO as a Jensen upper bound]\label{prop:grpo}
    Under the multinomial potential with exponential weight $g(H_T) = \exp G(H_T)$, Rao--Blackwellization of Eq.~\eqref{eq:batch_policy_gradient} yields the per-trajectory form
    \begin{align} \label{eq:group_policy_gradient}
        \nabla_q \widehat{\mathcal{L}_\Phi}(q,\pi_\theta)
        \propto
        \mathbb{E}_{p_{q}}\!\left[
            \frac{1}{N} \sum_{n=1}^{N} \alpha_n(\setB_N) \sum_{t=1}^T \nabla \ln q(A_t^n \mid H_t^n)
        \right]
        - \nabla_q D_{\mathrm{KL}}\!\left(q \,\|\, \pi_\theta\right),
    \end{align}
    with coefficients $\alpha_n(\setB_N) = \ln g_n - \ln \sum_k g_k$ admitting the Jensen upper bound $\alpha_n \le G(H_T^n) - \tfrac{1}{N}\sum_k G(H_T^k)$, whose right-hand side is GRPO's group-mean advantage.
\end{proposition}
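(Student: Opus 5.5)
We would start from the batch ELBO $\setL_\Phi(q,\pi_\theta)$ in Eq.~\eqref{eq:ell_lowerbound_updated} with the multinomial potential $\Phi_T(\setB_N) = g_n/\sum_k g_k$ for observed winner $\setO_T = n$. Averaging over which index is observed gives the data term $\frac{1}{N}\sum_n \ln \tfrac{g_n}{\sum_k g_k}$.

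\textbf{Step 1 (score-function gradient).} Since $p_q(\setB_N)=\prod_m p_q(H_T^m)$, the score-function identity gives $\nabla_q \bbE_{p_q}[f(\setB_N)] = \bbE_{p_q}[f(\setB_N)\sum_m \nabla_q \ln p_q(H_T^m)]$. Applied with $f=\frac1N\sum_n \alpha_n$, this is exactly the double sum in Eq.~\eqref{eq:batch_policy_gradient}, where $b_n = \ln\sum_k g_k$.

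\textbf{Step 2 (Rao--Blackwellization).} We decompose $\sum_n \alpha_n \sum_m \nabla \ln p_q(H^m)$ into diagonal terms $m=n$ and off-diagonal terms $m\neq n$. For an off-diagonal term we condition on all trajectories except $H^m$. The coefficient $\alpha_n$ depends on $H^m$ only through the shared normalizer $\ln\sum_k g_k$. The bulk of the off-diagonal term, the part that does not depend on $H^m$, vanishes because $\bbE[\nabla \ln p_q(H^m)]=0$. To make this clean, we would treat the normalizer as a baseline that is held fixed (stop-gradient) when computing cross-sample credit. The justification is that replacing $b_n$ by a quantity not depending on $H^m$ leaves the expectation of the $m$-th score term unchanged. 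What remains is the per-trajectory form $\frac1N\sum_n \alpha_n \nabla \ln p_q(H^n)$.

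\textbf{Step 3 (factorization).} We factor $\ln p_q(H^n)$ through Eq.~\eqref{eq:pomdp_marginal}. The dynamics and observation kernels do not depend on $q$, so $\nabla \ln p_q(H^n) = \sum_t \nabla \ln q(A_t^n\mid H_t^n)$, which gives Eq.~\eqref{eq:group_policy_gradient}.

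\textbf{Step 4 (Jensen bound).} With $g=\exp G$, the coefficient is $\alpha_n = G(H^n) - \ln\sum_k e^{G(H^k)}$. By concavity of $\ln$, $\ln\frac1N\sum_k e^{G_k} \ge \frac1N\sum_k G_k$, so $\ln\sum_k e^{G_k} \ge \ln N + \frac1N\sum_k G_k$. Therefore $\alpha_n \le G_n - \frac1N\sum_k G_k - \ln N \le G_n - \frac1N\sum_k G_k$. The constant $\ln N$ is independent of the policy, and it drops out of the gradient in expectation by the zero-mean score. The right-hand side is GRPO's group-mean advantage (without std normalization).

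\textbf{Main obstacle.} The main difficulty is Step 2. Strictly, the off-diagonal terms are not zero, because $\ln\sum_k g_k$ depends on $H^m$. So "Rao--Blackwellization" here must really be a conditional-expectation argument that yields an estimator with the same expectation only when the normalizer is treated as a baseline. We would first try to show that the cross terms sum to the gradient of the log-partition contribution. Failing that, we would state the result as a surrogate with a leave-one-out or stop-gradient baseline, which is exactly the proportionality ``$\propto$'' claimed in the statement.
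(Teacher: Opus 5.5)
Your route is the paper's: a score-function gradient, a reduction of the double sum to per-trajectory coefficients by conditioning on one trajectory, and then the Jensen chain $\alpha_n \le G_n - \tfrac1N\sum_k G_k - \ln N \le G_n - \tfrac1N\sum_k G_k$. That Jensen chain is exactly the paper's Eq.~\eqref{eq:grpo_upperbounds_cai}, so Step~4 is fine, and so are your Steps~1 and~3.

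The step that fails as written is the justification in Step~2. Replacing $b_n=\ln\sum_k g_k$ by an $H_T^m$-independent quantity does \emph{not} leave the expectation of the $m$-th score term unchanged. Your first fallback, showing that the cross terms fold away, also cannot succeed. Only the $\ln g_n$ part of the off-diagonal terms vanishes. The normalizer part sums to
\[
\frac1N\sum_{n}\sum_{m\neq n}\mathbb{E}_{p_q}\bigl[\alpha_n\,\nabla_q\ln p_q(H_T^m)\bigr] \;=\; -\frac{N-1}{N}\,\nabla_q\,\mathbb{E}_{p_q}\Bigl[\ln\sum_k g_k\Bigr],
\]
which is generally nonzero, because $\mathbb{E}[\ln\sum_k g_k\mid H_T^m]$ depends on $g(H_T^m)$. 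Consequently, the exact per-trajectory rewriting of Eq.~\eqref{eq:batch_policy_gradient} has coefficient $\ln g_n - N\ln\sum_k g_k$, not $\alpha_n$.

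So Eq.~\eqref{eq:group_policy_gradient} holds only as a surrogate in which the shared normalizer is a stop-gradient baseline. That is your second fallback, and it is consistent with the hat on $\widehat{\mathcal{L}_\Phi}$. Commit to it and drop the incorrect unbiasedness sentence.

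This is also all the paper's own proof establishes. It defines $\beta_n(H_T^n)=\mathbb{E}[\alpha_n(\setB_N)\mid H_T^n]$ and observes, via i.i.d.\ batch completions and the law of large numbers, that $\alpha_n$ is an unbiased estimate of $\beta_n$. It never accounts for the cross terms. Your diagonal/off-diagonal bookkeeping is therefore more explicit than the paper's about exactly where the approximation enters.
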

\begin{proof}[Proof sketch (full proof in Appendix~\ref{appendix:proofs:rao_blackwell}).]
    \emph{Rao--Blackwell.} 
    The multinomial potential applies symmetrically to any item in $\setB_N$, so we can hold one trajectory fixed and marginalize over the rest. 
    This yields per-trajectory coefficients $\alpha_n(\setB_N) = \ln g_n - \ln \sum_k g_k$, recovering an advantage form with log-partition baseline.
    \emph{Jensen bound.} 
    By concavity of $\ln$, $\alpha_n \le \ln g_n - \tfrac{1}{N}\sum_k \ln g_k$. 
    Where the right-hand side reduces to $G(H_T^n) - \tfrac{1}{N}\sum_k G(H_T^k)$, which is GRPO's group-mean advantage.
\end{proof}

The Jensen bound comes at a cost: maximizing an upper bound does not guarantee maximization of the underlying objective.
The gap is precisely the risk offset that the multinomial potential induces under uncertain returns, which we profile in the next subsection.
More broadly, when viewed from this framework, the key difference between the multinomial and Plackett--Luce potentials, from which GRPO and DPO are positioned, is the choice of baseline normalizer.
The multinomial potential applies a uniform group-mean across all items, while the Plackett--Luce model induces rank-dependent baselines that dampen updates at the extremes of the batch.

\subsection{Implicit Risk Profiles of RFT Algorithms} \label{section:risk_profiles}

The CAI posterior in Eq.~\eqref{eq:posterior_policy_cai} marginalizes the return $G$ over the return law.
In deterministic, perfectly observable settings this collapses to $\mathbb{E}_{S_{t:T} \mid H_T} \exp G(H_{t:T}) = \exp G(H_{t:T})$.
Under aleatoric noise or epistemic uncertainty from partial observability, the marginalization on returns induces an implicit risk profile in the algorithm.
In critic-based methods such as SAC, we learn a $Q$-function surrogate to move the expectation inside $\exp \mathbb{E}_{S_{t:T} \mid H_T}G_{t:T}$, in critic-free RFT we have no such fallback.

To make this risk-profile precise, we reduce each potential $\Phi$ to a per-entry selection weight $w_\Phi(H_T) = \mathbb{E}_{G \mid H_T}[\varphi(G)]$ with an increasing utility $\varphi$ through Rao-Blackwellization.
We then define the risk offset $\mathcal{R}_\varphi(H_T) = J_\varphi(H_T) - \mu(H_T)$, where $J_\varphi = \varphi^{-1}(w_\Phi)$ is the certainty equivalent \citep{pratt1964risk} of the entry's return (see the full setup in Appendix~\ref{appendix:proofs:risk}).
The offset illustrates the deviation from the conditional mean return $\mu=\mathbb{E}[G | H_T]$ as a function of its spread $\sigma^2=\mathrm{Var}[G | H_T]$.

\begin{proposition}[Risk profiles of RFT potentials under Gaussian noise]\label{prop:risk}
    For Gaussian returns $G \mid H_T \sim \mathcal{N}(\mu, \sigma^2)$ and an assumed Gaussian buffer marginal $F_0 = \mathcal{N}(\mu_0, \sigma_0^2)$, the risk offset $\mathcal{R}_\varphi$ of each potential is as follows (proof and other return families in Appendix~\ref{appendix:proofs:risk}):
    \begin{enumerate}
        \item[(a)] \emph{Multinomial potential \& GRPO (CAI on cardinal returns):}
        $\mathcal{R} = \tfrac12 \sigma^2$ exactly.
        The Jensen step that positions GRPO (Prop.~\ref{prop:grpo}) replaces $\ln \sum_k \exp G_k$ with $\tfrac{1}{N}\sum_k G_k$, which is an affine utility, so $\mathcal{R} = 0$ for every return law. GRPO is risk-neutral.
        \item[(b)] \emph{Bradley--Terry potential (DPO or Plackett--Luce at $N=2$):}
        $\mathcal{R} = (\mu - \mu_0)\big(\tfrac{\sigma_0}{\sqrt{\sigma_0^2 + \sigma^2}} - 1\big)$ exactly. 
        The offset is bounded by $|\mu - \mu_0|$ and $J_\varphi \rightarrow \mu_0$ as $\sigma \to \infty$.
        \item[(c)] \emph{Rank-based potential (FTW, top-$K$ of $C$):}
        $\mathcal{R} \approx (\mu - t)\big(\tfrac{\sigma_0 s_{C,K}}{\sqrt{\sigma^2 + \sigma_0^2 s_{C,K}^2}} - 1\big)$, using a normal approximation of the $K$-th order statistic.
        The same as (b) with $\mu_0$ replaced by the threshold $t = \mu_0 + \sigma_0 a_{C,K}$, where $a_{C,K}$ and $s_{C,K}$ are the standardized mean and standard deviation of that order statistic.
        Aggressive filters $K \ll C$ have $a_{C,K} > 0$ and so $t > \mu_0$, mild filters $K \ge C/2$ have $a_{C,K} \le 0$ and $t \le \mu_0$, finally $(C, K) = (2, 1)$ recovers (b) exactly.
    \end{enumerate}
\end{proposition}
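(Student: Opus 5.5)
The plan is to reduce each potential to its per-entry selection weight $w_\Phi(H_T)=\mathbb{E}_{G\mid H_T}[\varphi(G)]$, invert the utility to get the certainty equivalent $J_\varphi=\varphi^{-1}(w_\Phi)$, and subtract $\mu$. The work lies in identifying $\varphi$ for each potential after Rao--Blackwellizing over the other batch members, which are drawn from the buffer marginal $F_0$.

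\emph{Case (a).} By Prop.~\ref{prop:grpo}, the per-entry coefficient of the multinomial potential is $\ln g_n-\ln\sum_k g_k$. Holding the competitors fixed, the entry-dependent part of the weight is $\mathbb{E}[\exp G]$, so $\varphi=\exp$. The Gaussian moment generating function gives $\mathbb{E}[e^G]=e^{\mu+\sigma^2/2}$, hence $J=\mu+\tfrac12\sigma^2$ and $\mathcal{R}=\tfrac12\sigma^2$. For GRPO, the Jensen surrogate $G_n-\tfrac1N\sum_k G_k$ is affine in $G_n$, so $\varphi$ is affine. Then $\varphi^{-1}(\mathbb{E}\varphi(G))=\mu$ for every law with a finite mean, and $\mathcal{R}=0$.

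\emph{Case (b).} I would treat the Bradley--Terry comparison as a probit-type win event against a single competitor $G_0\sim F_0$, independent of the entry. The weight is then
\begin{align*}
w=\Pr(G>G_0)=\Phi_{\mathcal{N}}\!\Big(\tfrac{\mu-\mu_0}{\sqrt{\sigma^2+\sigma_0^2}}\Big).
\end{align*}
For a noiseless entry with return $x$, the same computation gives the utility $\varphi(x)=F_0(x)=\Phi_{\mathcal{N}}\big((x-\mu_0)/\sigma_0\big)$. Solving $\varphi(J)=w$ yields
\begin{align*}
J=\mu_0+(\mu-\mu_0)\,\sigma_0/\sqrt{\sigma_0^2+\sigma^2},
\end{align*}
and subtracting $\mu$ gives the stated $\mathcal{R}$. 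The bracket lies in $(-1,0]$, which gives $|\mathcal{R}|\le|\mu-\mu_0|$, and $J\to\mu_0$ as $\sigma\to\infty$. I would state explicitly that the logistic link is replaced by the probit here, or equivalently that we use the uniform-against-$F_0$ interpretation. This is what makes the formula exact.

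\emph{Case (c).} An entry is selected among the top $K$ of $C$ iff its return exceeds the $K$-th largest of the other $C-1$ draws from $F_0$. The plan is to:
\begin{itemize}
\item write this threshold order statistic as $T=\mu_0+\sigma_0 Z_{(K)}$;
\item approximate $Z_{(K)}$ by $\mathcal{N}(a_{C,K},s_{C,K}^2)$, using the standardized mean and standard deviation of that normal order statistic;
\item repeat the computation of (b) with $(\mu_0,\sigma_0)$ replaced by $(t,\sigma_0 s_{C,K})$.
\end{itemize}
This gives $w\approx\Phi_{\mathcal{N}}\big((\mu-t)/\sqrt{\sigma^2+\sigma_0^2s_{C,K}^2}\big)$. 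With $\varphi$ obtained as the $\sigma=0$ version of $w$, the certainty equivalent and hence $\mathcal{R}$ take the stated form.

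The sign claims follow from the symmetry of normal order statistics. The expected $K$-th largest of $C-1$ draws is positive when $K$ is well below the median rank and nonpositive once $K\ge C/2$, since for these $K$ it is at or below the median of the $C-1$ draws. For $(C,K)=(2,1)$ the threshold is a single draw, so $a=0$ and $s=1$ exactly, and (b) is recovered exactly.

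The main obstacle is justifying the normal approximation of the order statistic. The true distribution of $Z_{(K)}$ is a Beta transform of the normal, which is skewed at the extreme ranks. I would present (c) only as an approximation, note that it is exact at $C-1=1$, and assert the sign of $a_{C,K}$ directly from the exact order-statistic means rather than from the approximation.
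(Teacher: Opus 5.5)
Your proposal is correct and follows essentially the same route as the paper. Both reduce each potential to a per-entry utility and then invert it and subtract $\mu$: $\varphi=\exp$ with the Gaussian MGF for the multinomial potential, an affine $\varphi$ for GRPO, the single-draw win rate $\varphi=F_0$ for DPO, and $\varphi=F_T$ with a two-moment Gaussian approximation of $T=\mu_0+\sigma_0 Z_{(K)}$ for FTW.

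The differences are minor. The paper gets exactness in (b) by defining DPO's preferences as hard comparisons of sampled returns, which is your ``uniform-against-$F_0$'' reading, so no logistic-to-probit swap is needed. For the sign claims on $a_{C,K}$, the paper relies on numerically computed order-statistic means. Your reflection argument, $Z_{(K)}\overset{d}{=}-Z_{(C-K)}$ combined with monotonicity in $K$, justifies them in general.
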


For the ordinal potentials (b)~and~(c), the offset is paid in how often noise swaps preference orderings within the batch rather than in the magnitude of the returns, which is also what makes it bounded.
This can be desirable when $\sigma$ is epistemic, since for returns with $\mu$ below the attractor ($\mu_0$ or $t$) the offset is positive and acts as an exploration term to encourage knowledge-seeking. %
However, when $\sigma$ is aleatoric, the same offset biases the fixed point away from the mean-optimal policy, treating reward noise as if it were information \citep{vries2026varibased}.
GRPO's risk-neutrality avoids both effects at the cost of neutrality.
In FTW, the elite ratio $K/C$ determines the attractor of FTW.
Mild filters $K \ge C/2$ place it at or below the buffer mean $\mu_0$, it equals DPO at $(C,K)=(2,1)$, and aggressive filters $K \ll C$ place it above $\mu_0$.
For both FTW and DPO, the offset is bounded by $|\mu - t|$.

A separate consequence is the cardinal treatment of returns, which can be sensitive to the tail of the return distribution.
The offset in (a) requires the moment-generating function of the return to be finite, which fails for heavy-tailed returns (Pareto-$\alpha \le 1$, Student-$t$ with $\nu \le 2$).
Although the Jensen bound that positions GRPO removes the offset itself, the same heavy tails destabilize GRPO's cardinal-return baseline numerically.
Rank-based potentials don't suffer from this as they only depend on within-batch order statistics, which remain bounded for any continuous distribution.

\section{Validation on Bandit \& Deep Reinforcement Learning}  \label{section:experiments_mdp}

We test the predictions of Section~\ref{section:recovering_algorithms} on a controlled testbed before scaling up to LLM post-training.
The predictions are that GRPO is risk-neutral but numerically unstable under heavy-tailed returns and that DPO and FTW are boundedly risk-seeking toward an elite threshold set by the selection ratio $K/C$, coinciding at $(C, K) = (2, 1)$.
Furthermore, our FTW method is a general method that can be applied in the deep-RL setting, however its ordinal and conservative nature make it both slower and more stable in terms of the learning curves (compared to a baseline SAC).

\paragraph{Bandit settings.}

\begin{figure}[h]
    \centering
    \includegraphics[width=\linewidth]{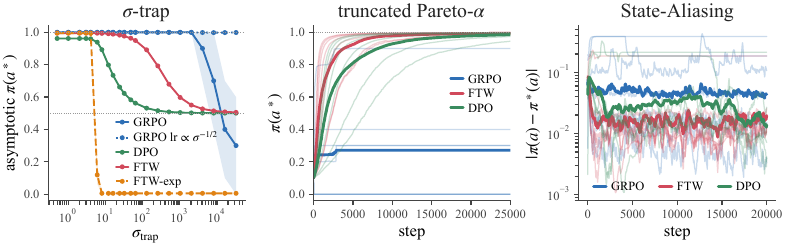}
    \caption{
        Diagnostic bandits isolating algorithm properties (Sec.~\ref{section:recovering_algorithms}). 
        \textbf{Left ($\sigma$-trap):} asymptotic $\pi(a^*)$ vs.\ distractor variance.
        GRPO is only stable for large $\sigma$ when proportionally decreasing the learning rate.
        \textbf{Center (Pareto-$\alpha$):} under heavy tail returns, GRPO learning destabilizes whereas rank-based FTW/DPO remain stable.
        \textbf{Right (state-aliasing):} FTW achieves lower L1 distance $|\pi(a) - \pi^*(a)|$ between the learned and optimal action distributions in the POMDP.
        Left all lines show mean and 95\% bootstrapped CIs over replay buffer sizes $N\in \{16, 64, 256, 1024, 4096\}$ and 10 seeds. 
        Center and right: bold lines are averages over sampled learning-rates, translucent lines show sample trajectories.
    }
    \label{fig:tabular_auc}
\end{figure}

We construct three diagnostic tasks.
The \textit{$\sigma$-trap} is a multi-armed bandit in which a zero-mean distractor arm has much higher return variance than the optimal arm with mean $10$, $\sigma_{\mathrm{trap}} \gg \sigma^\ast$.
A risk-neutral learner commits to the optimal arm, while a sufficiently risk-seeking learner gets distracted by the noise of the suboptimal arm as $\sigma$ grows.
Secondly, the \textit{Pareto-$\alpha$} bandit draws rewards from a truncated Pareto-$\alpha$ distribution and sweeps the tail index $\alpha$ to test whether each method's fixed point remains well-defined as the moment-generating function diverges at $\alpha \leq 1$.
Finally, the \textit{state-aliasing} task collapses two distinct latent states onto the same observation with a deterministic reward, so maintaining policy stochasticity is needed for low regret.
This isolates the concentration rate from the risk profile.
We describe the bandit setup in more detail in Appendix~\ref{appendix:experimental_setup:environments}.

The results in Figure~\ref{fig:tabular_auc} agree with the theory on all three items.
In the $\sigma$-trap (left), GRPO converges on the optimal arm so long as its update rule remains numerically stable given the increasing gradient magnitude, which is only achieved when we decrease the learning rate as $\sigma^{-1/2}$.
We also see that DPO is distracted earlier by noise than FTW, but both degrade to the same uniform policy eventually.
This is the coin-flip limit which Prop.~\ref{prop:risk} predicts for bounded operators once the distractor's draws land above and below the optimal mean with equal probability (Appendix~\ref{appendix:proofs:risk}).
Although FTW at $(2,1)$ shares DPO's risk offset, DPO's earlier collapse is due to its more aggressive exponential concentration in the update rule, a dynamical entrenchment effect of DPO's loss which our fixed-point theory does not capture.
As expected, the exponential potential over the buffer (FTW-exp) collapses onto the trap arm the fastest, see also Appendix~\ref{appendix:results:sigma_toggle} for more variations of the elite-ratio parameters $(C, K)$.
On the Pareto-$\alpha$ bandit (center), GRPO fails to identify the optimal arm in the heavy-tailed regime, while DPO and FTW remain stable.
This matches the prediction that the rank-based potential has a well-defined fixed point for any continuous return distribution whereas the cardinal-exponential potential does not.
On the aliasing task (right), FTW achieves lower L1 distance $|\pi(a) - \pi^*(a)|$ between the learned and optimal action distributions compared to GRPO and DPO, when averaged over a learning-rate population.
This is consistent with polynomial concentration retaining more stochasticity under partial observability.
See Appendix~\ref{appendix:results} for additional Bandit results and Appendix~\ref{appendix:belief} for a knowledge-seeking experiment left out of the main paper.

\begin{wrapfigure}{r}{0.47\linewidth}
    \centering
    \vspace{-1.5em}
    \includegraphics[width=.97\linewidth]{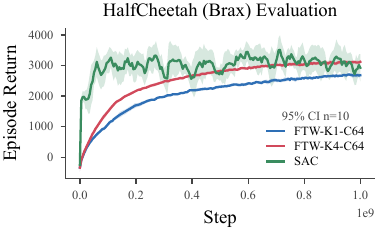}
    \vspace{-1em}
    \caption{
        Evaluation learning curves on HalfCheetah after 1B steps of training.
        Shaded regions are 95\% Gaussian CIs over 10 seeds.
    }
    \label{fig:deep_rl}
    \vspace{-2em}
\end{wrapfigure}
\paragraph{Deep RL.}
We also evaluate on Brax HalfCheetah \citep{freeman2021brax} to check that FTW functions as a more general deep-RL algorithm.
Figure~\ref{fig:deep_rl} compares FTW-K1-C64 and FTW-K4-C64 against SAC \citep{haarnoja_sac_2018}.
FTW is less sample-efficient than SAC in the early phase of training, as expected from the ordinal treatment of returns that discards return magnitude.
By the end of the $10^9$-step budget, FTW-K4-C64 reaches mean return comparable to SAC with visibly tighter seed-to-seed variation.
Thus, our conservative ordinal update also extends to deep-RL settings without a critic, which is the regime of LLM post-training.

\section{LLM Post-Training Experiments}  \label{section:experiments_posttraining}

We evaluate Follow the Winners (FTW) on two agentic LLM post-training benchmarks: Sokoban \citep{SchraderSokoban2018}, a visual box-pushing puzzle that requires multi-turn planning, and Search-R1 \citep{jin2025searchr1}, an open-domain question-answering task with a retrieval tool.
We interleave chain-of-thought with tool or action outputs and use sparse terminal rewards together with a $-0.1$ penalty for formatting errors.
We reuse the experimental setup of \citet{feng2025groupingroup}, see Appendix~\ref{appendix:experimental_setup:environments}.

We ablate the FTW filtering operator for $C=5$ and $K \in \{1, 2, 3\}$ against GRPO, PPO, and REINFORCE++ on Sokoban using the Qwen2.5 model-line \citep{qwen2025qwen25technicalreport} with a replay buffer of size 2000.
On Search-R1 we use FTW with $C=4$ and $K=1$ with a 10k replay buffer.
GRPO uses $n=8$ rollouts per prompt and is the only method that needs duplicated group rollouts.

Figure~\ref{fig:llm_post_training} (left) shows test success rate curves on Sokoban.
For $K\in\{1, 2\}$ FTW clusters with GRPO and PPO in learning speed and performance, showing that FTW performs competitively despite a much simpler update, without PPO's value function model or GRPO's duplicated rollouts.
Notably, for $K=3$ FTW collapses after peaking around $0.4$.
Proposition~\ref{prop:risk} shows that this filter is relatively mild, its elite threshold for selection sits exactly at the buffer mean.
Under continuous returns this means that there should be monotonic improvement (cf.\ the $K{=}3$, $C{=}4$ bandit in Appendix~\ref{appendix:results:sigma_toggle}).
Under Sokoban's binary returns, however, a minibatch with fewer than $K$ successes fills its remaining elite slots with zero-return trajectories that the cross-entropy loss then imitates, whereas at $K=1$ a single reward per minibatch would suffice.
As the success rate falls this becomes more likely and the collapse reinforces itself.
We attribute this to a finite-sample effect that larger batches and replay buffers dampen at the cost of slower learning (detailed in Appendix~\ref{appendix:results:training}).
Finally, we also see that REINFORCE++, exhibits large seed variance and slower learning, implying that dropping group rollouts or a value function without some variance reduction replacement is insufficient on Sokoban.

The table in Figure~\ref{fig:llm_post_training} (right) reports average exact-match accuracy on the 50k+ Search-R1 test set from \citet{jin2025searchr1}.
We see that our FTW variant matches (within noise) the published Search-R1 PPO and GRPO baselines reported by \citet{jin2025searchr1}.
This again supports our claim that FTW is a viable alternative to PPO, which needs more than double the GPU resources, and to GRPO, which needs group duplication (Appendix~\ref{appendix:experimental_setup:compute}).
However, we do caution that the bandit experiments isolate the theoretical mechanisms more cleanly than what the LLM post-training setting can provide.

\begin{figure}[t]
    \begin{minipage}{0.5\textwidth}
      \centering
      \includegraphics[width=.9\textwidth]{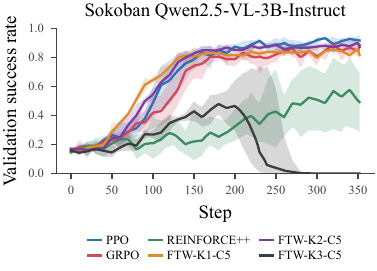}
    \end{minipage}
    \hspace{0.03\textwidth}
    \begin{minipage}{0.35\textwidth}
        \vspace{-2.5em}
        \centering
        \begin{tabular}{l|c}
            \multicolumn{2}{c}{\textbf{Qwen2.5-3B-Instruct}} \\
            \addlinespace[0.4em]
            \textbf{Method} & \textbf{Avg. Test Acc. \%} \\
            \midrule
            Search-R1 (PPO)  & 32.5 \\
            Search-R1 (GRPO) & 33.6 \\
            \midrule
            FTW-K1-C4        & 35.46 $\pm$ 0.15 SD \\
        \end{tabular}
    \end{minipage}
    \caption{
        Sokoban test success rate curves (left, 128 puzzles, mean $\pm$ one standard deviation over 5 seeds) and final exact-match accuracy $\pm$ one standard deviation on the Search-R1 test set over 3 seeds (right, 50k+ examples).
        PPO and GRPO results are from \citet{jin2025searchr1}.
    } \label{fig:llm_post_training}
\end{figure}

\section{Related Work}  \label{section:related_work}

The CAI framework on which we build has a long history in stochastic optimal control \citep{Kappen2012, todorov2009efficient, rawlik2012stochastic, theodorou_generalized_2010} and reinforcement learning \citep{levine2018reinforcement, ziebart_modeling_2010, dayan1997em, toussaint_robot_2009}, where it has motivated methods such as MPO \citep{abdolmaleki2018maximum}, REPS \citep{peters2010reps}, and SAC \citep{haarnoja_sac_2018}.
We extend this line by recasting the tilting potential as conditioning on permutations or winner indices over sampled batches, rather than on a binary variable for individual trajectories.

Beyond GRPO and DPO, our framework subsumes a broader family of bootstrap-and-project methods.
RAFT \citep{dong2023raft} corresponds to FTW with $C=N$, $K=1$, and a buffer refreshed every prompt. 
ReST \citep{gulcehre2023reinforcedselftrainingrestlanguage} and ReST-EM \citep{singh2024beyond} bootstrap toward an annealed threshold potential before projecting back onto the model parameters.
Variational best-of-$n$ \citep{amini2025variational} shares the single-batch top-1 selection structure ($C=N$, $K=1$, no replay), though its KL-regularized variational objective differs from our cross-entropy projection, and STaR \citep{zelikman2022star} is closely related as a hard top-1 selection criterion.
Listwise preference methods such as LiPO \citep{liu2025lipo}, PRO \citep{song2024pro}, and RSO \citep{liu2024statistical} use the PL likelihood directly as a training objective on the policy.
A complementary approach to damping concentration is generalized-entropy regularization through Tsallis entropy \citep{Lee2020tsallisRL} or general greedification operators \citep{chan_greedification_2022}, which for the right choice of greedification can also achieves polynomial concentration through the regularizer rather than directly through the potential.
Recent work by \citet{wu2026takestwogrposecretly} also reach a related GRPO/DPO correspondence in a bottom-up way by reframing GRPO's group-relative advantage as a contrastive loss. 
Our derivation arrives at the same $N=2$ Bradley--Terry limit top-down as the first-stage marginal of the PL model under CAI.
We provide a more comprehensive discussion of the literature and its positioning to our work in Appendix~\ref{appendix:related_work}.

\section{Conclusion}  \label{section:conclusion}

We introduced Follow the Winners (FTW), a critic-free policy-learning algorithm for agentic LLM post-training that is adapted from the cross-entropy method for optimization.
FTW induces polynomial concentration to limit aggressive policy updates which can lead to entrenchment, and uses a replay-buffer proposal to reduce sample variance without the need for a critic model or group rollouts.
On agentic LLM benchmarks, FTW matches GRPO and PPO on Sokoban and the published Search-R1 baselines, and trains more stably and faster than REINFORCE++ on Sokoban.
This supports our algorithm motivation that FTW offers a viable trade-off, spending CPU memory in place of a value model or the environment duplication that group rollouts require.
To derive the algorithm, we extended the Control-as-Inference framework to encompass contemporary critic-free RFT methods by reinterpreting the tilting potential as conditioning on permutations or winner indices over sampled batches.
This recovers DPO exactly under the Plackett--Luce potential and positions GRPO as the Jensen upper bound on the corresponding multinomial objective, exposing GRPO as risk-neutral while DPO and FTW share a bounded risk offset that FTW controls through its selection pressure (Proposition~\ref{prop:risk}).
The framework places GRPO, DPO, and FTW on a common design axis, and provides a top-down basis for principled algorithm design in the critic-free RFT regime.

\paragraph{Limitations.}
FTW's risk offset acts as an exploration term under epistemic uncertainty, but the same offset also biases the fixed point away from the mean-optimal policy under aleatoric noise (Section~\ref{section:risk_profiles}), making FTW (and DPO) less suited to environments with large irreducible noise.
The replay-buffer-based proposal amortizes data-collection costs and provides additional regularization, but at the price of memory overhead.
That said, CPU memory is often cheaper and more accessible than stricter GPU requirements.
Finally, our IIA assumption (Section~\ref{section:ftw}) underlying the elite-filtering interpretation requires a total ordering on the trajectory domain, and thus needs further adaptation for non-transitive settings such as optimizing for a Nash equilibrium.

\section*{References}
\bibliographystyle{plainnat_custom}
\bibliography{references}

\appendix
\clearpage

\newpage
\section*{Appendix: Table of Contents}
\appendixtoc

\let\oldsection\section
\let\oldsubsection\subsection
\makeatletter
\renewcommand{\section}{\@ifstar{\oldsection*}{\@appsection}}
\newcommand{\@appsection}[1]{%
  \oldsection{#1}%
  \hypertarget{appsec.\thesection}{}%
  \addtocontents{apx}{%
    \protect\contentsline{section}%
      {\protect\numberline{\thesection}#1}%
      {\thepage}%
      {appsec.\thesection}%
  }%
}
\renewcommand{\subsection}{\@ifstar{\oldsubsection*}{\@appsubsection}}
\newcommand{\@appsubsection}[1]{%
  \oldsubsection{#1}%
  \hypertarget{appsec.\thesubsection}{}%
  \addtocontents{apx}{%
    \protect\contentsline{subsection}%
      {\protect\numberline{\thesubsection}#1}%
      {\thepage}%
      {appsec.\thesubsection}%
  }%
}
\makeatother

\clearpage

\section{Disclosure Statements}  \label{appendix:statements}

\paragraph{LLM Usage.}  \label{section:llm_usage}
LLMs were used for writing assistance (language and notation), literature search, and experiment infrastructure via agentic coding tools (Claude Code \& Codex).
All ideas, derivations, and results are solely the authors'.

\paragraph{Broader Impact.}  \label{section:broader_impact}
Our work contributes a general-purpose policy-learning algorithm for critic-free reinforcement fine-tuning of large language models.
The contribution is methodological and is not tied to any specific deployed application, dataset, or released model.

As with most advances in LLM post-training, improvements in agentic capability can in principle support beneficial applications (such as research assistants, accessibility tooling, and scientific discovery) as well as harmful ones (such as automated disinformation, scam generation, or surveillance).
The conservative, polynomially-concentrating update rule introduced by FTW does not differentially advantage either direction, and we clearly document limitations (namely the risk-seeking profile under aleatoric noise) that can cause unintended effects in practice.
Our experiments use publicly available pre-trained models from the Qwen2.5 family and benchmarks intended for research, and we do not release post-trained model weights.

Relative to GRPO-style baselines, FTW incurs additional memory overhead from the replay buffer.
This is partly offset by amortized data-collection costs, so the net compute footprint depends on the specific deployment regime; we compare the compute and memory footprints of FTW, GRPO, and PPO in Appendix~\ref{appendix:experimental_setup:compute}.
Running, debugging, and tuning large-scale RL experiments is computationally costly and carries a non-trivial environmental footprint, which we acknowledge as a known concern of this line of research.

\paragraph{Reproducibility.}  \label{section:reproducibility}
We provide full pseudocode for FTW in Algorithm~\ref{alg:ftw} as the method is intentionally simple.
Appendix~\ref{appendix:experimental_setup} documents the full experimental setup, covering hardware (Appendix~\ref{appendix:experimental_setup:hardware}), software (Appendix~\ref{appendix:experimental_setup:software}), exact environment and benchmark specifications (Appendix~\ref{appendix:experimental_setup:environments}), and all hyperparameters (Appendix~\ref{appendix:experimental_setup:hyperparameters}).
All experiments build on publicly available components: the Qwen2.5 model family, known environments like Sokoban, Search-R1, and Brax.
Our code is a fork of \citet{feng2025groupingroup}.
Additional results supporting the main-text claims are reported in Appendix~\ref{appendix:results}.
We do not release code, models, or data until the manuscript is accepted and internal legal procedures are completed.

\section{Extended Related Work} \label{appendix:related_work}

We situate our method relative to four strands of work: control as inference; RLHF and preference optimization; policy improvement in the critic-free setting; and CEM methods, together with a brief discussion of order statistics and their implications for our hyperparameter choices.

\subsection{Control as Inference.}

The Control-as-Inference framework \citep{levine2018reinforcement, ziebart_modeling_2010, ziebart_maxent_2010, Kappen2012, toussaint_robot_2009, attias2003planning, dayan1997em, rawlik2012stochastic, todorov2007linearly, todorov2009efficient, theodorou_generalized_2010} recasts sequential decision-making as probabilistic inference in a graphical model.
Under specific modelling assumptions, the posterior in this formulation coincides with the optimal or soft-optimal control law, making tools from probabilistic inference and stochastic optimal control directly applicable to the control problem.
This dual formulation enables an interchange of methods between the two formalisms: examples include sequential Monte Carlo and particle filtering for planning \citep{piche2018probabilistic, macfarlane2024spo, vries2025trustregion}, expectation-maximization for policy search as used by REPS and MPO \citep{dayan1997em, abdolmaleki2018maximum, peters2010reps}, and amortized variational inference for control \citep{zintgraf_varibad_2020, vries2026varibased}.
A further merit is that the probabilistic interpretation extends naturally to the deep reinforcement learning setting with high data throughput, since the free-energy formulation yields natural regularization for approximate policy improvement \citep{haarnoja_sac_2018, chan_greedification_2022, geist_theory_2019}.

The formalism usually builds on the idea of exponentiated reward potentials that tilt a prior trajectory distribution so that the resulting posterior concentrates on high-return trajectories.
Under specific choices of potential and dynamics, this recovers maximum-entropy reinforcement learning \citep{ziebart_maxent_2010} and, in the continuous-time setting with KL or quadratic control costs, a linearized form of the Hamilton--Jacobi--Bellman equation \citep{Kappen2012, todorov2009efficient, theodorou_generalized_2010}.
The framework is usually applied in the Markov decision process setting \citep{levine2018reinforcement}; we instead develop a formulation for the partially observable case, for which related treatments can be found in \citet{abdulsamad2025sequentialmontecarlopolicy, vries2026varibased}.
Our contribution differs from the standard single-trajectory exponential tilting view in two ways: we allow ordinal likelihoods over finite sets of sampled trajectories, and we replace exponential return-based tilting with a polynomial, order-statistic-based tilting mechanism.
The combination with a tailored cross-entropy method for optimization yields the conservative, approximate, policy-iteration method from the main text.

\subsection{Reinforcement Learning from Human Feedback (RLHF).}

RLHF \citep{christiano2017deep, stiennon2020learning, ouyang2022training} has driven much of the recent LLM post-training literature.
The shift toward critic-free methods such as GRPO and its variants \citep{shao2024deepseekmath, yu2025dapoopensourcellmreinforcement, liu2025understanding, liu2026tricks}, REINFORCE++ \citep{hu2025reinforcestabilizingcriticfreepolicy}, and RLOO \citep{kool2019buy, ahmadian2024backtobasics}, and toward direct preference optimization without an explicit reward model such as DPO \citep{rafailov2023direct} and listwise extensions \citep{liu2025lipo, song2024pro}, has produced a fragmented landscape that our derivation seeks to unify.
Methods specifically targeting multi-turn agentic settings such as StarPO \citep{wang2025ragenunderstandingselfevolutionllm}, Multi-Turn GRPO \citep{wang2025a, wei2025reinforcing}, or GiGPO \citep{feng2025groupingroup} share our operational regime but treat the multi-turn structure through specialized advantage estimators or credit-assignment heuristics, rather than through a top-down probabilistic framework.

Within this landscape, our framework places GRPO-family and DPO-family methods on a common axis (subset size $N$ and choice of likelihood) and identifies common heuristics such as group baselines, shared prefixes, and contrastive losses as consequences of specific top-down modelling choices rather than fundamental algorithmic differences.
Although we discussed RAFT \citep{dong2023raft} and similar methods in the main text, the off-policy extension was introduced by RAFT++ \citep{xiong2025minimalistapproachllmreasoning} which is parallel to our use of the replay buffer.
A closely related thread is STaR \citep{zelikman2022star}, which bootstraps a reasoning policy by iteratively fine-tuning on self-generated rationales that lead to correct answers, and can be viewed as a degenerate form of bootstrap-and-project with a hard top-1 selection criterion.
Recent work by \citet{wu2026takestwogrposecretly} reaches a related correspondence bottom-up by reframing GRPO's group-relative advantage as a contrastive loss, and shows a gradient-level equivalence in the $N=2$ case.
Our derivation arrives at the same $N{=}2$ Bradley-Terry limit top-down as the first-stage marginal of the PL model under CAI, and takes a broader perspective through the lens of tilting potentials (i.e., the Multinomial or Plackett-Luce).

\subsection{Learning to Rank and Listwise Preference Optimization.}

The Plackett-Luce (PL) family is one of several ordinal potentials our framework admits, and the one through which we recover DPO \citep{rafailov2023direct} as the $N{=}2$ special case.
This places our work in contact with a long line of research on PL-based ranking outside of LLM alignment.
Classical learning-to-rank \citep{cao2007listwise, xia2008listwise, oosterhuis2022plrank} fits a parameterized scoring function to observed rankings via maximum likelihood under PL or related listwise losses.
Our formulation is closest to PL-Rank \citep{oosterhuis2022plrank}, but differs in two ways.
First, we do not parameterize the PL model on an external scoring function.
Instead, our PL likelihood arises from the Control-as-Inference construction \citep{levine2018reinforcement} over sampled trajectory collections, and the policy is updated through the resulting posterior rather than fit to PL labels directly.
Second, the CAI derivation supplies a KL-regularization term against a reference policy that is absent in standard listwise learning-to-rank.

Preference-based policy learning \citep{akrour2011preference}, and its deep RL version \citep{christiano2017deep}, form another branch in reinforcement learning that learns a preference model on trajectory data as a surrogate for the reward.
This surrogate is then optimized akin to a value function instead of the direct returns.
DPO \citep{rafailov2023direct} collapses this two-stage procedure into a single objective under a Bradley-Terry model, and sketches a PL generalization in their Appendix A.3.
Our derivation can be read as making their sketch more explicit.
The PL likelihood is recovered top-down from CAI rather than imposed as a ranking loss, which yields the ELBO and its policy gradient for optimization as consequences rather than design choices.

Listwise preference optimization methods such as LiPO \citep{liu2025lipo} and PRO \citep{song2024pro} use the same PL likelihood as a training objective directly on the LLM's response distribution.
Rather than fitting an external scoring function as in classical LTR, they treat the policy itself as the PL ranker over candidate responses, perhaps similar to standard learning-to-rank.
The two methods differ in their loss construction.
LiPO uses a rank-aware listwise loss inspired by LambdaRank \citep{burges2006learning}, which weights each pair by its contribution to a target ranking metric.
PRO uses sequential elimination with a Bradley-Terry-style contrast at each rank position, which is one approach to sample from the standard PL factorization.
RSO \citep{liu2024statistical} is closely related and uses Boltzmann acceptance to source preference data closer to the optimal policy, which connects to our (ordinal) exponential-tilting view in Section~\ref{section:background} but without the ELBO grounding.

\subsection{Policy Improvement Operators for Critic-Free RL.}

The classical policy improvement theorem \citep{howard1960dynamic, sutton_reinforcement_2018} states that replacing actions by those with higher Q-value under the current policy yields a non-degrading update.
This guarantee is the foundation of approximate policy iteration, and it depends on having access to a $Q$-estimate against which to greedify.
In conventional deep reinforcement learning that estimate comes from a parametric value function trained alongside the policy, and a substantial body of work studies how to effectively estimate $Q$ \citep{mnih_humanlevel_2015, vanhasselt2016deep, munos_retrace_2016} and how aggressively to move the policy toward its argmax \citep{kakade2002approximately}.
For instance, \citet{chan_greedification_2022} formalize greedification through forward and reverse KL projections, showing that the choice of divergence materially affects stability.
\citet{oren2025value} extend the view to actor-critic methods and show that more conservative greedification operators improve performance in the deep setting.
Trust-region methods such as TRPO and PPO \citep{schulman2017trust, schulman2017proximal}, and relative-entropy policy search \citep{peters2010reps}, take a complementary route by constraining the magnitude of the per-step policy update against a reference policy.
On the theoretical side, \citet{mei2020escaping, mei2020global, beirami2025theoretical} show that softmax policy gradient and best-of-n methods concentrate exponentially fast on the argmax-action for the current value estimate, which can be desirable for swift convergence when the estimate is well-calibrated.

Critic-free RFT for LLMs breaks the assumption underlying this toolbox.
Methods such as GRPO \citep{shao2024deepseekmath}, REINFORCE++ \citep{hu2025reinforcestabilizingcriticfreepolicy}, and RAFT \citep{dong2023raft} discard the value function for compute reasons and replace it with batch-relative statistics: the group-mean baseline in GRPO, the global advantage normalization in REINFORCE++, the top-$1$ winner in RAFT.
The estimate against which the policy is greedified is no longer a stable function approximator but a Monte Carlo statistic over $N$ samples per prompt.
For small $N$ this estimate is high-variance, and the exponential concentration of softmax policy gradients amplifies that variance into the policy update.
This can cause \textit{entrenchment}, where one trajectory with large positive empirical advantage receives an outsized push, and the policy commits to it before subsequent samples can correct the estimate.
KL-regularization slows this commitment but does not eliminate it \citep{geist_theory_2019}, since the asymptotic concentration profile of the soft-optimal policy is unchanged by the size of the KL coefficient.
The view that conservative updates protect against estimator noise is well-established for value-based methods, where double Q-learning \citep{vanhasselt2016deep} and clipped twin Q-targets \citep{fujimoto2018addressing} address overestimation bias from value-function noise.
Our argument transposes this concern to the policy side and to the critic-free setting.

Our work proposes a different lever for controlling concentration in this critic-free regime through the shape of the tilting potential.
\textit{Follow the Winners} induces a polynomial concentration in the order statistic of returns rather than the exponential concentration of softmax-based methods (see Section~\ref{section:ftw}).
In essence, our polynomial potential approach dampens the concentration target, and the KL regularizer bounds how quickly the policy moves toward it.
A complementary mechanism for damping concentration is generalized-entropy regularization, where Tsallis entropy \citep{Lee2020tsallisRL} replaces the standard KL or Shannon term and yields polynomial-tailed policies in the limit.
This achieves polynomial concentration through the regularizer rather than the potential, and the two approaches are formally distinct: Tsallis-regularized policies are polynomial in the advantage relative to a reference, whereas our potential is polynomial in the order statistic of within-batch returns.
The two could in principle be combined, although we leave that to future work.

One asymptotic benefit of the polynomial potential over the exponential is reduced sensitivity to noisy return estimations.
Firstly, the update depends on within-batch ranking rather than on the magnitude of the empirical advantage.
Secondly, the implicit risk preference induced by exponential tilting grows quadratically with the noise scale of the empirical mean, whereas under a rank-based potential it grows only linearly and saturates at the upper quantile of the noise distribution.
We sketch this argument informally in Appendix~\ref{appendix:proofs}.
This effect is less crucial in critic-augmented settings, where value estimation can absorb part of the noise, but it is directly relevant in critic-free RFT, where the only fallback is the batch average itself.

\subsection{Cross-Entropy Method for Deep RL.}
The cross-entropy method \citep{rubinstein_optimization_1997, rubinstein1999cem, botev2013cemopt} has been applied to reinforcement learning along two main lines, neither of which matches our usage.
The first treats CEM as a black-box policy-search method, where the elite-filtering step operates over policy parameters directly.
\citet{mannor2003cem} introduced this for fast policy search, and \citet{pourchot2019cemrl} extend it by combining CEM with gradient-based actor-critic updates to obtain a population-based hybrid.
The second uses CEM as a short-horizon planner over learned dynamics models, where the elite-filtering step operates over candidate action sequences rather than policy parameters.
PETS \citep{chua2018pets}, PlaNet \citep{hafner2019learning}, the differentiable CEM of \citet{amos2020dcem}, and the sample-efficient iCEM of \citet{pinneri2021icem} all fall into this category.

Our usage differs from both lines: \textit{Follow the Winners} instead applies CEM elite-filtering as an optimization step for an ELBO derived from a Control-as-Inference graphical model, where the optimization targets either an amortized variational inference or expectation-maximization scheme.
Furthermore, we approximate the elite set for the projection step as a collection of \textit{trajectories} drawn from the replay buffer rather than parameters or action sequences.
The CEM step then acts as the bootstrap-and-project component of an approximate policy-improvement operator, with the empirical cross-entropy projection mapping the filtered trajectories back onto the policy parameters via SGD.
This connects the CEM elite-filtering operator to the order-statistics analysis of Section~\ref{section:ftw}, which are not present in any of the prior CEM-RL applications.
A practical consequence is that our method does not require a learned dynamics model (unlike planner-style CEM) and does not maintain a population of policies (unlike CEM-RL), which keeps it compatible with the critic-free RFT setting.

\paragraph{Order Statistics, Subset Selection, and Genetic Algorithms.}

The tournament-selection scheme used by Follow the Winners (Section~\ref{section:ftw}, Lemma~\ref{lemma:elite_filtering_marginal}) draws on classical literature in order statistics and evolutionary computation.
The marginal probability of selection by tournament rank is a standard order-statistics result \citep{david2003order, johnson1997discrete}, and tournament selection itself is a foundational operator in genetic algorithms (GAs) \citep{goldberg1989genetic, goldberg1991comparative} where the trade-off between tournament size and selection pressure has been studied extensively.
The $K=1$, $C=2$ case of our Eq.~(\ref{eq:elite_filtering_marginal}) recovers binary tournament selection exactly.
Relative to the GA literature, our contribution is to embed tournament selection inside a probabilistic framework: the resulting empirical occupancy is shown in Section~\ref{section:ftw} to act as an implicit tilting distribution that targets the upper quantile of returns without explicit potential evaluation.
This connects the GA-style operator to the cross-entropy method for optimization \citep{rubinstein_optimization_1997, rubinstein1999cem, botev2013cemopt} discussed in the previous subsection.
When $C=N$, our scheme reduces to standard CEM elite-filtering applied to the buffer, making the GA-style and CEM-style perspectives literally equivalent in this limit.
A separate but adjacent thread is sampling without replacement from a Plackett-Luce distribution via the Gumbel-top-$k$ trick \citep{gumbel1954, kool2019stochastic}, which we use to sample full PL rankings efficiently.

\section{GRPO and DPO Primer}
\label{appendix:primer}

For self-containedness, we shortly reintroduce the key components of GRPO and DPO in our notation, so that any reader jumping into our paper has them available.

\paragraph{GRPO.}
Group-Relative Policy Optimization \citep{shao2024deepseekmath} is a critic-free policy-gradient method built on PPO \citep{schulman2017proximal}.
For each prompt it samples a group of $N$ trajectories from the current policy, scores their returns $G_1, \ldots, G_N$, and replaces the learned value baseline of actor-critic methods with the group statistics through the advantage $A_i = (G_i - \bar{G}) / \sigma_G$, where $\bar{G}$ and $\sigma_G$ are the group mean and standard deviation.
The policy maximizes the PPO-style clipped objective with a KL penalty toward a reference policy,
\begin{equation}
    \mathcal{L}_{\text{GRPO}}(\theta)
    =
    \mathbb{E}\Bigl[ \tfrac{1}{N} {\textstyle\sum_{i=1}^{N}} \min\bigl(r_i(\theta) A_i,\, \mathrm{clip}(r_i(\theta), 1 \pm \epsilon) A_i\bigr) \Bigr]
    - \lambda\, D_{\mathrm{KL}}\bigl(\pi_\theta \,\|\, \piref\bigr),
    \label{eq:grpo_primer}
\end{equation}
with importance ratio $r_i(\theta) = \pi_\theta(H^{(i)}) / \pi_{\theta_{\text{old}}}(H^{(i)})$.
Trajectories above the group mean are reinforced and those below are suppressed, which makes the update risk-neutral with respect to the return distribution but sensitive to its scale.
The latter is also why variants like Dr.~GRPO \citep{liu2025understanding} remove the standard-deviation normalization, since it undercredits good trajectories in high-variance groups and overcredits nearly indistinguishable ones in low-variance groups.
Many further GRPO variants exist, as covered in Appendix~\ref{appendix:related_work}.

\paragraph{DPO.}
Direct Preference Optimization \citep{rafailov2023direct} learns from pairwise preferences instead of raw returns.
It assumes a Bradley--Terry preference model (from the RLHF framework), in which a latent reward $r$ generates preferences between two trajectories with probability $\Pr(H^{(w)} \succ H^{(l)}) = \sigma\bigl(r(H^{(w)}) - r(H^{(l)})\bigr)$.
Substituting the closed-form solution of the KL-regularized RL objective, $r(H) = \beta \ln \tfrac{\pi_\theta(H)}{\piref(H)}$, eliminates the reward model and yields the loss
\begin{equation}
    \mathcal{L}_{\text{DPO}}(\theta)
    =
    -\mathbb{E}_{(H^{(w)},\, H^{(l)})}\Bigl[ \ln \sigma\Bigl( \beta \ln \tfrac{\pi_\theta(H^{(w)})}{\piref(H^{(w)})} - \beta \ln \tfrac{\pi_\theta(H^{(l)})}{\piref(H^{(l)})} \Bigr) \Bigr],
    \label{eq:dpo_primer}
\end{equation}
which increases the likelihood of the preferred trajectory $H^{(w)}$ relative to the dispreferred $H^{(l)}$, anchored to the reference policy $\piref$.
In our experiments, preferences are induced by comparing sampled returns, and Appendix~\ref{appendix:derivation:recovery} recovers both objectives top-down from the CAI framework.

\section{Proofs} \label{appendix:proofs}

\subsection{Rao-Blackwell Argument and GRPO upper-bound} \label{appendix:proofs:rao_blackwell}
\begin{proof}[Proof of Proposition~\ref{prop:grpo}]
    Fix a trajectory $H_T^n$ and consider the conditional law of $\setB_N$ under $Q(\setB_N)=\prod_{j=1}^N q(H_T^j)$ given that $H_T^n$ is held fixed. Under this conditional law, the remaining $N-1$ batch elements are redrawn independently from $q$ (or replay buffer). By definition,
    \begin{align}
        \beta_n(H_T^n)=\mathbb{E}_{Q(\setB_N)}[\alpha_n(\setB_N)\mid H_T^n].
    \end{align}
    Now let $\setB_N^{(1)},\dots,\setB_N^{(M)}$ be i.i.d. conditional batch completions around the fixed trajectory $H_T^n$. Then $\alpha_n(\setB_N^{(m)})$ is an i.i.d. sequence, so by the law of large numbers,
    \begin{align}
        \frac{1}{M}\sum_{m=1}^M \alpha_n(\setB_N^{(m)})
        \xrightarrow[M\to\infty]{}
        \mathbb{E}_{Q(\setB_N)}[\alpha_n(\setB_N)\mid H_T^n]
        = \beta_n(H_T^n).
    \end{align}
    Hence $\alpha_n(\setB_N)$ is an unbiased Monte Carlo estimator of the per-sample Rao--Blackwell coefficient under conditional batch completion.

    The Jensen bound follows by upper-bounding the coefficient $\alpha_n$,
    \begin{align}  \label{eq:grpo_upperbounds_cai}
        \alpha_n(\setB_N)
        = \ln g_n - \ln \sum_{k=1}^N g_k
        \le \ln g_n - \frac{1}{N} \sum_{k=1}^N \ln g_k - \ln N
        \le \ln g_n - \frac{1}{N} \sum_{k=1}^N \ln g_k.
    \end{align}
    Then, plugging in the exponential potential over the cardinal returns $g_n = \exp G(H^n_T)$, we obtain $G(H^n_T) - \frac{1}{N} \sum_{k=1}^N G(H^k_T)$ for the right-hand side, which is exactly GRPO's group averaged baseline.
\end{proof}

\subsection{Probability of Selection under Elite-Filtering} \label{appendix:proofs:elite_filtering}

\begin{lemma}[Elite-Filtering Marginal]\label{lemma:elite_filtering_marginal}
    Consider a buffer of $N$ trajectories with a total ranking, where $r=1$ denotes the highest-return trajectory.
    An elite-filtering step draws a subset of size $C$ uniformly without replacement from the buffer and retains the top-$K$ ranked trajectories as elites.
    The normalized marginal occupancy of the trajectory at rank $r$ among the retained elites is
    \begin{align}
        \mathrm{Pr}(r \mid N, C, K)
        = \frac{1}{K\binom{N}{C}} \sum_{k=1}^{K} \binom{r-1}{k-1}\binom{N-r}{C-k},
        \label{eq:elite_filtering_marginal}
    \end{align}
    which sums to $1$ over $r \in \{1, \ldots, N\}$.
\end{lemma}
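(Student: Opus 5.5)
The plan is a direct counting argument over the $\binom{N}{C}$ equally likely subsets, followed by a double-counting argument for the normalization. Fix the trajectory at rank $r$. Since a subset of size $C$ is drawn uniformly without replacement, each of the $\binom{N}{C}$ subsets has probability $1/\binom{N}{C}$. The trajectory at rank $r$ is retained as an elite exactly when it lies in the subset and its position within the subset is some $k \le K$. That is, exactly $k-1$ of the other $C-1$ members have rank better than $r$. Under the total ranking (and the IIA consistency assumed in Section~\ref{section:ftw}), the within-subset order is the restriction of the global order. So I will count subsets containing rank $r$ with within-subset position $k$ as follows.
\begin{itemize}
\item Choose the $k-1$ better members from the $r-1$ globally better items, in $\binom{r-1}{k-1}$ ways.
\item Choose the remaining $C-k$ members from the $N-r$ globally worse items, in $\binom{N-r}{C-k}$ ways.
\end{itemize}
The events for different $k$ are disjoint, so the probability that rank $r$ is selected is
\begin{align*}
\frac{1}{\binom{N}{C}}\sum_{k=1}^{K}\binom{r-1}{k-1}\binom{N-r}{C-k}.
\end{align*}
Terms with $k-1>r-1$ or $C-k>N-r$ vanish by the usual convention, so I do not need to split into cases.

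Next I will interpret ``normalized marginal occupancy'' as the expected fraction of the $K$ elite slots occupied by rank $r$. Each draw yields exactly $K$ elites, assuming $K\le C\le N$. The occupancy is therefore the selection probability divided by $K$, which gives Eq.~\eqref{eq:elite_filtering_marginal}.

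For the normalization I will use double counting rather than Vandermonde manipulations. The sum $\sum_{r=1}^N \sum_{k=1}^K \binom{r-1}{k-1}\binom{N-r}{C-k}$ counts pairs (subset $S$, element of $S$ at within-subset position $\le K$). Every subset contributes exactly $K$ such pairs, so the double sum equals $K\binom{N}{C}$, and the occupancies sum to $1$. As a sanity check, swapping the sums and applying the identity $\sum_r \binom{r-1}{k-1}\binom{N-r}{C-k}=\binom{N}{C}$ gives the same result. That identity itself follows from counting $C$-subsets by the rank of their $k$-th best element.

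The only real obstacle is the modelling step, not the algebra. It consists of stating cleanly that within-subset ranks are induced by the global total order (ties excluded, or broken consistently) and fixing the meaning of ``normalized'' as division by the $K$ elite slots per draw. Once these are fixed, both the formula and its normalization follow from the two counting arguments above. I will also remark that the $J$ minibatches are i.i.d., so the same marginal applies to the union of elites in Algorithm~\ref{alg:ftw}.
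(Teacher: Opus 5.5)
Your proposal is correct and follows the paper's proof: you decompose by the within-subset elite position $k$, count $\binom{r-1}{k-1}\binom{N-r}{C-k}$ favourable subsets (the paper phrases this as a multivariate hypergeometric probability), and divide by the $K$ elite slots per draw. Your double-counting normalization, in which each $C$-subset contributes exactly $K$ elite pairs, is an equivalent form of the paper's exchange of sums followed by the identity $\sum_{r}\binom{r-1}{k-1}\binom{N-r}{C-k}=\binom{N}{C}$, and your justification of that identity (counting $C$-subsets by the rank of their $k$-th best element) is also sound.
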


\begin{proof}
    \textbf{Step 1: Decompose by elite position.}
    Item $r$ is retained iff it is included in the sampled subset and lies among the top-$K$ ranks within that subset.
    It occupies exactly elite position $k \in \{1, \ldots, K\}$ iff exactly $k-1$ of the other sampled items have rank $< r$ and exactly $C-k$ have rank $> r$.
    These events are mutually exclusive across $k$, so
    \begin{align*}
        \mathrm{Pr}(\text{item } r \text{ retained})
        = \sum_{k=1}^{K} \mathrm{Pr}(\text{item } r \text{ at elite position } k).
    \end{align*}

    \textbf{Step 2: Apply the multivariate hypergeometric distribution.}
    The buffer is partitioned into three groups: $\{r\}$ (size 1), the $r-1$ better items, and the $N-r$ worse items. 
    By the multivariate hypergeometric distribution \citep[Ch.~6]{johnson1997discrete}, the probability of drawing exactly 1 item from $\{r\}$, exactly $k-1$ from the $r-1$ better items, and exactly $C-k$ from the $N-r$ worse items in a uniform draw of $C$ items without replacement is,
    \begin{align*}
        \mathrm{Pr}(\text{item } r \text{ occupies position } k) 
        = \frac{\binom{1}{1}\binom{r-1}{k-1}\binom{N-r}{C-k}}{\binom{N}{C}}
        = \frac{\binom{r-1}{k-1}\binom{N-r}{C-k}}{\binom{N}{C}}.
    \end{align*}
    Summing over $k = 1, \ldots, K$ gives the unnormalized marginal
    $\sum_{k=1}^{K}\binom{r-1}{k-1}\binom{N-r}{C-k} / \binom{N}{C}$.
    
    \textbf{Step 3: Normalization.}
    Summing over all buffer items and exchanging the order of summation,
    \begin{align*}
        \sum_{r=1}^{N} \sum_{k=1}^{K} 
        \binom{r-1}{k-1}\binom{N-r}{C-k}
        = \sum_{k=1}^{K} \sum_{r=1}^{N} 
        \binom{r-1}{k-1}\binom{N-r}{C-k}
        = \sum_{k=1}^{K} \binom{N}{C}
        = K\binom{N}{C},
    \end{align*}
    where the inner sum follows from the Vandermonde identity $\sum_{r=1}^{N}\binom{r-1}{k-1}\binom{N-r}{C-k} = \binom{N}{C}$ \citep[Ch.~5]{graham1994concrete}.
    Hence the unnormalized marginal sums to $K$ over all $r$, and dividing by $K$ yields a proper probability distribution summing to 1, confirming our result in Eq.~\eqref{eq:elite_filtering_marginal}.
\end{proof}

\paragraph{Boundary Cases.}
A few boundary cases are immediate.
For $K=1$, only the $k=1$ term survives, giving $\mathrm{Pr}(r \mid N,C,1) = \binom{N-r}{C-1}/\binom{N}{C}$, which is strictly decreasing in $r$ and satisfies $\mathrm{Pr}(1 \mid N,C,1) = C/N$.
For $K=C$, Vandermonde's identity yields $\sum_{k=1}^{C}\binom{r-1}{k-1}\binom{N-r}{C-k} = \binom{N-1}{C-1}$, so $\mathrm{Pr}(r \mid N,C,C) = 1/N$ for all $r$, recovering uniform sampling.

\begin{remark}[title={Remark. Variations of the Marginal \& Duplicates in Elite-Filtering.}]
    Lemma~\ref{lemma:elite_filtering_marginal} assumes trajectories in the buffer are distinct and that each elite-filtering step draws $C$ items uniformly without replacement from the full buffer of size $N$, with repeated steps drawn independently.
    Duplicate trajectories, sampling with replacement within a draw, or sampling without replacement across repeated elite-filtering steps would slightly alter the marginal probabilities.
    We include Lemma~\ref{lemma:elite_filtering_marginal} mainly to illustrate the induced tilting distribution underlying our policy optimization procedure; a full treatment of all such variants is outside the scope of this work.
\end{remark}

\subsection{Risk Offsets across Return Distributions}
\label{appendix:proofs:risk}
\label{appendix:proofs:risk_table}

We compare the implicit risk preferences of the algorithms when the return $G(H_T)$ is genuinely stochastic given the observable trajectory, rather than reduced to its conditional mean as in Eq.~\eqref{eq:posterior_policy_cai}.
The noise may be aleatoric or epistemic, and the critic-free setting has no value function to fall back on as a surrogate for the mean return.
We compute the risk offset of each algorithm for four concrete return families under a single recipe, summarized in Table~\ref{appendix:tab:risk_offsets}.

\textbf{Setup.}
Each buffer entry carries a trajectory $H_T^i$ and a return $G_i \sim P(\cdot \mid H_T^i)$, drawn independently across entries.
Write $\mu(H_T) = \bbE[G \mid H_T]$ and $\sigma^2(H_T) = \mathrm{Var}[G \mid H_T]$.
When we resample data from the buffer, we treat sampled returns as i.i.d.\ from the population marginal $F_0$ induced by the buffer's occupancy (density $f_0$ where the return family is continuous). 
The buffer's empirical distribution estimates $F_0$ up to finite-sample error, and we assume for methods like GRPO that the buffer is simply refreshed at each iteration to fit within this framing.
The ordinal operators compare returns, and discrete families produce ties, which we break uniformly at random.
This tie-breaking is captured by the randomized grade of a return value $g$,
$U(g) = \Pr(G' < g) + V \cdot \Pr(G' = g)$ with $G' \sim F_0$ and $V \sim \mathrm{Unif}(0,1)$ refreshed at each comparison and $U(g) \in [0, 1]$.
In words, the grade is the fraction of buffer mass strictly below $g$, plus a uniformly random position within the tie mass at $g$, so comparing returns with tie coin-flips is the same as comparing grades.
For continuous families the grade reduces to the buffer quantile $F_0(g)$, and the grade of a buffer draw is $\mathrm{Unif}(0,1)$.

Every batch-level potential $\Phi(\setB_N)$ below reduces to a per-entry selection weight
\begin{align}
    w_\Phi(H_T) \;=\; \bbE_{G \mid H_T}[\varphi(G)],
\end{align}
with $\varphi$ an increasing single-return utility.
The reduction holds one entry $H_T$ fixed and marginalizes the remaining batch members over the buffer (exactly like the Rao--Blackwell conditioning of Proposition~\ref{prop:grpo}).
For example take the MNL/softmax, the batch weight $e^{G_n/\tau} / \sum_k e^{G_k/\tau}$ has a denominator shared by the whole batch that concentrates around its mean over batch completions, so to first order this reduces to $w_\Phi$ with $\varphi(g) = e^{g/\tau}$.
The induced utilities are:
\begin{itemize}
    \item \textbf{MNL-exp} (softmax on cardinal returns, temperature $\tau$): $\varphi(g) = e^{g/\tau}$, and $\tau = 1$ recovers the potential $g_n = \exp G$ of Proposition~\ref{prop:grpo}.
    \item \textbf{GRPO}: The Jensen surrogate of the Rao-Blackwellized MNL-exp (Proposition~\ref{prop:grpo}) which scores each entry by the group-mean advantage, so $\varphi(g) = g/\tau$.
    \item \textbf{DPO} (pairwise): the winner of a pair is the higher grade in randomly drawn comparisons. Thus, an entry with grade $U(g)$ gives $\varphi(g) = \bbE_V[U(g)]$, the win rate averaged over ties.
    \item \textbf{FTW top-$K$/$C$} (ranks): the entry is retained iff at most $K - 1$ of its $C - 1$ competitors beat it, and each does so independently with probability $1 - U(g)$, giving $\varphi(g) = \bbE_V[B_{C,K}(U(g))]$ with $B_{C,K}(u) = \Pr(\mathrm{Bin}(C-1, 1-u) \le K-1)$, which matches Lemma~\ref{lemma:elite_filtering_marginal} in the large-buffer limit up to the entry-independent inclusion factor.
    \item \textbf{FTW softmax-on-ranks} (temperature $\tau$): $\varphi(g) = \bbE_V[e^{U(g)/\tau}]$, the Boltzmann weight on ranks (determined through sorting returns).
\end{itemize}

\begin{definition}[Certainty equivalent and risk offset]
\label{ap:def:ce}
Let $\varphi$ be an increasing utility.
Define the certainty equivalent
$J_\varphi(H_T) \eqdef \varphi^{-1}\big(\bbE_{G \mid H_T}[\varphi(G)]\big)$ and the risk offset as $\mathcal{R}_\varphi(H_T) \eqdef J_\varphi(H_T) - \mu(H_T)$.
\end{definition}

A few remarks.
First, $\mathcal{R}_\varphi \neq 0$ requires a non-degenerate return law and a non-affine $\varphi$.
Second, as the noise vanishes, $G \to \mu(H_T)$ and every cell of Table~\ref{appendix:tab:risk_offsets} gives $\mathcal{R} = 0$.
Third, an affine utility commutes with the expectation, so $J_\varphi = \mu$ and $\mathcal{R}_\varphi = 0$ exactly, which settles the GRPO row in every column of Table~\ref{appendix:tab:risk_offsets}.
Then finally, the score can be ambiguous.
For example, take two buffers with values $\{0, 3, 10\}$ and $\{0, 7, 10\}$: the grade would be equal for items 3 and 7 despite taking on different actual values.
In words, ordinal operators resolve only the ordering of returns and not their values, so the inverse $\varphi^{-1}$ covers an interval rather than a single number.
In these cases, we write $0^\dagger$ whenever the operator still ranks entries properly by their mean return, to distinguish an inexpressible offset from a computed zero.

To derive generic risk profiles $\mathcal{R}$ for the algorithms/method variations on varying noise distributions, we first layout a general starting point in the following lemma.
This is followed by a proposition where we derive each risk profile of interest.

\begin{lemma}[Theory recipe]
\label{ap:lem:ce_recipe}
Let $w_\Phi = \bbE_{G|H_T}[\varphi(G)]$ and let $\sigma^2 = \mathrm{Var}[G \mid H_T]$ as in the setup.
\begin{enumerate}
    \item[(i)] For $\varphi(g) = e^{g/\tau}$, the certainty equivalent is the scaled log moment-generating function $J_\varphi = \tau \ln \bbE_{G \mid H_T}[e^{G/\tau}]$, exactly.
    \item[(ii)] For $\varphi$ twice differentiable near $\mu$ and $\sigma^2 \to 0$ we get the certainty-equivalent expansion,
    \begin{align}
        J_\varphi \;=\; \mu + \frac{\sigma^2}{2} \frac{\varphi''(\mu)}{\varphi'(\mu)} + o(\sigma^2),
    \end{align}
    with Arrow--Pratt coefficient $\varphi''/\varphi'$ \citep{pratt1964risk}.
    \item[(iii)] Grade utilities depend on returns only through $U(g)$, so both FTW and DPO are invariant to any strictly increasing transformation of returns.
\end{enumerate}
\end{lemma}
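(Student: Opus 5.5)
The three parts are independent, so I will prove each directly from Definition~\ref{ap:def:ce}, where $J_\varphi = \varphi^{-1}(w_\Phi)$ and $w_\Phi = \bbE_{G\mid H_T}[\varphi(G)]$.

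For (i), I substitute $\varphi(g)=e^{g/\tau}$. Its inverse is $\varphi^{-1}(y)=\tau\ln y$, which is defined for $y>0$. Since $w_\Phi>0$ whenever the moment-generating function is finite, $J_\varphi=\tau\ln\bbE[e^{G/\tau}]$ follows immediately. I will note explicitly that this requires $\bbE[e^{G/\tau}]<\infty$. This connects to the heavy-tail caveat in Section~\ref{section:risk_profiles}, and it also gives the Gaussian value $\mathcal{R}=\sigma^2/(2\tau)$ used in Proposition~\ref{prop:risk}(a).

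For (ii), I will Taylor-expand twice. First, expand $\varphi(G)$ around $\mu$:
\[
\varphi(G)=\varphi(\mu)+\varphi'(\mu)(G-\mu)+\tfrac12\varphi''(\mu)(G-\mu)^2+r(G).
\]
Taking the conditional expectation kills the linear term and gives $w_\Phi=\varphi(\mu)+\tfrac12\varphi''(\mu)\sigma^2+\bbE[r(G)]$. Second, since $\varphi$ is increasing with $\varphi'(\mu)>0$, its inverse is differentiable at $\varphi(\mu)$. Expanding $\varphi^{-1}$ to first order, $\varphi^{-1}(\varphi(\mu)+\delta)=\mu+\delta/\varphi'(\mu)+o(\delta)$ with $\delta=\tfrac12\varphi''(\mu)\sigma^2+o(\sigma^2)$, yields the stated formula.

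The main obstacle is showing $\bbE[r(G)]=o(\sigma^2)$. A local $C^2$ assumption alone does not control the tails of $G$. I plan to interpret ``$\sigma^2\to0$'' as a family of return laws concentrating at $\mu$, for example a location–scale family $G=\mu+\sigma Z$ with $Z$ of fixed law. Under that reading I need one of two conditions: either $\varphi''$ is bounded near $\mu$ and the tail contribution is controlled by a moment condition, or $\varphi''$ is uniformly continuous on the support. In either case $|r(G)|\le \omega(|G-\mu|)(G-\mu)^2$ with $\omega(s)\to0$ as $s\to0$. Dominated convergence then gives $\bbE[r(G)]/\sigma^2\to0$. I will state this regularity as part of the hypotheses.

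For (iii), I treat DPO and FTW separately, after first recording how the grade transforms. Let $h$ be strictly increasing. Then $\{h(G')<h(g)\}=\{G'<g\}$ and $\{h(G')=h(g)\}=\{G'=g\}$. Since the buffer marginal is transformed by the same $h$, the grade satisfies $U_{h}(h(g))=U(g)$ pointwise for every $V$. Consequently $\bbE_V[U]$ (DPO) and $\bbE_V[B_{C,K}(U)]$ (FTW) are unchanged, so the weights $w_\Phi$ and every selection decision are invariant. Certainty equivalents transform equivariantly, $J\mapsto h(J)$, up to the interval ambiguity already flagged with $0^\dagger$.
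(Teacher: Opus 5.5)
Your proposal is correct and follows the paper's proof essentially step for step: you invert the exponential for (i), Taylor-expand $\varphi(G)$ inside the conditional expectation and then linearize for (ii), and use that the grade $U$ depends on $g$ only through the events $\{G'<g\}$ and $\{G'=g\}$, which strictly increasing maps preserve, for (iii). Your departures are refinements rather than a different route: you linearize $\varphi^{-1}$ at $\varphi(\mu)$ instead of expanding $\varphi(J_\varphi)$ about $\mu$, which avoids needing $J_\varphi-\mu=O(\sigma^2)$ a priori, and you replace the paper's informal ``higher central moments are $o(\sigma^2)$'' with an explicit dominated-convergence condition on the remainder.
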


\begin{proof}
(i) Simply invert the exponential.
(ii) Take the Taylor expansion for $\varphi(G)$ around $\mu$ inside the expectation, the linear term vanishes since $\bbE[G - \mu] = 0$, leaving $w_\Phi = \varphi(\mu) + \tfrac{1}{2}\sigma^2 \varphi''(\mu) + o(\sigma^2)$, which works when the higher central moments are $o(\sigma^2)$. 
Then recall that $\varphi(J_\varphi) = w_\Phi$. For $\varphi'(\mu) \neq 0$, expanding $\varphi(J_\varphi)$ around $\mu$ gives $\varphi(\mu) + \varphi'(\mu)(J_\varphi - \mu) + o(\sigma^2)$.
Simple rearrangement, cancelling $\varphi(\mu)$ and dividing by $\varphi'(\mu)$, gives the claim.
(iii) $U(g)$ depends on $g$ only through the events $\{G' < g\}$ and $\{G' = g\}$, which any strictly increasing transformation of returns preserves.
\end{proof}

\begin{proposition}[Risk offsets per return family]
\label{ap:prop:risk_table}
For each distribution (column) in Table~\ref{appendix:tab:risk_offsets}, denote the conditional return law as $\mathrm{Law}(G_t \mid H_T)$ and assume that the buffer marginal $F_0$ bootstraps from this parametric law, writing the buffer's implicit parameters as the subscript `$0$', so that $F_0 = \mathcal{N}(\mu_0, \sigma_0^2)$ in the Gaussian column, and likewise $p_0$, $\lambda_0$, $\theta_0$ for the other columns.
Then the risk offset $\mathcal{R}_\varphi(H_T)$ of each operator is as given in Table~\ref{appendix:tab:risk_offsets}.
Cells are exact unless marked $\approx$ (a normal approximation, of the Poisson family or of the elite threshold) or written with an explicit $o(\sigma^2)$ remainder.
\end{proposition}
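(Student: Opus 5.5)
The plan is to fill the table cell by cell with the recipe of Lemma~\ref{ap:lem:ce_recipe}. For each operator we already know its utility $\varphi$ from the setup list. For each return family we then compute $w_\Phi=\bbE_{G\mid H_T}[\varphi(G)]$ and invert to get $J_\varphi$. The offset is $\mathcal{R}_\varphi=J_\varphi-\mu$. The table is organized by operator (rows) and return family (columns), and the work is done row by row, since every entry in a row shares one $\varphi$.

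\textbf{GRPO row.} Here $\varphi$ is affine, so every entry is the exact zero noted after Definition~\ref{ap:def:ce}.

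\textbf{MNL-exp row.} Part (i) of Lemma~\ref{ap:lem:ce_recipe} gives $J_\varphi=\tau K_G(1/\tau)$, where $K_G$ is the cumulant generating function. So each cell is a closed-form cumulant minus the mean:
\begin{itemize}
\item Gaussian: $\sigma^2/(2\tau)$, which is exactly Proposition~\ref{prop:risk}(a) at $\tau=1$.
\item Bernoulli: $\tau\ln(1-p+pe^{1/\tau})-p$.
\item Poisson: $\lambda\tau(e^{1/\tau}-1)-\lambda$.
\item Heavy-tailed family: $+\infty$ whenever the MGF diverges.
\end{itemize}

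\textbf{Ordinal rows (DPO, FTW top-$K$/$C$, FTW softmax-on-ranks).} Part (iii) says these utilities see returns only through the grade $U(G)$. In the continuous columns $U=F_0$, and the Gaussian case can be done exactly for DPO:
\[
\bbE\,\Phi\!\left(\frac{G-\mu_0}{\sigma_0}\right)=\Phi\!\left(\frac{\mu-\mu_0}{\sqrt{\sigma_0^2+\sigma^2}}\right).
\]
This uses the standard identity $\bbE\,\Phi(a+bZ)=\Phi\!\left(a/\sqrt{1+b^2}\right)$. Inverting $\varphi=\Phi((\cdot-\mu_0)/\sigma_0)$ then gives $J=\mu_0+\sigma_0(\mu-\mu_0)/\sqrt{\sigma_0^2+\sigma^2}$, which is Proposition~\ref{prop:risk}(b).

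For FTW top-$K$/$C$, the function $B_{C,K}(F_0(g))$ is the probability that the $K$-th largest of $C-1$ buffer draws lies below $g$. I will approximate that order statistic by $\mathcal{N}(\mu_0+\sigma_0a_{C,K},\,\sigma_0^2s_{C,K}^2)$, after which the same Gaussian identity gives case (c). At $(C,K)=(2,1)$ the order statistic is just $F_0$ itself, so the result recovers (b) exactly. The softmax-on-ranks and non-Gaussian continuous cells that have no closed form get the small-$\sigma$ expansion from part (ii), $\tfrac{\sigma^2}{2}\varphi''(\mu)/\varphi'(\mu)$ with $\varphi''/\varphi'$ computed via the chain rule through $F_0$.

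\textbf{Discrete columns.} Bernoulli and Poisson require the randomized grade. For Bernoulli, $\bbE_V[U(g)]$ takes just two values, so $w_\Phi$ is affine in $p$. But $\varphi^{-1}$ is only determined on the support $\{0,1\}$, so these cells get the $0^\dagger$ label whenever the ordering by $\mu$ is preserved. For Poisson I will use the normal approximation $\mathcal{N}(\lambda,\lambda)$ and reuse the Gaussian formulas, which is why those cells carry $\approx$.

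The main obstacle is making the order-statistic normal approximation for FTW precise, including the choice of $a_{C,K}$ and $s_{C,K}$ and the sign claims about $a_{C,K}$. For the signs I would appeal to the symmetry of Gaussian order statistics: the median rank $K\approx C/2$ gives $a=0$, and $a$ decreases monotonically in $K$. A secondary difficulty is consistently deciding which discrete cells are well defined and which get $0^\dagger$.
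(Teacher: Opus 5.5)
Your row-by-row use of Lemma~\ref{ap:lem:ce_recipe} matches the paper's proof for the Gaussian, Bernoulli and Poisson columns:
\begin{itemize}
\item an affine utility for GRPO;
\item the cumulant generating function for MNL-exp;
\item the Gaussian convolution identity for DPO, which is equivalent to the paper's use of $G-G'$ being Gaussian;
\item the two-moment normal approximation of the threshold $T$ for FTW top-$K$/$C$, exact at $(2,1)$;
\item the Arrow--Pratt expansion for softmax-on-ranks;
\item $0^\dagger$ from affinity in $p$ on a two-point support;
\item moment-matched normals for the counts.
\end{itemize}

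The gap is the fourth column. It is Exponential($\theta$), not a heavy-tailed family, and your plan does not produce three of its cells.

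\emph{MNL-exp.} The MGF is finite for $\theta\tau>1$, with $\bbE[e^{G/\tau}]=(1-\tfrac{1}{\theta\tau})^{-1}$. This gives the finite cell $-\tau\ln(1-\tfrac{1}{\theta\tau})-\tfrac{1}{\theta}$. Your ``$+\infty$'' only covers $\theta\tau\le 1$.

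\emph{DPO.} The cell carries no $\approx$ or $o(\sigma^2)$ mark, so it is claimed exact. Your route for non-Gaussian continuous cells, Lemma~\ref{ap:lem:ce_recipe}(ii), only yields its leading term $-\tfrac{\theta_0\sigma^2}{2}+o(\sigma^2)$. The missing computation is that the win rate of independent exponentials is closed form: $w_\Phi=\Pr(G>G')=\int_0^\infty \theta_0 e^{-\theta_0 x}e^{-\theta x}\,dx=\tfrac{\theta_0}{\theta+\theta_0}$. The sure-return utility $\varphi(J)=1-e^{-\theta_0 J}$ then inverts explicitly to $J=\tfrac{1}{\theta_0}\ln(1+\tfrac{\theta_0}{\theta})$.

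\emph{FTW top-$K$/$C$.} The shrinkage constants are not the Gaussian order-statistic quadratures. Standardize the competitors as $\theta_0 G'-1$. The $K$-th largest of $C-1$ i.i.d.\ $\mathrm{Exp}(1)$ draws has mean $H_{C-1}-H_{K-1}$ and variance $H^{(2)}_{C-1}-H^{(2)}_{K-1}$, for example via the R\'enyi representation. So $a_{C,K}=H_{C-1}-H_{K-1}-1$ and $s^2_{C,K}=H^{(2)}_{C-1}-H^{(2)}_{K-1}$, which go into the Gaussian shrinkage form. The $\approx$ comes only from treating $T$ as normal.

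Your chain-rule route does handle the softmax-on-ranks cell in this column: $\varphi''/\varphi'=f_0/\tau+(\ln f_0)'=\theta_0 e^{-\theta_0\mu}/\tau-\theta_0$.

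There is also a smaller slip. The discrete columns are $m\,\mathrm{Ber}(p)$ and $m\,\mathrm{Poi}(\lambda)$, and you set $m=1$. The scale is harmless for the ordinal rows by Lemma~\ref{ap:lem:ce_recipe}(iii), but MNL-exp is not scale invariant. Its cells are therefore $\tau\ln(1-p+pe^{m/\tau})-mp$ and $\tau\lambda(e^{m/\tau}-1)-m\lambda$. The Poisson normal approximation should compare $\mathcal{N}(m\lambda,m^2\lambda)$ against $\mathcal{N}(m\lambda_0,m^2\lambda_0)$.

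Finally, the sign claims about $a_{C,K}$ that you flag as the main obstacle are not part of this proposition; they belong to Proposition~\ref{prop:risk}(c). Your symmetry argument for them is fine but not needed here.
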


\begin{table}[ht]
\centering
\caption{
    Risk offsets $\mathcal{R} = J - \mu$ per operator (rows) and return family (columns) ordered from least aggressive (top) to most aggressive (bottom). 
    Sparse rewards use scaled returns $m$ and success rate $p$, such that $\mu = mp$ allowing the mean to be invariant to larger or smaller $p$ for properly chosen $m$. 
    Similarly, counts use $G = m \cdot \mathrm{Poi}(\lambda)$, so $\mu = m\lambda$ and $\sigma^2 = m^2 \lambda$. 
    The exponential family has rate $\theta$ and mean $1/\theta$ and requires the MNL-exp cell to satisfy $\theta\tau > 1$ to be defined. 
    GRPO's zeros are exact, while $0^\dagger$ marks cells whose ranking is mean-consistent but whose offset is not expressible in return units (the certainty equivalent is interval-valued). The constants $a_{C,K}$ and $s_{C,K}^2$ are the mean and variance of the $K$-th largest value among $C-1$ buffer returns standardized to mean $0$ and variance $1$, $t = \mu_0 + \sigma_0 a_{C,K}$ denotes the elite threshold, $H_n$ and $H^{(2)}_n$ are the harmonic numbers of first and second order, and $f_0$ is the density of the population marginal $F_0$.
}
\label{appendix:tab:risk_offsets}
\setlength{\tabcolsep}{3pt}
\resizebox{\linewidth}{!}{%
    \begin{tabular}{lllll}
    \toprule
     & Gaussian $\mathcal{N}(\mu, \sigma^2)$ & Sparse $m\,\mathrm{Ber}(p)$ & Counts $m\,\mathrm{Poi}(\lambda)$ & Exponential($\theta$) \\
    \midrule
    GRPO & $0$ & $0$ & $0$ & $0$ \\
    [2pt]
    DPO & $(\mu - \mu_0)\Big(\tfrac{\sigma_0}{\sqrt{\sigma_0^2 + \sigma^2}} - 1\Big)$ & $0^\dagger$ & $\approx$ Gaussian cell, $\sigma^2 = m^2\lambda$ & $\tfrac{1}{\theta_0} \ln\!\big(1 + \tfrac{\theta_0}{\theta}\big) - \tfrac{1}{\theta}$ \\
    [2pt]
    FTW top-$K$/$C$ & $\approx (\mu - t)\Big(\tfrac{\sigma_0 s_{C,K}}{\sqrt{\sigma^2 + \sigma_0^2 s_{C,K}^2}} - 1\Big)$ & $0^\dagger$ & $\approx$ Gaussian cell, $\sigma = m\sqrt{\lambda}$ & $\approx$ Gaussian cell, $(a_{C,K}, s_{C,K})$ via $H, H^{(2)}$ \\
    [2pt]
    FTW smx-ranks & $\dfrac{\sigma^2}{2}\Big(\dfrac{f_0(\mu)}{\tau} - \dfrac{\mu - \mu_0}{\sigma_0^2}\Big) + o(\sigma^2)$ & $0^\dagger$ & $\approx$ Gaussian cell & $\dfrac{\sigma^2 \theta_0}{2}\Big(\dfrac{e^{-\theta_0 \mu}}{\tau} - 1\Big) + o(\sigma^2)$ \\
    [2pt]
    MNL-exp & $\dfrac{\sigma^2}{2\tau}$ & $\tau \ln(1 - p + p e^{m/\tau}) - mp$ & $\tau \lambda (e^{m/\tau} - 1) - m\lambda$ & $-\tau \ln\!\big(1 - \tfrac{1}{\theta\tau}\big) - \tfrac{1}{\theta}, \theta\tau>1$ \\
    \bottomrule
    \end{tabular}
}
\end{table}

\begin{proof}
\textbf{GRPO row.} 
Conditioned on $H_T$, the batch mean averages the entry's own return with its $N - 1$ competitors, $\bbE[\bar G \mid H_T] = \tfrac{1}{N}\mu(H_T) + \tfrac{N-1}{N}\mu_0$, so $\bbE[G - \bar G \mid H_T] = (1 - \tfrac{1}{N})\,(\mu(H_T) - \mu_0)$.
This is linear in $\mu(H_T)$ with a shared constant.
Marginally over the buffer the advantage averages to zero, but the conditional average determines the ranking in again a linear fashion.
Thus $\mathcal{R}=0$.

\textbf{(Sparse) Bernoulli column.} 
On the support $\{0, m\}$ any utility gives $w_\Phi = (1-p)\,\varphi(0) + p\,\varphi(m)$, which is affine in $p$.
Thus, every operator ranks by the mean $mp$. 
The grade utilities are flat between any other value between the two payoffs and admit no unique $J_\varphi$, so their cells record the mean-rank as $0^\dagger$ rather than actually zero. 
The next step shows the only exception in the MNL-exp cell.

\textbf{MNL-exp row.} 
By Lemma~\ref{ap:lem:ce_recipe}(i), $J_\varphi = \tau \ln \bbE_{G \mid H_T}[e^{G/\tau}]$, this is the moment generating function and has a standard closed form for each of the considered distributions: Gaussian $e^{\mu/\tau + \sigma^2/2\tau^2}$, scaled Bernoulli $1 - p + p e^{m/\tau}$, scaled Poisson $e^{\lambda(e^{m/\tau} - 1)}$, exponential $(1 - \tfrac{1}{\theta\tau})^{-1}$ for $\theta\tau > 1$ and infinite otherwise.
Subtracting the means gives the cells.
All four reduce to $\sigma^2/2\tau$ at small noise.
The only difference for MNL-exp in the (sparse) Bernoulli column is the $e^{m/\tau}$ factor, the weight is sensitive to the return's value, whereas the ordinal operators depend only on its rank.

\textbf{DPO row.} 
At a sure return $g$, the utility is an independent average over opponents, $\varphi(g) = \bbE_V[U(g)] = \Pr(G' < g) + \tfrac{1}{2}\Pr(G' = g)$ with $G' \sim F_0$, i.e., the win rate of $g$ against any one buffer draw. 
Then, averaging over the entry's own return gives the win rate of the noisy variant, $w_\Phi = \bbE_{G \mid H_T}[\varphi(G)] = \Pr(G' < G) + \tfrac{1}{2}\Pr(G' = G)$ with $G'$ independent of $G$.
For continuous values $\Pr(G' = G) = 0$.

\emph{Gaussian:} 
The weight is equivalently $w_\Phi = \Pr(G' < G) = \Pr(G - G' > 0)$.
The difference of independent Gaussians is Gaussian, $G - G' \sim \mathcal{N}(\mu - \mu_0, \sigma^2 + \sigma_0^2)$, so standardizing gives $w_\Phi = \Phi\big((\mu - \mu_0)/\sqrt{\sigma^2 + \sigma_0^2}\big)$, with $\Phi$ here the standard normal CDF.
The certainty equivalent solves $\varphi(J_\varphi) = w_\Phi$, where for the sure return $J_\varphi$ only the opponent's draw is random.
Thus, its utility is the quantile $\varphi(J_\varphi) = \Phi\big((J_\varphi - \mu_0)/\sigma_0\big)$.
Applying $\Phi^{-1}$ to both sides and solving for $J_\varphi$ gives $J_\varphi = \mu_0 + \sigma_0 (\mu - \mu_0)/\sqrt{\sigma^2 + \sigma_0^2}$, and subtracting $\mu$ factors into the cell $\mathcal{R} = (\mu - \mu_0)\big(\sigma_0/\sqrt{\sigma_0^2 + \sigma^2} - 1\big)$.

\emph{Exponential:} 
The Gaussian steps carry over with two substitutions, namely the expressions for the win rate and the CDF.
The win rate becomes $w_\Phi = \Pr(G > G') = \int_0^\infty e^{-\theta x}\, \theta_0 e^{-\theta_0 x}\, dx = \theta_0/(\theta + \theta_0)$.
At the sure return $J_\varphi$ only the opponent is random, so the utility becomes the CDF, $\varphi(J_\varphi) = \Pr(G' < J_\varphi) = 1 - e^{-\theta_0 J_\varphi}$.
Setting this equal to $w_\Phi$ gives $e^{-\theta_0 J_\varphi} = \theta/(\theta + \theta_0)$, hence $J_\varphi = \tfrac{1}{\theta_0} \ln(1 + \theta_0/\theta)$, and subtracting the mean $1/\theta$ gives the table result.

\emph{Counts:} 
Let $G = mk$ with $k$ the count. Averaging the win-and-tie probability over the opponents gives the exact weight
$w_\Phi = \sum_k \mathrm{Poi}_\lambda(k) \big[ F_{\lambda_0}(k - 1) + \tfrac{1}{2}\mathrm{Poi}_{\lambda_0}(k) \big]$,
where $F_{\lambda_0}(k - 1) = \Pr(G' < mk)$ is the Poisson CDF and the scale $m$ cancels due to the invariance as per Lemma~\ref{ap:lem:ce_recipe}(iii).
This sum has no closed form, and the utility is a step function.
Thus the table only reports the Gaussian approximation with moment-matched laws $\mathcal{N}(m\lambda, m^2\lambda)$ and $\mathcal{N}(m\lambda_0, m^2\lambda_0)$, which is valid for large $\lambda$ and $\lambda_0$.

\textbf{FTW top-$K$/$C$ row.} 
An entry is retained iff its return exceeds the $K$-th largest of its $C - 1$ competitors.
To formalize this, we will standardize an entry's returns according to its own law $G = \mu + \sigma Z$ with $Z \eqdef (G - \mu)/\sigma$.
This assumes nothing but gives us $Z$ with mean $0$, variance $1$, while preserving the underlying family (Table~\ref{appendix:tab:risk_offsets}).
Each of the $C - 1$ competitors are then also standardized but by the buffer's $(\mu_0, \sigma_0)$.
This holds exactly for the continuous columns, since every Gaussian standardizes to $\mathcal{N}(0,1)$.
For the exponential law we know that the mean and standard deviation are both $1/\theta$, so $Z = (G - 1/\theta)/(1/\theta) = \theta G - 1$, and $\theta G \sim \mathrm{Exp}(1)$ for any rate $\theta$.
This recipe fails for the discrete columns since a linear standardization can only fix the first two moments and their third and higher moments will still depend on the parameter.

Now write the random threshold as $T \eqdef \mu_0 + \sigma_0 Z_{(K)}$, the $K$-th largest competitor return, where $Z_{(K)}$ is the $K$-th largest of the $C - 1$ standardized competitor draws.
Retention is the event $G \ge T$, so the utility is the threshold's CDF, $\varphi(g) = \Pr(T \le g) \eqdef F_T(g)$, and the certainty equivalent solves $F_T(J_\varphi) = \bbE_{G \mid H_T}[F_T(G)]$.
For the pairwise case $(C, K) = (2, 1)$ where $T$ is a single buffer draw we obtain exactly the DPO setting.
In every other case we have that the law of $T$ is known exactly, $F_T(g) = B_{C,K}(F_0(g))$ with $B_{C,K}(u) = \Pr(\mathrm{Bin}(C-1, 1-u) \le K-1)$, but it is inconvenient and not Gaussian for $C > 2$.
Moreover, $J_\varphi = F_T^{-1}(\bbE[F_T(G)])$ has no closed form.
For larger $C$ we therefore approximate $T$ by a Gaussian, and any residual error caused by the higher moments of $Z_{(K)}$ (which is typically skewed with a longer tail on the high returns) is what the table's $\approx$ marks.
Writing $a_{C,K} \eqdef \bbE[Z_{(K)}]$ and $s_{C,K}^2 \eqdef \mathrm{Var}[Z_{(K)}]$ for the mean and variance of the $K$-th largest of $C - 1$ standardized draws, we obtain $t \eqdef \bbE[T] = \mu_0 + \sigma_0 a_{C,K}$ and $\mathrm{sd}(T) = \sigma_0 s_{C,K}$.
Approximating $T$ by a Gaussian with these two exact moments allows us to reuse the DPO steps with $(\mu_0, \sigma_0)$ replaced by $(t, \sigma_0 s_{C,K})$.
The weight becomes $w_\Phi = \Pr(G - T \ge 0) = \Phi\big((\mu - t)/\sqrt{\sigma^2 + \sigma_0^2 s_{C,K}^2}\big)$, since $G - T \sim \mathcal{N}(\mu - t, \sigma^2 + \sigma_0^2 s_{C,K}^2)$ for independent Gaussians.
At the sure return only the threshold is random, so $\varphi(J_\varphi) = \Pr(T \le J_\varphi) = \Phi\big((J_\varphi - t)/(\sigma_0 s_{C,K})\big)$.
Setting the two equal, applying $\Phi^{-1}$, and solving gives $J_\varphi = t + \sigma_0 s_{C,K} (\mu - t)/\sqrt{\sigma^2 + \sigma_0^2 s_{C,K}^2}$, and subtracting $\mu$ factors into
\begin{align} \label{eq:rank_shrinkage}
    \mathcal{R} \;=\; (\mu - t)\left( \frac{\sigma_0 s_{C,K}}{\sqrt{\sigma^2 + \sigma_0^2 s_{C,K}^2}} - 1 \right).
\end{align}
This is the same shrinkage form as the DPO row, with the buffer mean $\mu_0$ attractor substituted by the elite threshold $t$.
The remaining paragraphs show the $(a_{C,K}, s_{C,K})$ pair for each distribution.

\emph{Gaussian:}
Due to standardization, we have that $a_{C,K}$ and $s_{C,K}$ are simply the mean and standard deviation of the $K$-th largest among $C - 1$ independent $\mathcal{N}(0, 1)$ draws.
Neither has a closed form, but both follow from one-dimensional quadrature against the order-statistic density,
$a_{C,K} = \tfrac{(C-1)!}{(K-1)!\,(C-1-K)!} \int_{\mathbb{R}} z\, \Phi(z)^{C-1-K} (1 - \Phi(z))^{K-1}\, \mathrm{d}\Phi(z)$, where the same integral with $z^2$ in the integrand gives the second moment and $s_{C,K}^2$ follows by subtracting $a_{C,K}^2$.
Concretely, this gives example values $(a, s) = (0, 1)$ at $(2,1)$, $(0.85, 0.75)$ at $(4,1)$, $(0, 0.67)$ at $(4,2)$, $(-0.85, 0.75)$ at $(4,3)$, and $(1.35, 0.63)$ at $(8,1)$, which we verified numerically.
Note that $(4,2)$ shares the attractor $t = \mu_0$ with $(2,1)$, the median of three symmetric draws has mean zero, but its smaller $s_{C,K}$ makes the shrinkage factor smaller at every $\sigma$, so the pull toward $\mu_0$ is stronger.
At $(C, K) = (2, 1)$ the threshold is a single Gaussian buffer draw, so nothing is approximated, $t = \mu_0$, $s_{2,1} = 1$, and the cell coincides with the DPO row exactly.
For very large $C$ the mean grows like $\sqrt{2 \ln(C/K)}$, in the sense that $a_{C,K} / \sqrt{2 \ln(C/K)} \to 1$ as $C \to \infty$ \citep[Ch.~10]{david2003order}.

\emph{Exponential:}
Both constants are closed-form.
The $K$-th largest of $C - 1$ i.i.d.\ $\mathrm{Exp}(1)$ variables has mean $H_{C-1} - H_{K-1}$ and variance $H^{(2)}_{C-1} - H^{(2)}_{K-1}$, with $H^{(2)}_n \eqdef \sum_{i=1}^{n} i^{-2}$ \citep[Ch.~4]{david2003order}, so standardizing gives $a_{C,K} = H_{C-1} - H_{K-1} - 1$ and $s^2_{C,K} = H^{(2)}_{C-1} - H^{(2)}_{K-1}$ at any finite $C$.
The cell plugs these into the Gaussian coefficient, and it stays approximate despite the exact constants because exponential order statistics are more strongly skewed than normal ones, so the Gaussian law of $T$ is coarser here.
At $(2, 1)$ the pair is again $(0, 1)$ which follows DPO.

\emph{Counts:}
As noted before, the standardized shape of $m\,\mathrm{Poi}(\lambda)$ still depends on $\lambda$, so the location-scale step does not hold exactly for this family.
We therefore replace the Poisson laws by their moment-matched normals, $\mathcal{N}(m\lambda, m^2\lambda)$ for the entry and $\mathcal{N}(m\lambda_0, m^2\lambda_0)$ for the buffer, which is accurate when $\lambda$ and $\lambda_0$ are large.
Under this substitution the Gaussian cell applies verbatim, with $\sigma = m\sqrt{\lambda}$ and the Gaussian pair $(a_{C,K}, s_{C,K})$.
Although it is possible to compute the exact moments of the Poisson elite threshold, the cell's error comes from treating $T$ as Gaussian rather than from the two constants, so plugging exact constants into the approximate formula would not make the result exact.
For reference, this would give $a_{C,K} = (\bbE[k_{(K)}] - \lambda_0)/\sqrt{\lambda_0}$, where the mean follows from the tail sum $\bbE[k_{(K)}] = \sum_{j \ge 0} \Pr(\mathrm{Bin}(C-1, q_j) \ge K)$ with $q_j = \Pr(\mathrm{Poi}(\lambda_0) > j)$ and $k_{(K)}$ the $K$-th largest of $C-1$ i.i.d.\ $\mathrm{Poi}(\lambda_0)$ draws.

\textbf{FTW softmax-on-ranks row.}
Start from the utility $\varphi(g) = \bbE_V[e^{U(g)/\tau}]$.
In the continuous columns ties have probability zero, so the grade is the plain quantile $U(g) = F_0(g)$ and $\varphi(g) = e^{F_0(g)/\tau}$.
For Lemma~\ref{ap:lem:ce_recipe}(ii) we differentiate, $\varphi' = \tfrac{f_0}{\tau}\,\varphi$ and $\varphi'' = \big(\tfrac{f_0^2}{\tau^2} + \tfrac{f_0'}{\tau}\big)\varphi$, so the Arrow--Pratt coefficient is $\varphi''/\varphi' = f_0/\tau + f_0'/f_0$ and the lemma gives
$\mathcal{R} = \tfrac{\sigma^2}{2} \big( f_0(\mu)/\tau + (\ln f_0)'(\mu) \big) + o(\sigma^2)$.
We can then insert the PDFs.
\emph{Gaussian:} $(\ln f_0)'(\mu) = -(\mu - \mu_0)/\sigma_0^2$, giving the cell.
\emph{Exponential:} $f_0(\mu) = \theta_0 e^{-\theta_0 \mu}$ and $(\ln f_0)' = -\theta_0$, giving the cell.
\emph{Counts:} the grade is a step function and the expansion cannot be applied directly. 
For the Poisson specifically, however, we can assume the normal limit $\mathcal{N}(m\lambda_0, m^2\lambda_0)$ and reuse the Gaussian result, with the caveat that $\lambda_0$ must be large.
\end{proof}

\paragraph{Discussion.}
Read top to bottom, Table~\ref{appendix:tab:risk_offsets} orders the operators by risk sensitivity, GRPO exactly neutral, the three grade operators boundedly risk-seeking, MNL-exp unboundedly so.
GRPO's utility is affine, so it is exactly neutral in every column, this was also briefly shown empirically in Figure~\ref{fig:tabular_auc}.
It says nothing about learning stability due to exploding variance however.
One takeaway from Prop.~\ref{ap:prop:risk_table} is that the grade-based operators are always boundedly risk-seeking due to being scale agnostic.
For FTW the attractor is the elite threshold $t = \mu_0 + \sigma_0 a_{C,K}$.
Mild filters with $K$ near $C$ have $a_{C,K} \le 0$ and put it at or below the buffer mean $\mu_0$, DPO and the half filters $K = C/2$ have $a_{C,K} = 0$ and put it at the buffer mean exactly, and aggressive filters with $K \ll C$ have $a_{C,K} > 0$ and raise it above.
The pull grows with the entry's own $\sigma$ but saturates at the attractor, so no amount of noise scores an entry above $t$.
MNL-exp uses the full return magnitude and is unboundedly risk-seeking, which also makes it fragile.

The Gaussian cell also predicts the $\sigma$-trap of Figure~\ref{fig:tabular_auc} (left).
So long as both arms remain represented in the buffer, a noisy Gaussian distractor lands above or below the optimal mean with probability $\tfrac12$ as its variance grows.
Therefore, every ordinal comparison converges towards a coin flip and the distilled policy should approach a uniform distribution.
Aggressive filters with $a_{C,K} > 0$ fail before this regime, because the buffer spread $\sigma_0$ grows with the distractor's noise, so already at moderate noise their threshold $t = \mu_0 + \sigma_0 a_{C,K}$ rises above the optimal mean.
This excludes the near-deterministic arm more and more often, and eventually the noisy arm crowds the buffer.
We show this in the supplementary results in Figure~\ref{appendix:figure:sigma_toggle}, where for aggressive $(C, K)$ variants the recovered policy converges to the trap-arm.

We close with limitations of this theoretical section.
Firstly, the Arrow-Pratt expansion behind the FTW softmax-on-ranks row holds in the small-noise regime $\sigma \ll \sigma_0$, which is not typically realistic, whereas the other rows hold at every $\sigma$.
Secondly, the elite threshold $T$ is approximated by a normal to obtain a closed form, its exact law is skewed, so the FTW top-$K$/$C$ cells are exact only in the pairwise Gaussian case.
Thirdly, the buffer marginal is in general a mixture over conditional return laws, in practice these laws are not guaranteed homogenous.
Our single-family assumption on $F_0$ is thus a homogeneity idealization.
Lastly, on sparse or stepwise rewards the grade operators admit no unique certainty equivalent, the $0^\dagger$ cells.
On these discrete rewards they are also jackpot-blind, at a fixed mean $mp$ their weight for a rare large payoff collapses to that of a sure zero as $p \to 0$ (which implies that the jackpot $m$ must grow), whereas MNL-exp instead diverges.

\section{Extended Derivation} \label{appendix:derivation}

\subsection{Preliminaries} \label{app:cai_preliminaries}

Let $\pi_{\theta}$ be a history-dependent parametric policy acting in a partially observable Markov decision process. 
We write the observable history at time $t$ as $H_t = (O_0, A_0, \dots, O_{t-1}, A_{t-1}, O_t)$, and the full observable trajectory as $H_{0:T} = (O_0, A_0, O_1, A_1, \ldots, O_T)$. 
The policy induces a distribution over observable trajectories by marginalizing over latent states,
\begin{align}
    p_{\pi_\theta}(H_{0:T})
    = \int p(S_0)p(O_0 \mid S_0)\prod_{t=0}^{T-1} \pi_\theta(A_t \mid H_t)\, p(S_{t+1} \mid S_t, A_t)\, p(O_{t+1} \mid S_{t+1})\, dS_{0:T}.
\end{align}
Denote the set of admissible policies by $\Pi \eqdef \{\pi \mid \pi : \mathcal{H} \times \Theta \rightarrow \mathcal{P}(\mathcal{A})\}$, so that $\pi_\theta \in \Pi$.

Since rewards depend on the latent state-action process, the return is defined as
\begin{align}
    G(S_{0:T}, A_{0:T-1}) \eqdef \sum_{t=0}^{T} r(S_t, A_t) \in \mathbb{R}.
\end{align}
For notational convenience, we write $G(H_{0:T})$ for the return random variable induced by the latent state-action trajectory underlying $H_{0:T}$, i.e. $G(H_{0:T}) \sim \mathrm{Law}\!\left(\sum_{t=0}^{T} r(S_t,A_t)\,\middle|\, H_{0:T}\right).$
The objective is then
\begin{align}
    \theta^* \eqdef \argmax_\theta \, \mathbb{E}_{p_{\pi_\theta}(S_{0:T},H_{0:T})}\!\left[G(S_{0:T}, A_{0:T-1})\right].
\end{align}

For convenience, in the infinite-horizon setting $T \to \infty$, assume that the transition kernel satisfies $\Pr(S_{t+1}=s_{\mathit{absorb}} \mid S_t, A_t) > 0$ for all $S_t \in \mathcal{S}$, where $s_{\mathit{absorb}}$ is an absorbing state that transitions to itself with probability $1$ and yields zero reward. 
Equivalently, one may work with a discount factor $\gamma \in [0,1)$, which can be interpreted as an effective termination probability $p_{\mathit{term}} = 1-\gamma$.

Define a binary optimality variable $\mathcal{O}_T \in \{0,1\}$, which we use to condition the graphical model at the trajectory level. 
For simplicity, this corresponds to a sparse-reward setting in which rewards are evaluated only on completion. 
Most importantly, the optimality variable depends only on the return, so that
\begin{align}
    \mathcal{O}_T \indep (S_{0:T}, H_{0:T}) \mid G(S_{0:T}, A_{0:T-1}).
\end{align}
We can then perform model selection by maximizing the evidence
\begin{align}
    \max_\theta p_{\pi_\theta}(\mathcal{O}_T = 1)
    =
    \max_\theta \int p_{\pi_\theta}(\mathcal{O}_T = 1, S_{0:T}, H_{0:T})\, dS_{0:T}\, dH_{0:T}.
\end{align}
For brevity, we write $\Pr(\mathcal{O}_T = 1) \equiv \Pr(\mathcal{O}_T)$, since the case $\mathcal{O}_T = 0$ is not of direct interest.

\begin{remark}[title={Remark. Meaning of the Optimality Variable $\mathcal{O}_T$.}]
    The variable $\mathcal{O}_T$ serves as a soft optimality indicator at the trajectory level, and is best understood as a formal device rather than an observed quantity.
    Rather than directly maximizing expected return, we maximize the evidence $p_{\pi_\theta}(\mathcal{O}_T = 1)$, treating $\theta$ as a parameter of the generative model, an empirical Bayes perspective in which the policy is optimized by maximizing the marginal likelihood of optimality.
    The actual values of $\mathcal{O}_T$ are never used directly; at optimality, $\pi_\theta$ would generate only high-return trajectories, so the outcome vector collapses to $[1,1,\ldots,1]$, while a suboptimal policy produces a mixture of zeros and ones reflecting the proportion of good trajectories.
\end{remark}

\paragraph{KL-Regularized Policy Gradient.}
To derive optimization objectives, let $q_\phi(H_{0:T}) \in \Pi$ be a variational policy and lower-bound the log-evidence via Jensen's inequality,
\begin{align}
    \ln p_{\pi_\theta}(\mathcal{O}_T)
    &= \ln \int p(\mathcal{O}_T \mid H_{0:T})\, p_{\pi_\theta}(H_{0:T})\, dH_{0:T} \notag\\
    &\geq \mathbb{E}_{q_\phi}\!\left[\ln p(\mathcal{O}_T \mid H_{0:T})\right]
        - D_{\mathrm{KL}}\!\left(q_\phi(H_{0:T}) \,\|\, p_{\pi_\theta}(H_{0:T})\right)
    \eqdef \mathcal{L}(\theta, \phi).
\end{align}
Assuming an exponential likelihood,
\begin{align}
    p(\mathcal{O}_T \mid H_{0:T}) = \frac{\exp G(H_{0:T})}{Z(\pi_\theta)},
    \qquad
    Z(\pi_\theta) = \mathbb{E}_{p_{\pi_\theta}}\!\left[\exp G(H_{0:T})\right],
\end{align}
and noting that $Z(\pi_\theta)$ is constant with respect to the maximization over $\phi$, we obtain the KL-regularized objective and its gradient,
\begin{align}
    \max_\phi\, \mathcal{L}(\theta, \phi)
    &= \max_\phi\; \mathbb{E}_{q_\phi}\!\left[G(H_{0:T})\right]
        - D_{\mathrm{KL}}\!\left(q_\phi \,\|\, p_{\pi_\theta}\right), \\
    \nabla_\phi\, \mathcal{L}(\theta, \phi)
    &= \mathbb{E}_{q_\phi}\!\left[G(H_{0:T})\, \nabla_\phi \ln q_\phi\right]
        - \nabla_\phi D_{\mathrm{KL}}\!\left(q_\phi \,\|\, p_{\pi_\theta}\right),
\end{align}
which recovers a policy-gradient objective regularized by the KL divergence between the variational policy and the model. 
A locally optimal policy can then be recovered through coordinate ascent by alternating until convergence:
\begin{align}
    \theta^* = \argmax_\theta\, \mathcal{L}(\theta, \phi^*),
    \qquad
    \phi^* = \argmax_\phi\, \mathcal{L}(\theta^*, \phi).
\end{align}
In our setting this can be implemented simply by setting $\theta^* \leftarrow \phi^*$, since both parameterize policies in $\Pi$. 
Under the assumptions that $\mathcal{L}$ is bounded above, each coordinate update is exact, and $G$ is bounded (ensuring $Z(\pi_\theta) < \infty$), each step does not decrease $\mathcal{L}$, usually converging to a local optimum.

\paragraph{Gibbs identity.}
Maximizing the ELBO is equivalent to minimizing the KL divergence to the tilted posterior $q^*$, by the Gibbs identity
\begin{align}
    \mathbb{E}_{q_\phi}[G(H_{0:T})] - D_{\mathrm{KL}}\!\left(q_\phi \,\|\, p_{\pi_\theta}\right)
    &= \ln Z(\pi_\theta) - D_{\mathrm{KL}}\!\left(q_\phi \,\|\, q^*\right),
    \\
    q^*(H_{0:T}) &\eqdef \frac{p_{\pi_\theta}(H_{0:T}) \exp G(H_{0:T})}{Z(\pi_\theta)},
\end{align}
so the evidence bound is tight exactly at $q^*$, provided the policy class contains it.
Maximizing over $\phi$ therefore projects $q^*$ onto the policy class, from which we can observe an alternative optimization strategy.
Draws from $q^*$ are simply draws from $p_{\pi_\theta}$ reweighted, or resampled, by the potential $\exp G$.
The projection then becomes a cross-entropy fit on these tilted samples, which never needs a gradient of the potential.
This duality between the policy gradient and the cross-entropy fit is what motivates FTW's sample-and-project approach.

\subsection{Batch-Level CAI and Ordinal Potentials}
\label{ap:batch_cai}

We now extend the trajectory-level CAI of Appendix~\ref{app:cai_preliminaries} to inference over finite collections of trajectories, conditioned on ordinal feedback. 
This section develops the framework abstractly, subsequent subsections instantiate it with the Plackett--Luce model and multinomial model.

\paragraph{Batch-level ELBO (Eq.~\ref{eq:ell_lowerbound_updated}).}

We modify the base graphical model marginalizing not over the single trajectory $H_T$ but a batch $\mathcal{B}_N = (H_T^1, \dots, H_T^N)$ and the binary $\mathcal{O}_T \in \{0, 1\}$ with an ordinal observation $\mathcal{O}_T \in \mathbb{Z}_+$ or $\mathcal{O}_T \in \mathbb{S}_N$ where $\mathbb{S}_N$ is the symmetric group of size $N$ (overloading notation) supported on rankings or winner indices, the evidence becomes $\ln p_{\pi_\theta}(\mathcal{O}_T) = \ln \mathbb{E}_{p_{\pi_\theta}(\mathcal{B}_N)}[\Phi_T(\mathcal{B}_N)]$, where $p_{\pi_\theta}(\mathcal{B}_N) = \prod_{n=1}^N p_{\pi_\theta}(H_T^n)$ by the i.i.d. batch construction (Section~\ref{section:ftw}). 
Introducing a variational $q \in \Pi$ at the trajectory level and lifting it to the batch as $p_q(\mathcal{B}_N) = \prod_n p_q(H_T^n)$, Jensen's inequality gives
\begin{equation}
    \ln p_{\pi_\theta}(\mathcal{O}_T) \geq \mathbb{E}_{p_q(\mathcal{B}_N)}[\ln \Phi_T(\mathcal{B}_N)] - D_{\mathrm{KL}}\!\left(p_q(\mathcal{B}_N) \,\|\, p_{\pi_\theta}(\mathcal{B}_N)\right).
\label{eq:batch-elbo-full}
\end{equation}
The KL between product measures factorizes as $D_{\mathrm{KL}}(\prod_n p_q \,\|\, \prod_n p_{\pi_\theta}) = N \cdot D_{\mathrm{KL}}(p_q \,\|\, p_{\pi_\theta})$, recovering Eq.~\ref{eq:ell_lowerbound_updated} from the main text. 
The factor $N$ is not a tunable weight but a direct consequence of the i.i.d. batch construction, however, it can be absorbed into the KL coefficient $\lambda$ in Algorithm~\ref{alg:ftw} which is equivalent to adjusting the temperature of the likelihood $\ln \Phi_T(\mathcal{B}_N)$.

\paragraph{Ordinal feedback.}
We consider two cases for ordinal feedback on a batch $\mathcal{B}_N = \{H_{0:T}^{(i)}\}_{i=1}^N$:
\begin{enumerate}
    \item \textbf{Full ranking.} $\mathcal{O}_T = \sigma \in \mathbb{S}_N$ specifies a complete permutation of $\mathcal{B}_N$. 
    The likelihood is the Plackett--Luce (PL) model (\S\ref{app:pl-model}), and methods such as DPO and LiPO operate in this regime.
    \item \textbf{Winner-only.} $\mathcal{O}_T = n \in \{1, \dots, N\} \subset \mathbb{Z}_+$ identifies a single preferred trajectory. 
    The likelihood is the multinomial (MNL) model (\S\ref{app:mnl-model}), which arises as the first-stage marginal of PL, and methods such as GRPO operate in this regime.
\end{enumerate}
Both yield well-defined ordinal CAI problems. 
The substantive question is whether ordinal feedback elicited on a sampled batch $\mathcal{B}_N$ is \emph{globally consistent}.
In other words, whether the preference structure on $\mathcal{B}_N$ agrees with an underlying total order on $\mathcal{H}$. This is established by Luce's choice axiom.

\paragraph{Consistency via Luce's Choice Axiom.}
Both the full-ranking PL likelihood and the winner-only MNL marginal inherit independence of irrelevant alternatives (IIA) from the PL family. 
For any two trajectories $a, b \in \mathcal{B}_N$, define the pairwise preference probability
\begin{equation}
p(a \succ b \mid \mathcal{B}_N) \;\equiv\; p(\mathcal{O}_T = a \mid \mathcal{O}_T \in \{a, b\}, \mathcal{B}_N).
\end{equation}

\begin{proposition}[Luce's choice axiom \citep{luce1959}]
\label{prop:luce}
Under the PL model with positive weights $g$, pairwise preferences are independent of the enclosing batch $\mathcal{B}_N$: for any $a, b \in \mathcal{B}_N$,
\begin{equation}
p(a \succ b \mid \mathcal{B}_N) \;=\; \frac{g(a)}{g(a) + g(b)}.
\end{equation}
\end{proposition}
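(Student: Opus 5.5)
The plan is to compute the conditional probability that defines $p(a \succ b \mid \mathcal{B}_N)$ directly from the first-stage marginal of the PL model. I will write the winner-only (MNL) likelihood as
\begin{align*}
p(\mathcal{O}_T = i \mid \mathcal{B}_N) = \frac{g(i)}{\sum_{k \in \mathcal{B}_N} g(k)}.
\end{align*}
This is the probability that item $i$ is placed first in a PL ranking of $\mathcal{B}_N$. Summing the PL permutation probabilities over all permutations with $\sigma(1)=i$ gives exactly this expression, because the remaining sequential factors form a PL model on $\mathcal{B}_N\setminus\{i\}$ and sum to one. I take this marginalization as the standard first step and state it briefly.

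Next I apply the definition of conditioning on the event $\{\mathcal{O}_T \in \{a,b\}\}$:
\begin{align*}
p(\mathcal{O}_T = a \mid \mathcal{O}_T \in \{a,b\}, \mathcal{B}_N) = \frac{p(\mathcal{O}_T = a \mid \mathcal{B}_N)}{p(\mathcal{O}_T = a\mid \mathcal{B}_N) + p(\mathcal{O}_T = b\mid \mathcal{B}_N)}.
\end{align*}
The events $\{\mathcal{O}_T=a\}$ and $\{\mathcal{O}_T=b\}$ are disjoint for $a \neq b$, and positivity of $g$ makes the denominator strictly positive. I then substitute the MNL expression. The batch normalizer $\sum_{k} g(k)$ appears in both numerator and denominator and cancels, leaving $g(a)/(g(a)+g(b))$. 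This quantity does not depend on the other members of $\mathcal{B}_N$, which is the claimed independence from the enclosing batch.

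As a consistency remark, I will note that the same ratio is what PL assigns to the pairwise event directly. Restricting the PL model to the two-element set $\{a,b\}$ gives the Bradley--Terry probability $g(a)/(g(a)+g(b))$, matching the $N=2$ case of Corollary~\ref{cor:dpo}. I will also note that the probability that $a$ is ranked above $b$ in a full PL permutation equals this same ratio. That second fact needs an extra argument: for example, the Gumbel-max representation $\arg\max_i(\ln g_i + \epsilon_i)$, or an induction over the sequential draws in which items other than $a,b$ are removed without changing the relative odds of $a$ versus $b$.

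The main obstacle is a notational ambiguity. The statement's definition conditions on the winner index, so the cancellation above is essentially the whole proof. If one instead reads $a \succ b$ as relative order in the full ranking, the Gumbel argument is needed. I would present the winner-index computation as the proof and add the Gumbel-max argument as a short remark, so that both readings are covered.
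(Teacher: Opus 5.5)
Your proposal is correct and matches the paper's proof: the paper also conditions the first-stage (MNL) marginal on the event $\{\mathcal{O}_T \in \{a,b\}\}$, uses that $\{\mathcal{O}_T=a\}$ and $\{\mathcal{O}_T=b\}$ are mutually exclusive, and cancels the shared normalizer $Z=\sum_{i} g(H_{0:T}^{(i)})$. Your derivation of the first-stage marginal and your Gumbel-max remark for the full-ranking reading of $a \succ b$ go beyond what the paper writes, but the statement as defined does not need them.
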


\begin{proof}
By Bayes' rule, and using that $\{\mathcal{O}_T = a\}$ and $\{\mathcal{O}_T = b\}$ are mutually exclusive,
\begin{align}
p(\mathcal{O}_T = a \mid \mathcal{O}_T \in \{a, b\}, \mathcal{B}_N)
&= \frac{p(\mathcal{O}_T = a \mid \mathcal{B}_N)}{p(\mathcal{O}_T = a \mid \mathcal{B}_N) + p(\mathcal{O}_T = b \mid \mathcal{B}_N)} \\
&= \frac{g(a) / Z}{g(a)/Z + g(b)/Z}
\;=\; \frac{g(a)}{g(a) + g(b)},
\end{align}
where $Z = \sum_{i=1}^N g(H_{0:T}^{(i)})$ cancels.
\end{proof}

Since the right-hand side is independent of $\mathcal{B}_N$, ordinal feedback elicited on any batch is a consistent marginal of an underlying total ordering on $\mathcal{H}$. This has two operational implications.

\emph{For the PL likelihood (full ranking)}, IIA guarantees that the relative probability of any two rankings $\sigma, \sigma'$ that agree on their top-$k$ elements is independent of the remaining elements.
Thus, subset rankings are consistent permutations of the full-domain ordering, so optimizing on any batch yields a gradient aligned with the full-domain objective.

\emph{For the MNL likelihood (winner-only)}, IIA guarantees that the winner identity is stable regardless of which other trajectories appear in $\mathcal{B}_N$: if $H_{0:T}^{(n)}$ is the best trajectory globally, it wins in any batch containing it. The winner-only model is thus a principled marginal of the full PL model.

Both guarantees mean that Monte Carlo approximation over sampled batches $\mathcal{B}_N^{(1)}, \dots, \mathcal{B}_N^{(M)} \sim p_{\pi_\theta}$ is well-posed for either likelihood choice. 
This justifies the per-minibatch filtering operation of our algorithm FTW (Section~\ref{section:ftw}) since per-batch rankings under either potential are consistent estimators of the full-domain ranking, IIA ensures the elite set is unbiased in expectation.

\subsection{The Plackett--Luce Model}
\label{app:pl-model}
The Plackett--Luce (PL) model defines a probability distribution over rankings of a finite set of alternatives. Given a batch $\mathcal{B}_N = \{H_{0:T}^{(i)}\}_{i=1}^N$, each trajectory is assigned a positive weight $g(H_{0:T}^{(i)}) > 0$ through a weight function $g: \mathcal{H} \to \mathbb{R}_{>0}$ (for instance, $g(H_T^{(i)}) = \exp G(H^i_T)$). 
The probability of observing a complete ranking $\sigma = (\sigma(1), \sigma(2), \dots, \sigma(N))$ is given by sequential elimination,
\begin{equation}
    p(\mathcal{O}_T = \sigma \mid \mathcal{B}_N) \;=\; \prod_{n=1}^N \frac{g\!\left(H_{0:T}^{(\sigma(n))}\right)}{\sum_{k=n}^N g\!\left(H_{0:T}^{(\sigma(k))}\right)}.
    \label{eq:pl-likelihood}
\end{equation}
Here $\sigma(1)$ denotes the top-ranked (winner) trajectory and $\sigma(N)$ the bottom-ranked.
The first-stage marginal, which is the probability that trajectory $n$ is ranked first, follows by summing over all rankings consistent with $\sigma(1) = n$, yielding the familiar softmax form
\begin{equation}
    p(\mathcal{O}_T = n \mid \mathcal{B}_N) \;=\; \frac{g\!\left(H_{0:T}^{(n)}\right)}{\sum_{i=1}^N g\!\left(H_{0:T}^{(i)}\right)}.
    \label{eq:pl-marginal}
\end{equation}
If the generating policy is already optimal: $p(\mathcal{O}_T = n \mid \mathcal{B}_N) = 1/N$, and the ranking is uniform. 
The weight function $g$ is a modeling choice that determines the inductive bias of the likelihood; independent of this choice, the PL model satisfies IIA (Proposition~\ref{prop:luce}), so pairwise preferences are consistent across batches.

\subsection{The Multinomial (Winner) Model}
\label{app:mnl-model}
The winner-only model takes $p(\mathcal{O}_T = n \mid \mathcal{B}_N)$ as the probability that $H_{0:T}^{(n)}$ beats all others in $\mathcal{B}_N$,
\begin{equation}
    p(\mathcal{O}_T = n \mid \mathcal{B}_N) \;\equiv\; p\!\left(H_{0:T}^{(n)} \succ H_{0:T}^{(k)},\ \forall k \neq n \,\middle|\, \mathcal{B}_N\right) \;=\; \frac{g^{(n)}}{\sum_{j=1}^N g^{(j)}},
    \label{eq:mnl-likelihood}
\end{equation}
which is exactly the first-stage PL marginal (Eq.~\ref{eq:pl-marginal}). 
The first-stage marginal of the full PL ranking model coincides with the standalone MNL choice model, so optimizing on winner-only feedback is principled and inherits the global consistency guarantees of PL.

\paragraph{Exchangeability.}

Since the batch construction $p_{\pi_\theta}(\mathcal{B}_N) = \prod_n p_{\pi_\theta}(H_T^n)$ is permutation-invariant, the batch indices $\{1, \dots, N\}$ are a priori exchangeable. Conditioning on a specific winner index $n$ is therefore arbitrary, and we instead work with the average over all $N$ possible winner identities,
\begin{equation}
    \ell(\mathcal{O}_T \mid \mathcal{B}_N) \;\triangleq\; \frac{1}{N}\sum_{n=1}^N p(\mathcal{O}_T = n \mid \mathcal{B}_N),
    \label{eq:mnl-averaged-likelihood}
\end{equation}
which substitutes into the ELBO (Eq.~\ref{eq:batch-elbo-full}) to give the averaged form,
\begin{equation}
    \widehat{\mathcal{L}}(\theta, \phi \mid \mathcal{B}_N) \;=\; \frac{1}{N}\sum_{n=1}^N \mathbb{E}_{p_q(\mathcal{B}_N)}\!\left[\ln p(\mathcal{O}_T = n \mid \mathcal{B}_N)\right] - N \cdot D_{\mathrm{KL}}(p_q \,\|\, p_{\pi_\theta}).
    \label{eq:mnl-averaged-elbo}
\end{equation}
This is the objective from which the per-trajectory MNL policy gradient is derived in \S\ref{app:pg-recovery}.

\subsection{Policy Gradients and Recovery of GRPO and DPO} \label{appendix:derivation:recovery}
\label{app:pg-recovery}

We now derive the policy gradient under each likelihood with the Boltzmann weight function $g(H_{0:T}) = \exp G(H_{0:T})$, and use these gradients to recover GRPO and DPO as special cases.
The discussion below derives the gradient directly from the per-trajectory log-likelihood.
Throughout, we adopt the convention that $\sigma(1)$ denotes the top-ranked (winner) trajectory and $\sigma(N)$ the bottom-ranked.

\paragraph{Policy gradient under the full PL model.}
Plugging the Boltzmann weight into the log-likelihood of a complete ranking $\sigma$,
\begin{equation}
    \ln p(\mathcal{O}_T = \sigma \mid \mathcal{B}_N) \;=\; \sum_{n=1}^N \left[ G^{(\sigma(n))} - C_n(\mathcal{B}_N) \right], \qquad C_n(\mathcal{B}_N) \;\triangleq\; \ln \sum_{k=n}^N \exp G^{(\sigma(k))}.
\end{equation}
Substituting into the ELBO~(\ref{eq:batch-elbo-full}) and applying the log-derivative trick,
\begin{equation}
    \nabla_\phi \mathcal{L}(\theta, \phi \mid \mathcal{B}_N) = \mathbb{E}_{p_q(\mathcal{B}_N)}\!\left[\sum_{n=1}^N \left(G^{(\sigma(n))} - C_n(\mathcal{B}_N)\right) \nabla_\phi \ln p_q(\mathcal{B}_N)\right] - N \nabla_\phi D_{\mathrm{KL}}(q_\phi \,\|\, p_{\pi_\theta}).
\end{equation}
The baselines $C_n(\mathcal{B}_N)$ shrink monotonically down the ranking, with $C_1(\mathcal{B}_N) \geq C_2(\mathcal{B}_N) \geq \cdots \geq C_N(\mathcal{B}_N) = G^{(\sigma(N))}$.
This produces a non-uniform gradient signal across rank positions.
At the bottom ($n = N$), the baseline reduces to $G^{(\sigma(N))}$ itself, so the advantage is exactly zero.
At the top ($n = 1$), the log-sum-exp is dominated by the largest return $G^{(\sigma(1))}$, so the advantage is non-zero but small in magnitude.
Middle-ranked trajectories carry the largest advantages, since their returns are smaller than the log-sum-exp but the log-sum-exp itself remains dominated by the top of the ranking.
Gradient signal is therefore diluted at both extremes and concentrated on the middle of the ranking.

\paragraph{Policy gradient under the MNL model.}
Starting from the averaged ELBO of Eq.~\ref{eq:mnl-averaged-elbo} and plugging in the Boltzmann weight, the per-winner log-likelihood is
\begin{equation}
    \ln p(\mathcal{O}_T = n \mid \mathcal{B}_N) \;=\; G\!\left(H_{0:T}^{(n)}\right) - C(\mathcal{B}_N), \qquad C(\mathcal{B}_N) \;\triangleq\; \ln \sum_{i=1}^N \exp G\!\left(H_{0:T}^{(i)}\right).
\end{equation}
Substituting and applying the log-derivative trick,
\begin{equation}
    \nabla_\phi \mathcal{L}(\theta, \phi \mid \mathcal{B}_N) = \mathbb{E}_{p_q(\mathcal{B}_N)}\!\left[\frac{1}{N}\sum_{n=1}^N \left(G\!\left(H_{0:T}^{(n)}\right) - C(\mathcal{B}_N)\right) \nabla_\phi \ln p_q(\mathcal{B}_N)\right] - N \nabla_\phi D_{\mathrm{KL}}(q_\phi \,\|\, p_{\pi_\theta}).
    \label{eq:mnl-grad}
\end{equation}
Unlike typical policy gradient methods where the baseline allows advantages of both signs, our derivation yields $C(\mathcal{B}_N) \geq G(H_{0:T}^{(n)})$ for all $n$.
All advantages are therefore non-positive, so the learning signal operates through repulsion, with the KL term anchoring $q_\phi$ to $p_{\pi_\theta}$.
In contrast to the full PL gradient, all $N$ items share a single baseline $C(\mathcal{B}_N)$, so the relative magnitudes of the per-trajectory advantages are determined entirely by the spread of returns rather than by rank position.

\paragraph{Recovery of GRPO.}
Eq.~\ref{eq:mnl-grad} is structurally close to GRPO \citep{shao2024deepseekmath} but differs in two ways.
The baseline is the log-sum-exp $C(\mathcal{B}_N)$ rather than the group mean $\mu_G = \frac{1}{N}\sum_i G^{(i)}$, and the KL coefficient carries an explicit factor of $N$.
Both differences are addressed by results already established.

\begin{itemize}
    \item \textbf{Baseline.}
    Proposition~\ref{prop:grpo} (Appendix~\ref{appendix:proofs:rao_blackwell}) shows that the per-trajectory coefficient $\alpha_n = \ln g_n - \ln \sum_k g_k$ admits a Jensen upper bound.
    The right-hand side of this bound is exactly GRPO's group-mean advantage $G(H_T^n) - \frac{1}{N}\sum_k G(H_T^k)$.
    Applying this bound to Eq.~\ref{eq:mnl-grad} replaces $C(\mathcal{B}_N)$ with $\mu_G + \ln N$ and yields GRPO's advantage.
    The Jensen slack is precisely the risk bonus profiled in Proposition~\ref{prop:risk}.
    \item \textbf{KL scaling.}
    The factor $N$ on the KL term is a consequence of the i.i.d. batch construction (\S\ref{ap:batch_cai}).
    It absorbs into the KL coefficient $\lambda$, equivalent to tuning GRPO's $\beta$.
\end{itemize}

A residual structural difference is that GRPO assumes trajectories in $\mathcal{B}_N$ share a common prompt prefix, corresponding to branched rollouts from a single state.
Our derivation treats trajectories as fully independent draws from $q_\phi$, which is strictly weaker.
The $\ln N$ offset between the log-sum-exp and group-mean baselines is independent of $\phi$ and does not affect the critical points of the optimization.

\paragraph{Recovery of DPO.}
At $N = 2$, the full PL and winner-only likelihoods coincide, since there is only one ranking consistent with trajectory 1 winning.
The shared log-likelihood reduces to a single pairwise comparison,
\begin{equation}
    \ln p(\mathcal{O}_T = \sigma \mid \mathcal{B}_N)\big|_{N=2} \;=\; \ln p(\mathcal{O}_T = n \mid \mathcal{B}_N)\big|_{N=2} \;=\; \ln \frac{\exp G^{(1)}}{\exp G^{(1)} + \exp G^{(2)}},
    \label{eq:pl-mnl-coincide}
\end{equation}
which is the Bradley--Terry log-likelihood.
Under DPO's own reparameterization $g(H_T) = \exp(s_\theta(H_T)/\beta)$ with $s_\theta = \log \pi_\theta / \pi_{\mathrm{ref}}$, the right-hand side becomes $\ln \sigma\!\left(\frac{1}{\beta}(s_\theta^{(1)} - s_\theta^{(2)})\right)$, recovering DPO's loss.
The remaining bookkeeping concerns the role of the reparameterization, the factor of 2 absorbed into the KL coefficient, and the identification of $\pi_\theta$ with $\pi_{\mathrm{ref}}$ via the EM coordinate-ascent argument of Appendix~\ref{app:cai_preliminaries}.

\paragraph{Summary.}
Our derivation provides a holistic account of two prominent RFT objectives.
Both GRPO and DPO optimize a KL-regularized ELBO over trajectory batches under a Boltzmann likelihood.
They differ in batch size ($N = 2$ vs. $N > 2$) and choice of baseline (group-mean Jensen surrogate vs. exact pairwise normalizer).
Within this framework, the apparent diversity of recent preference-optimization methods reduces to modeling choices on $\ell$, $g$, $\beta$, and the projection step, rather than to fundamentally different algorithmic principles.
We do not claim a complete reduction of all such methods here.
The framework is permissive enough to accommodate them, but the substantive recovery results are GRPO and DPO.

\section{Belief-Policy Mirror Descent under Batch CAI}
\label{appendix:belief}

This section describes our graphical model extension for the contextual-bandit study (see also Appendix~\ref{appendix:experimental_setup:environments}).
We extend our control-as-inference formulation for batch potentials to deal with uncertainty given past data, and use our FTW approach to perform mirror descent in policy-space with a Gaussian belief.
This structure enables closed form policy iteration, where each mirror-descent step extracts maximum value from past data given what the learning operator allows.

\paragraph{Setup.}
We use an empirical Bayes type approach to fit a batch version of the variational distribution $q(a, W)$ to approximate our batch-CAI posterior $p(a_t | H_{1:t-1}, \mathcal{O}_{t:T})$ from Eq.~\eqref{eq:ell_lowerbound_updated}.
We write the policy as a linear softmax over bias-augmented features\footnote{The features are defined in Appendix~\ref{appendix:experimental_setup:environments} as part of the environment description.} $\bar\phi(x) = (1, \tilde\phi(x)) \in \mathbb{R}^{d+1}$ that depend on some context $x \in \mathcal{X}$, that is $\pi_W(a \mid x) \propto \exp\bigl((W\bar\phi(x))_a/\tau\bigr)$ with $W \in \mathbb{R}^{|\setA| \times (d+1)}$ and temperature $\tau$.
Then, $\bar q$ represents the posterior predictive $\bar q(a_t \mid x_t, H_{1:t-1}) = \bbE_{q(W\mid H_{1:t-1})}[\pi_W(a_t \mid x_t)]$ where $q(W | H_{1:t-1})$ is the variational belief state given a past history of interactions $H_{1:t-1} = \{(x_i, a_i, r_i)\}_{i=1}^{t-1} \cup \{x_t\}$ for some reward $r_i$ (see also Section~\ref{section:background} from the main text).
Thus, sampling from $\bar q$ works through posterior sampling where we first draw a hypothesis $W$ from the belief $q(W_t\mid H_{1:t-1})$, and then draw an action $a_t$ from $\pi_{W_t}$ given hypothesis $W_t$ and context $x_t$.
Write the batch $\setB_N = \bigl((x_1, a_1),\dots,(x_N, a_N)\bigr)$ as some sample originating from one proposal policy over multiple contexts, or from multiple actions on the same context.
This gives the batch joint $\bar q(\setB_N \mid \cdot ) = \prod_i \bar q(a_i \mid x_i, \cdot)p(x_i)$ where `$\cdot$' implies any history and $p(x_i)$ is some arbitrary context-process.
We use a simple Gaussian belief for $q(W \mid H_{1:t-1})$ that factorizes independently over the actions $q(W) \eqdef \prod_a \mathcal{N}(w_a; \mu_a, \Lambda_a^{-1})$ such that epistemic uncertainty over actions is perfectly captured but there is still generalization across contexts.
We initialize each $w_a$ at $\mu_a = 0$, $\Lambda_a = I$, which induces a uniform $\bar q$ since $\text{Law}(W_0)$ induces exchangeability across arms.

\paragraph{The evidence lower-bound splits in two penalties.}
Applying Eq.~\eqref{eq:ell_lowerbound_updated} to the joint $q(\setB_N, W) = q_W(\setB_N) q(W)$ against the reference $p(\setB_N, W) = \pi_W(\setB_N) p(W)$, and splitting up the divergence gives the conditional ELBO, 
\begin{align}
    \ln p_\pi(\mathcal{O}_{t} \mid H_{<t}) &= \ln \mathbb{E}_{p_\pi(\mathcal{B}_N, W_t \mid H_{<t})}\Phi_t(\mathcal{B}_N) \ge \mathbb{E}_{p_q(\mathcal{B}_N, W_t \mid H_{<t})}[\ln \Phi_t(\mathcal{B}_N)] - \KL(p_q \Vert p_\pi)
    \\
    \mathcal{L}(q, \pi; H_{<t}) &= \mathbb{E}_{q(W_t \mid H_{<t})} \left[\mathbb{E}_{q(\mathcal{B}_N \mid x_t, W_t)}[\ln \Phi_t(\mathcal{B}_N)] - N \cdot \KL(q_W \Vert \pi_W)\right] \nonumber
    \\
    &\hspace{10em}  - \KL(q(W_t \mid H_{<t}) \Vert p(W_t \mid H_{<t}))
    \label{eq:elbo_split}
\end{align}
The history $H_{<t}$ conditioning is important to emphasize since the contextual bandit assumes stationary transition dynamics, i.e., $x_{t+1} \indep x_t$.
This setting implies episodic control for the actions $\mathcal{B}_N$.
As a result, the optimality variables $\mathcal{O}_{t+1} \indep \mathcal{O}_t$ so long as we also assume that the hypotheses $W_t \sim q$ are also independent of $\mathcal{O}_t, \forall t$.
This assumption is reasonable in a typical Bayesian filtering setting, it's analogous to assuming that the dynamics of the world (world-model) are independent of rewards you may try to maximize. 
The resulting lower-bound $\mathcal{L}(q, \pi; H_{<t})$ then contains the typical potential and policy KL-term that tilts the prior policy $\pi$ towards the batch-potential at a fixed hypothesis $W_t$.
The addition is the belief term that accumulates evidence about which hypotheses are plausible. 
We will optimize $q$ per factor by coordinate ascent.

\paragraph{Ascent in $q(\setB_N \mid W)$.}
The joint $q(\mathcal{B}_N, W)$ splits into two factors for coordinate ascent, the policy $q_W$ sits in an inner-optimization which is what we complete first.
Assume we obtain some history $H_{<t}$ which we use to sample a hypothesis $W_t \sim q(W_t \mid H_{<t})$, then write the inner-ELBO as $\mathcal{L}(q, \pi; x_t)$.
We solve the inner-optimization and explicitly write it as a function of $W_t$,
\begin{align}
    q^*(\setB_N \mid W) &\eqdef \frac{\pi_W \Phi}{Z_N(W)},
    \qquad
    Z_N(W) \eqdef \bbE_{\pi_W}[\Phi],
    \\
    \mathcal{L}(q_W, \pi_W; x_t) &= \bbE_{q(\setB_N \mid W)}\bigl[\ln \Phi\bigr] - N \cdot \KL\bigl(q_W(a_t \mid x_t) \,\|\, \pi_W(a_t \mid x_t) \bigr)
    \\
    &=\; \bbE_{q(\setB_N \mid W)}\Bigl[\ln \frac{\pi_W \Phi}{q(\setB_N \mid W)}\Bigr]
    \;=\; \bbE_{q(\setB_N \mid W)}\Bigl[\ln \frac{Z_N(W)\, q^*}{q(\setB_N \mid W)}\Bigr]
    \\
    &=\; \ln Z_N(W) \;-\; \KL\bigl(q(\setB_N \mid W) \,\|\, q^*\bigr)
    \\
    &\implies\; \max_{q(\setB_N \mid W)} \mathcal{L}(q_W, \pi_W; x_t) \;=\; \ln Z_N(W), \quad \text{at } q(\setB_N \mid W) = q^*.
    \label{eq:gibbs_identity}
\end{align}
where the first line again uses the equivalence of Eq.~\ref{eq:posterior_policy_is_variational_optimization}.
This equivalence of ELBO maximization and analytical rewriting in terms of the posterior makes the last step seem somewhat trivial, but the important part is to write the optimization as a function of $W_t$ which we need to perform ascent for the variational belief.
This function is equal to the policy's partition function.

The rewriting is valid whenever $0 < Z_N(W) < \infty$, which holds even for potentials that assign zero to some batch-items (e.g., the tournament potential of Eq.~\eqref{eq:probability_of_tourney_win} for some choices of $C$ and $K$).
Such items receive $q^*_{\text{non-elite}} = 0$, which is fine as long as the per batch partition gives $Z_N(W) > 0$.

\paragraph{Ascent in $q(W)$.} 
With the E-step solved, Eq.~\eqref{eq:elbo_split} reduces to a belief-optimization: maximize $\bbE_{q(W_t \mid H_{<t})}[\ln Z_N(W_t)] - \KL(q(W_t \mid H_{<t}) \,\|\, p(W_t \mid H_{<t}))$ over the belief.
This maximization recovers the underlying objective,
\begin{align}
    &\bbE_{q(W_t \mid H_{<t})}\bigl[\ln Z_N(W_t)\bigr] - \KL\bigl(q(W_t \mid H_{<t}) \,\|\, p(W_t \mid H_{<t})\bigr) \nonumber
    \\
    &\hspace{6em} \;=\; \ln p_\pi(\setO_t \mid H_{<t}) \;-\; \KL\bigl(q(W_t \mid H_{<t}) \,\|\, p(W_t \mid \setO_t, H_{<t})\bigr).
    \label{eq:elbo_gap}
\end{align}
In other words, the belief-optimization minimizes divergence to the tilted Bayesian posterior.
Therefore, if we look at the role of the posterior $p(W_t \mid \setO_t, H_{<t})$ we can conclude that coordinate ascent in the belief $q(W_t)$ amounts to variational Bayesian filtering.
To make this optimization tractable, we start with a simple uninformative prior $q(W_0)=p(W_0)$ and perform iterative coordinate ascent (expectation-maximization).
We iterate the ascent step for the belief $q_{t+1}$ by setting the reference belief as the previous belief $q_t$ and writing the loss as $\ell_t(W) \eqdef -\ln Z_{N,t}(W)$.
This turns each ascent step into a proximal optimization step,
\begin{equation}
    q_{t+1} \;=\; \argmax_{q}\; \Bigl\{ \bbE_{q}\bigl[\ln Z_{N,t}\bigr] - \KL(q \,\|\, q_t) \Bigr\}
    \;=\; \argmin_{q}\; \Bigl\{ \bbE_{q}\bigl[\ell_t\bigr] + \KL(q \,\|\, q_t) \Bigr\} .
    \label{eq:belief_prox}
\end{equation}
stated equivalently as a minimization to match mirror-descent framing (which we need later for the implementation).
When $q$ is unrestricted, Eq.~\eqref{eq:belief_prox} has a closed-form solution, $q_{t+1} \propto q_t \, Z_{N,t}$, which uses Bayes' rule to update the previous belief with the likelihood of $\setO_t$.
This unrestricted closed-form solution is what motivates the variational coordinate-ascent update rule.

\paragraph{Solving Eq.~\eqref{eq:belief_prox} for a Gaussian.}
We will instead use a Gaussian belief for the variational posterior $q(W_t \mid H_{<t}) = \prod_a \mathcal{N}(w_a; \mu_{a,t}, \Lambda^{-1}_{a,t})$ as in the Setup paragraph, for this we first write the Gaussian in terms of its exponential family form with sufficient statistics $T(w) = (w, ww^\top)$.
Expanding the log-density shows what this form is,
\begin{equation}
    \ln \mathcal{N}(w; \mu, \Lambda^{-1})
    = w^\top \Lambda \mu \;-\; \tfrac12 w^\top \Lambda w \;+\; \mathrm{const}
    = \bigl\langle \Lambda\mu,\, w \bigr\rangle + \bigl\langle -\tfrac12\Lambda,\, ww^\top \bigr\rangle + \mathrm{const},
    \label{eq:expfam_expand}
\end{equation}
a linear function of the two statistics.
The coefficients multiplying the statistics $w$ are called the natural parameters $\nu$, and the means of $w$ under $q$ are called the expectation parameters $m$,
\begin{equation}
    \nu = \bigl(\Lambda\mu,\; -\tfrac{1}{2}\Lambda\bigr),
    \qquad
    m = \bigl(\bbE_q[w],\, \bbE_q[ww^\top]\bigr) = \bigl(\mu,\; \Lambda^{-1} + \mu\mu^\top\bigr),
    \label{eq:coords}
\end{equation}
Either pair identifies the same Gaussian and their normalizer links them.
If we write $A(\nu)$ for the log of the normalizer as a function of $\nu$ (i.e., the log-partition function of the Gaussian under the natural parametrization), then we can use a standard exponential-family identity to obtain $m = \nabla_\nu A(\nu)$.

This coordinate-system makes the proximal optimization step of Eq.~\eqref{eq:belief_prox} simple to perform for two reasons.
First, between two Gaussians the KL penalty equals the Bregman divergence of $A$ evaluated at their natural parameters.
Eq.~\eqref{eq:belief_prox} thus takes the standard form of a mirror-descent step, which is gradient descent with the squared Euclidean penalty to some reference replaced by the Bregman divergence, here that of $A$ \citep[Sec.~2.2]{khan_bayesian_2023} to $\nu_t$.
Second, gradients with respect to $\nu$ reduce to gradients with respect to $m$, which we can easily compute.
Since all terms in Eq.~\eqref{eq:belief_prox} depend on the belief only through $\nu$, we can differentiate the objective in $\nu$.

Completing this gives $\nabla_\nu \KL(q_\nu \,\|\, q_{\nu_t}) = \nabla^2 A(\nu)\,(\nu - \nu_t)$ for the penalty, and the chain rule through $m = \nabla A(\nu)$ gives $\nabla_\nu \bbE_{q_\nu}[\ell_t] = \nabla^2 A(\nu)\, \nabla_m \bbE_{q_\nu}[\ell_t]$ for the loss (negative log-likelihood).
Setting the sum of the two gradients to zero gives $\nabla^2_\nu A(\nu)\,\bigl(\nu - \nu_t + \nabla_m \bbE_{q_\nu}[\ell_t]\bigr) = 0$.
We can then use another exponential-family identity that gives the second-order derivative of $A$ as the covariance of the sufficient statistics, $\nabla^2_\nu A = \mathrm{Cov}_{q_\nu}[T(w)]$, which equals the Fisher information in natural coordinates.
Since a covariance is positive definite it is invertible, so multiplying the stationarity condition by $(\nabla^2_\nu A)^{-1}$ leaves $\nu - \nu_t + \nabla_m \bbE_{q_\nu}[\ell_t] = 0$, and thus,
\begin{align}
    \nu_{t+1} &= \nu_t - \nabla_m \bbE_q[\ell_t],
    \quad\text{i.e.,} \nonumber
    \\
    (\Lambda\mu)_{t+1} &= (\Lambda\mu)_t - \nabla_{m_1}\bbE_q[\ell_t],
    \qquad
    \bigl(-\tfrac12 \Lambda\bigr)_{t+1} = \bigl(-\tfrac12 \Lambda\bigr)_t - \nabla_{m_2}\bbE_q[\ell_t].
    \label{eq:belief_natgrad}
\end{align}
Since $\nabla_m \bbE_{q_\nu}[\ell_t]$ depends on $\nu$, stationarity puts $\nu_{t+1}$ on both sides (cross-dependence/ stationary-point).
The implementation instead evaluates the gradient at the previous belief $q_{\nu_t}$.

Finally, we need to express $\nabla_m \bbE_q[\ell_t]$ in terms of $(\mu, \Lambda)$ to unpack the learning rule.
Inverting Eq.~\eqref{eq:coords} gives $\mu = m_1$ and $\Lambda^{-1} = m_2 - m_1m_1^\top$, so the chain rule yields $\nabla_{m_1}\bbE_q[\ell_t] = \nabla_\mu \bbE_q[\ell_t] - 2(\nabla_{\Lambda^{-1}} \bbE_q[\ell_t])\,\mu$ and $\nabla_{m_2}\bbE_q[\ell_t] = \nabla_{\Lambda^{-1}} \bbE_q[\ell_t]$.
Substituting into Eq.~\eqref{eq:belief_natgrad}, the second component gives $\Lambda_{t+1} = \Lambda_t + 2\nabla_{\Lambda^{-1}} \bbE_q[\ell_t]$.
The first component gives $(\Lambda\mu)_{t+1} = \Lambda_t\mu_t - \nabla_\mu \bbE_q[\ell_t] + 2(\nabla_{\Lambda^{-1}} \bbE_q[\ell_t])\,\mu_t = \Lambda_{t+1}\mu_t - \nabla_\mu \bbE_q[\ell_t]$, where the last equality collects the two terms in $\mu_t$ into the new precision.
Then, the theorems of \citet{bonnet_transformations_1964} and \citet{price_useful_1958} convert the parameter gradients into expectations of the derivatives of the loss with respect to the arm's row $w$, $\nabla_\mu \bbE_q[\ell_t] = \bbE_q[\nabla_w\ell_t]$ and $\nabla_{\Lambda^{-1}} \bbE_q[\ell_t] = \tfrac12\bbE_q[\nabla^2_w\ell_t]$, which yields
\begin{equation}
    \boxed{\;
    \Lambda_{a,t+1} = \Lambda_{a,t} + \bbE_{q}\bigl[\nabla^2_{w_a} \ell_t\bigr],
    \qquad
    \mu_{a,t+1} = \mu_{a,t} - \Lambda_{a,t+1}^{-1}\,\bbE_{q}\bigl[\nabla_{w_a} \ell_t\bigr],
    \;}
    \quad \forall a \in \mathcal{A},
    \label{eq:belief_update}
\end{equation}
the Bayesian learning rule of \citet[Sec.~1.3.2]{khan_bayesian_2023} derived through our bound, matching their online Newton update with unit learning rate and, as in their conjugate-Bayes setting (their Sec.~5.1), the previous belief as the prior.
The increments to the precision are positive semi-definite (Eq.~\eqref{eq:ce_hessian}), so the precision only accumulates and the effective step $\Lambda_{t+1}^{-1}$ anneals as evidence arrives.

Note that the natural-parameter derivation placed the updated precision $\Lambda_{t+1}$ in front of the mean update by itself, so the mean step accounts for the most recent evidence.
This would not naturally occur if we derived the update through $(\mu, \Lambda)$ directly.
It is also the correct choice, since with $\Lambda_{t+1}$ the update reproduces the exact Gaussian posterior whenever the loss is quadratic in $W$, and with $\Lambda_t$ it does not.
The two derivatives of $\ell_t$ we need for computing Eq.~\eqref{eq:belief_update} are derived next.

\paragraph{Cross-entropy loss.}
Differentiating the log-partition with respect to $W$, we obtain,
\begin{equation}
    \nabla_W \ell_t
    = -\,\nabla_W \ln Z_{N,t}(W)
    = -\,\frac{\bbE_{\pi_W}\bigl[\Phi\, \nabla_W \ln \pi_W(\setB_N)\bigr]}{Z_{N,t}(W)}
    = -\,\bbE_{q^*}\bigl[\nabla_W \ln \pi_W(\setB_N)\bigr],
    \label{eq:logZ_grad}
\end{equation}
with $\ln \pi_W(\setB_N) = \sum_i \ln \pi_W(a_i \mid x_i)$ and using the definitions of $q^*$ and $Z_N(W)$ from Eq.~\eqref{eq:gibbs_identity}.
This is the gradient of the batch cross-entropy $-\bbE_{q^*}[\ln \pi_W(\setB_N)]$ against the fixed target $q^*$.
The tilt $\Phi$ also doesn't need numerical evaluation and sampling is sufficient.
The per arm gradients $w_a$ split up from $W$. 

Then we derive the Hessian needed for the precision update of Eq.~\eqref{eq:belief_update},
\begin{equation}
    \nabla^2_W \ell_t
    = \bbE_{q^*}\bigl[\nabla^2_W \bigl(-\ln \pi_W(\setB_N)\bigr)\bigr]
    \;-\; \bbE_{q^*}\bigl[(\nabla_W \ln q^*)\, (\nabla_W \ln \pi_W(\setB_N))^\top\bigr].
    \label{eq:exact_hessian}
\end{equation}
We use the generalized Gauss-Newton (GGN) approximation by keeping the first term only and dropping the second, so we essentially drop the derivative of $q^*$.
This enforces that the second-order term is positive semi-definite, which cannot be guaranteed with the dropped term included.
In other words, we differentiate as if $q^*$ were fixed even though it technically varies with $W$.

The GGN term is available per arm in closed form for the softmax policy.
Writing $-\ln \pi_W(a_i \mid x_i) = -z_{i,a_i} + \mathrm{logsumexp}(z_i)$ with logits $z_i = W\bar\phi(x_i)/\tau$ linear in $W$, the chain rule gives,
\begin{equation}
    \nabla^2_{w_a w_b}\bigl(-\ln \pi_W(a_i \mid x_i)\bigr)
    \;=\; \bigl(\delta_{ab}\, \pi_W(a \mid x_i) - \pi_W(a \mid x_i)\, \pi_W(b \mid x_i)\bigr)\, \bar\phi(x_i)\bar\phi(x_i)^\top / \tau^2,
    \label{eq:hessian_blocks}
\end{equation}
independent of the sampled action $a_i$.
The factorized belief holds one precision per arm, so Eq.~\eqref{eq:belief_update} uses only the $a = b$ blocks, summed over the batch,
\begin{equation}
    \nabla^2_{w_a} \ell_t \;=\; \textstyle\sum_{i=1}^{N}\, \pi_W(a \mid x_i)\bigl(1 - \pi_W(a \mid x_i)\bigr)\, \bar\phi(x_i)\bar\phi(x_i)^\top / \tau^2,
    \label{eq:ce_hessian}
\end{equation}
which with non-negative coefficients guarantees precision increments of Eq.~\eqref{eq:belief_update} that are positive semi-definite.

\paragraph{FTW's variational optimization cannot condition on hypotheses.}
The derivation above solves the inner optimization for $q^*$ separately at each hypothesis $W_t$, which is what makes $\ln Z_N(W)$ a function of $W$.
Our implementation for FTW however uses a replay buffer with $(x, a, r)$ tuples whose generating hypotheses are implicitly marginalized out, and tournament selection ranks by observed reward.
The tilt the implementation samples is therefore
\begin{equation}
    \tilde q^*(\setB_N) \;\propto\; \Phi(\setB_N)\; \bbE_{q(W)}\bigl[\pi_W(\setB_N)\bigr],
    \label{eq:mixture_tilt}
\end{equation}
the potential applied to the belief-mixture of policies: each step pushes actions rolled out from a fresh posterior draw $W \sim q_t$ into the buffer, so uniform draws from the buffer sample the mixture over recent beliefs, and keeping tournament winners tilts those samples by $\Phi$ without evaluating it.
This tilt coincides with the pointwise $q^*(\cdot \mid W)$ as the belief concentrates, and the update of Eq.~\eqref{eq:belief_update} is unchanged.
The restriction only affects the variational gradient on the policy and the belief step itself is still solved exactly as in Eq.~\eqref{eq:belief_update}, just with a different tilt.
It also strengthens the Gauss-Newton choice in Eq.~\eqref{eq:exact_hessian}, since $\tilde q^*$ carries no $W$ at all, the GGN is the exact Hessian.

\paragraph{How GRPO, DPO, and FTW instantiate the Hessian.}
The three algorithms of main consideration in this manuscript correspond to different choices of the potential $\Phi$ through our theoretical framework.
Within this construction, only GRPO cannot be interpreted as a probabilistic graphical model, due to the Jensen upper bound of Proposition~\ref{prop:grpo}.
This paragraph briefly summarizes how each potential produces its Hessian, and what decides it is not the potential itself but the route by which it enters the belief update: through the sampling of $q^*$ alone, through a closed-form loss that itself depends on $W$, or through a bound that admits no likelihood.
Throughout we apply the Gauss-Newton approximation, which guarantees positive semi-definite precision increments.

\emph{Filtered cross-entropy.} 
The potential $\Phi$ is the elite-filtering of Eq.~\eqref{eq:probability_of_tourney_win}, and FTW follows the derivation above literally: the potential enters only through the sampling of $q^*$.
The loss is therefore the cross-entropy against $q^*$ regardless of which $\Phi$ is in force, its exact Hessian is Eq.~\eqref{eq:ce_hessian}, and the Gauss-Newton step is FTW's only approximation.

\emph{Bradley-Terry.} 
The potential compares one $|\setB_N| = 2$ pair through the Plackett-Luce potential of Corollary~\ref{cor:dpo}, but DPO does not take the route above: its potential is available in closed form and depends on $W$ itself, so DPO descends it directly rather than sampling $q^*$.
That loss is a log-sigmoid of the delta, $-\ln \sigma(\beta\Delta)$ with $\Delta = \ln \pi_W(a_w \mid x_w) - \ln \pi_W(a_l \mid x_l)$ the log-probability gap between winner and loser (the uniform reference policy cancels).
Since $\Delta$ is nonlinear in $W$, its exact Hessian is indefinite, and the Gauss-Newton step keeps only the positive semi-definite part, making the belief the posterior of the Bradley-Terry model linearized about the current mean \citep{immer2021improving}.

\emph{Centred advantage.} 
Proposition~\ref{prop:grpo} explains why this case is different.
The exact Rao-Blackwellized coefficients $\alpha_n = \ln g_n - \ln\sum_k g_k$ are log-probabilities, hence $\alpha_n \le 0$, and the resulting loss is a non-negative combination of cross-entropies with a positive semi-definite Hessian.
The Jensen bound of Proposition~\ref{prop:grpo} replaces them by centred advantages $A_n = G_n - \tfrac{1}{N}\sum_k G_k$ of mixed sign, and as a consequence the Hessian directly becomes indefinite even for the GGN case.
Thus, we hold $\Lambda \equiv \Lambda_0$ and update only the mean, $\mu_{t+1} = \mu_t - \eta \Lambda_0^{-1}\bbE_q[\nabla_W\ell_t]$.
This mean-only update with a swept step $\eta$ is plain GRPO, and we additionally implement two controls.

These two controls are: GRPO-NPG which replaces the typical gradient-descent step by a natural-gradient preconditioner from the current batch, $\mu_{t+1} = \mu_t - (F_t + \Lambda_0)^{-1} \bbE_q[\nabla_W \ell_t]$ with $F_t$ the per-arm softmax Fisher of Eq.~\eqref{eq:ce_hessian}.
Second, GRPO-ONG accumulates this same Fisher into $\Lambda$ exactly as Eq.~\eqref{eq:belief_update} would, i.e., we heuristically insert the GRPO loss into FTW's belief learning rule.

\section{Experimental Setup}  
\label{appendix:experimental_setup}

\subsection{Hardware}
\label{appendix:experimental_setup:hardware}

Experiments were conducted across two cloud providers.
Bandit, Brax, and Search-R1 experiments ran on Google Cloud Platform (GCP) using \texttt{a2-highgpu-1g} (1$\times$ NVIDIA A100-40GB, 12 vCPUs, 85\,GB RAM, Intel Xeon Cascade Lake), \texttt{a2-ultragpu-1g} (1$\times$ NVIDIA A100-80GB, 12 vCPUs, 170\,GB RAM, Intel Xeon Cascade Lake), and \texttt{a3-highgpu-1g} (1$\times$ NVIDIA H100-80GB, 26 vCPUs, 234\,GB RAM, Intel Xeon Sapphire Rapids), each with 500\,GB SSD persistent disk.
The 40GB A100s were sufficient only to host the Search-R1 retrieval server (our own simplified faiss-index implementation to sidestep a bug in the base implementation that caused VRAM overheads), all other agents ran on the 80GB A100s or the H100.
Sokoban experiments required a larger vision-language model and were run on Lambda Cloud using a 2$\times$ NVIDIA H100-80GB instance (52 vCPUs, 442\,GB RAM, Intel Xeon Platinum 8480+, 5.5\,TB SSD).
The Search-R1 seed repetitions ran on a Lambda 8$\times$ NVIDIA A100-80GB instance, one run per GPU with a shared retrieval server.

Reproducing the experiments in this paper requires roughly 6{,}000 GPU-hours, of which Sokoban (4 algorithms, 5 seeds, about 3 days per run on 1 to 4 GPUs) accounts for about 3{,}400, Brax (3 algorithms, 10 seeds, about 2 days per run) for about 1{,}400, and Search-R1 for the remainder, corresponding to an estimated operating cost of \$25{,}000 to \$35{,}000 (USD) at on-demand rates.
This excludes setup, algorithm development, and debugging, which vary substantially with practitioner experience and were not separately tracked.

\subsection{Compute and Memory Footprint}
\label{appendix:experimental_setup:compute}

We briefly compare the computational footprint of FTW against GRPO, PPO, and REINFORCE++ on three items, data generation, the learning update, and memory.
For data generation, all four methods consume the same number of rollouts per step in our Sokoban runs, but GRPO must obtain them as $n$ rollouts of the same prompt, which requires resetting the environment to identical initial states, whereas FTW, PPO, and REINFORCE++ draw one rollout per prompt.
Prefix caching can offset part of the group overhead, but the duplication requirement remains.
The learning updates differ negligibly in cost apart from the critic.
FTW performs a cross-entropy loss on the filtered samples plus a KL penalty, GRPO and REINFORCE++ add PPO-style clipping and advantage normalization, and PPO additionally trains a second model of the same size as the policy, which incurs a further forward and backward pass per update.
In our Sokoban runs this meant 4 GPUs for PPO against 2 for GRPO and REINFORCE++ and 1 to 2 for FTW.
In terms of storage, FTW is the most expensive because of the replay buffer of past rollouts, 2{,}000 trajectories on Sokoban and 10{,}000 on Search-R1, held in CPU memory, whereas the other three methods are on-policy.
The overhead depends on the data modality, text environments such as Search-R1 have a far smaller footprint per trajectory than image-based ones such as Sokoban.
GRPO extended with a replay buffer, as done by \citet{arnal2026efficientrltrainingllms}, would match FTW's memory footprint, leaving the group duplication as the remaining difference.
REINFORCE++ is the cheapest on every axis and is also the method that lags in Figure~\ref{fig:llm_post_training}, which is what motivates paying for one of the three variance-reduction sources.

\subsection{Software Stack}
\label{appendix:experimental_setup:software}

Our experiments span four distinct environments: Bandits, Brax continuous control, Search-R1, and Sokoban.
Each environment required a separate software stack and varying setup, for three main reasons.
First, \emph{packaging conflicts}: the verl-agent fork we build on \citep{feng2025groupingroup} lags upstream \texttt{verl} and pins \texttt{torch 2.8.0+cu128}/\texttt{vLLM 0.11.0}.
Sokoban requires the legacy \texttt{gym 0.26.2} API (via \texttt{gym-sokoban}), which conflicts with the \texttt{gymnasium}-based stack used elsewhere.
Second, \emph{hardware heterogeneity across providers}: we ran on a mix of A100-40GB, A100-80GB, and H100-80GB instances (see Appendix~\ref{appendix:experimental_setup:hardware}), and VRAM size materially changed configuration.
Third, \emph{model choices dictated by baseline parity and modality}: Search-R1 uses \texttt{Qwen2.5-3B-Instruct} to match the scale reported in prior work, and Sokoban uses the vision-language \texttt{Qwen2.5-VL-3B-Instruct} to consume the RGB puzzle rendering directly.
The Brax environments were fortunately much simpler, as they only required a small MLP provided by the reference implementation of \citet{freeman2021brax}.
All cloud runs share Ubuntu 24.04.4 LTS, Python 3.12.3, NVIDIA driver 570.211.01, and set \texttt{NCCL\_NET=Socket}.
This socket setting made our runs slower, but it was required due to GCP-related issues.
Table~\ref{tab:software-stacks} lists the key package versions per environment.
For more specific details, we advise the reader to look at the verl-agent documentation \url{https://github.com/langfengQ/verl-agent} which has extensive documentation for the environment setup.

\begin{table*}[h]
    \centering
    \caption{
        Key software versions per experimental environment.
        Top section: policy model.
        Bottom section: core framework (first row) and other critical pinned packages.
        All LLM-agent runs additionally use \texttt{transformers 4.57.6} and \texttt{ray 2.54.1}. $^\dagger$\texttt{flash-attn} built from source to resolve ABI mismatch with \texttt{torch 2.8.0+cu128}.
        SearchR1 requires a separate server for the wiki retrieval, making it the most involved environment to set up.
        }
    \label{tab:software-stacks}
    \footnotesize
    \vspace{1em}
    \setlength{\tabcolsep}{4pt}
    \begin{tabular}{@{}lll@{}}
    \textbf{Bandit \& Brax} & \textbf{SearchR1} & \textbf{Sokoban} \\
    \midrule
    Logit-Table / MLP & Qwen2.5-3B-Instruct                  & Qwen2.5-VL-3B-Instruct \\
    \midrule
    JAX 0.9.2         & verl-agent 0.3.1.dev                  & verl-agent 0.3.1.dev \\
    brax 0.14.2       & torch 2.8.0+cu128                     & torch 2.8.0+cu128 \\
    mujoco-mjx 3.6.0  & vLLM 0.11.0                           & vLLM 0.11.0 \\
    flax 0.12.6       & flash-attn 2.7.4.post1$^\dagger$      & gym 0.26.2 \\
    optax 0.2.8       & faiss-gpu-cu12                        & gym-sokoban 0.0.6 \\
    \end{tabular}
\end{table*}

\subsection{Main Environments}
\label{appendix:experimental_setup:environments}

We evaluated across seven environments: four feature-isolating bandit problems (the three diagnostic bandits of Figure~\ref{fig:tabular_auc} and a linear contextual bandit), continuous control (HalfCheetah), a text-based multi-turn LLM-agent task (Search-R1), and a vision-language grid puzzle (Sokoban).

\paragraph{$\sigma$-trap bandit (Figure~\ref{fig:tabular_auc}, left).}
A two-armed Gaussian bandit with an optimal arm ($\mu = 10$, $\sigma = 0.5$) and a zero-mean distractor whose standard deviation is swept over $\sigma_{\mathrm{trap}} \in [0.5, 32768]$.
A risk-neutral learner commits to the optimal arm at every noise level, while a risk-seeking learner is eventually drawn to the distractor as its variance grows.
All methods draw $N = 8$ pulls per step for $T = 10^6$ steps with a shared learning rate of $10^{-4}$, except for the GRPO lr-adjusted version which scales the learning rate by $\sqrt{\nicefrac{\sigma_1}{\max_a \sigma_a}}$ to compensate for increased gradient noise but has to train for a $T' = 32\cdot T$ step budget to actually converge.
FTW uses $C{=}2$, $K{=}1$, $J{=}8$ per step over a FIFO buffer of max age $h$, our DPO ($\beta = 1$) version samples pairs from this same buffer mechanism.
Replay-buffered curves are averaged over $h \in \{16, 64, 256, 1024, 4096\}$.
We report the asymptotic $\pi(a^\ast)$ per noise level (10 seeds, bootstrap 95\% CIs).

\paragraph{Truncated Pareto-$\alpha$ bandit (Figure~\ref{fig:tabular_auc}, center).}
A ten-armed tabular bandit with multiplicative heavy-tailed returns $r = c_a (1 - u)^{-\nicefrac{1}{\alpha}}$, $u \sim \mathrm{Unif}$, truncated to $[1, 10^6]$ so that all moments exist, with arm locations $c_a = 1 + 0.5a$.
The tail index is swept over $\alpha \in \{0.05, \dots, 1.7\}$, where smaller $\alpha$ produces heavier upper tails and the untruncated moment-generating function diverges at $\alpha \le 1$.
We report average (bold) and sample (transparent) learning curves of $\pi(a^\ast)$, see also Appendix~\ref{appendix:results:bandit_sweep} for additional results on different reward distributions.

\paragraph{State-aliasing corridor (Figure~\ref{fig:tabular_auc}, right).}
The short-corridor task of \citet{sutton_reinforcement_2018} (Ex.~13.1): states $\{s_0, s_1, s_2\}$ with reward $-1$ per step, where the action effect in $s_1$ is reversed.
The policy is a single global softmax over the two actions, which aliases all states and makes the optimal policy stochastic, $\pi(\text{right}) = 2 - \sqrt{2}$.
We plot the L1 distance $|\pi(a) - \pi^\ast(a)|$ between the learned and optimal action distributions (represented as a 2D vector), averaged over a learning-rate population for 20k update steps (see also Appendix~\ref{appendix:results:state_aliasing} for additional results).

\paragraph{Linear contextual bandit.}
A contextual bandit with deterministic rewards and linear features that reports simple and cumulative regret to the optimal arm (all results report mean and $95\%$ bootstrap CI over 30 seeds).
We define the reward as $r(x, a) = \nicefrac{1}{2} + \nicefrac{1}{2}\,\theta_a^\top \tilde\phi(x)$ with $\|\theta_a\| = 1$ and $\|\tilde\phi(x)\| \le 1$, which bounds $r \in [0, 1]$ without clipping and fixes the marginal $\mathbb{E}_x[r(x,a)] = \nicefrac{1}{2}$.
As a result, no action is better on average and only positional information allows exploitation.
We achieve this by sampling for every task instance (i.e., one complete run for a given seed) the triple $\langle \omega, b, \theta \rangle$, such that $\phi_j(x) = \sqrt{\nicefrac{2}{d}}\cos(\omega_j x + b_j)$ with $\omega_j \sim \mathcal{N}(0, \ell^{-2})$, $b_j \sim \mathrm{Unif}[0, 2\pi)$, and lengthscale $\ell = 0.1$.
The features are zero-meaned over the context grid and divided by the largest $\ell_2$ norm $\max_{x'} \|\phi(x')\|_2$, so that $\|\tilde\phi(x)\|_2 \le 1$ everywhere with feature dimensionality $d=16$.
Contexts $x$ are drawn uniformly over a linspace of 256 points in $[0, 1]$.
Finally, we sample each arm's reward parameters uniformly from the unit-sphere: $\theta_a \sim \mathcal{N}(0, I_d)$ then l2 normalize to $\|\theta_a\| = 1$.
We keep one special ``safe'' arm with $\theta = 0$ such that the reward is $\nicefrac{1}{2}$ independent of context (safe but suboptimal).

All policy methods share the linear softmax policy $\pi(a \mid x) \propto \exp(W\bar\phi(x))_a$ over the bias-augmented features $\bar\phi(x) = (1, \tilde\phi(x))$, with $W \in \mathbb{R}^{|\mathcal{A}| \times (d+1)}$.
Appendix~\ref{appendix:belief} describes the complete parametrization and derivation of the learning rule. 
We used $|\mathcal{A}| = 20$ actions with a per-step batch-size (number of contexts) of $16$.
So, FTW and DPO draw $N = 16$ contexts and a single action, however, GRPO draws $4$ contexts per step with an on-policy group of $G = 4$ actions for the group-mean baseline.
As a baseline we use Thompson sampling \citep{thompson_likelihood_1933, chapelle_empirical_2011} via conjugate Bayesian linear regression per arm.

\paragraph{Brax continuous control \citep{freeman2021brax}.}
HalfCheetah with the Brax \texttt{spring} backend, episode length $1{,}000$, action repeat $1$, observation normalization enabled.
The policy is an MLP $[256,256]$ with ReLU activations and a \texttt{NormalTanhDistribution} head (brax \texttt{SACNetworks}).
\begin{itemize}
    \item Observations are continuous, with 18 dimensions.
    \item The action space is a bounded continuous vector in $[-1, 1]^D$, with $D = 6$.
    \item Rewards follow the standard Brax forward-velocity plus control-cost shaping (norm of the action).
    \item Episodes terminate when the agent reaches an unhealthy joint configuration.
\end{itemize}
The key metric is the evaluation episode return, computed with a fixed policy at lower temperature.

\textbf{Search-R1}~\citep{jin2025searchr1}.
Multi-turn open-domain QA with a retrieval tool, framed as a POMDP where the agent alternates between internal reasoning and tool invocation until it commits to an answer.
At each turn the prompt contains the question and the interaction history, where prior search queries and the retrieved passages are interleaved as \texttt{<search>...</search>} and \texttt{<information>...</information>} blocks.
The LLM emits chain-of-thought inside \texttt{<think>...</think>} and then chooses exactly one of two actions: a new query in \texttt{<search>...</search>} (control returns to the retriever) or a final answer in \texttt{<answer>...</answer>} (terminating the episode).
Emitting both or neither consumes a turn.
Retrieved passages are obtained by dense lookup against an E5 index of the Wikipedia-18 corpus (\texttt{wiki-18.jsonl}, 14\,GB) compressed from a 61\,GB flat index to an 815\,MB IVF-PQ index (\texttt{e5\_IVFPQ.index}) and served from a dedicated retrieval venv on port 8000, returning the top-3 passages per query.
Reward is an exact-match indicator on the final answer against the gold label, with an additional $-0.1$ penalty for malformed emissions (missing or duplicate action tags).
Train set: 169{,}615 questions from \texttt{PeterJinGo/nq\_hotpotqa\_train} (a mix of Natural Questions and HotpotQA).
Test set, on which the main-text accuracies are computed, is the 51{,}713-question eval set bundled with the same release, drawn from NQ, TriviaQA, PopQA, HotpotQA, 2WikiMultihopQA, Musique, and Bamboogle.
Training-time curves use a fixed 1{,}280-question slice of it.
Max turns: 4.
Prompt/response budgets: 4{,}096 / 512 tokens.

\textbf{Sokoban}~\citep{SchraderSokoban2018}.
Vision-language box-pushing puzzle (\texttt{gym\_sokoban}) on a 6$\times$6 grid with one box and a generator search depth of 30 (the \texttt{num\_gen\_steps} parameter controlling puzzle complexity at instantiation, not the per-episode horizon).
At each turn the environment's \texttt{rgb\_array} rendering is supplied to the policy as an image via \texttt{data.image\_key=images}, alongside the textual task description and a short history of prior actions.
The vision-language model emits chain-of-thought inside \texttt{<think>...</think>} followed by a single direction in \texttt{<action>...</action>}, with admissible set \{up, down, left, right\}.
Each direction is dispatched as the corresponding \texttt{Push} action in \texttt{gym\_sokoban}, and unparsable or illegal emissions consume a turn.
Rewards follow the \texttt{gym\_sokoban} defaults: $-0.1$ per step, $\pm 1$ on pushing the box on/off its target, and $+10$ upon solving, with an additional $-0.1$ format penalty consistent with the main text.
Train/validation split: 32 / 128 puzzles.
Episode horizon: 15 turns.
Prompt/response budgets: 1{,}024 / 512 tokens, with image tokens accounted for separately by the VLM processor and not counted against the prompt budget.

\subsection{Hyperparameters}
\label{appendix:experimental_setup:hyperparameters}

Hyperparameters are reported in two tables: Brax continuous control (Table~\ref{tab:hyperparams-brax}) and the shared LLM-agent RL configuration (Table~\ref{tab:hyperparams-shared}).
For every baseline, SAC and the Brax network architecture from \citet{freeman2021brax}, and the GRPO defaults from verl-agent \citep{feng2025groupingroup}, we retain the published values wherever possible to keep comparisons fair.

\paragraph{Brax continuous control.}
All methods share the same optimizer, replay buffer, observation normalization, and network architecture.
Algorithm-specific differences (Table~\ref{tab:hyperparams-brax}) concern the actor learning objective and the update-to-data (UTD) ratio.

\begin{table}[ht]
    \centering
    \caption{
        Brax HalfCheetah hyperparameters.
        All methods share $\mathrm{lr}{=}3{\times}10^{-4}$ (Adam), $\gamma{=}0.99$, replay size $2{\times}10^{6}$, $64$ parallel envs, batch size $256$, episode length $1000$, $10^{9}$ total env steps, warmup $8192$ transitions, and $10$ seeds.
        Any parameters not mentioned are all kept at their default values from the Brax~0.14.2 \citep{freeman2021brax} implementation of SAC \citep{haarnoja_sac_2018}.
    }
    \label{tab:hyperparams-brax}
    \renewcommand{\arraystretch}{1.4}
    \vspace{1em}
    \setlength{\tabcolsep}{4pt}
    \begin{tabular}{@{}lll@{}}
    \textbf{Parameter}           & \textbf{FTW}                & \textbf{SAC}                                         \\
    \midrule
    Actor objective              & BC on top-$K$ winners       & $\alpha\log\pi - Q$                                  \\
    Elite filter $C$ / $J$ / $K$ & $64$ / $256$ / $\{1,4\}$    & n/a                                                   \\
    Updates per env step         & $8$ (BC)                    & $1$                                                  \\
    Entropy                      & $0.05$ (fixed)              & auto, target $\bar{H}{=}{-}\tfrac{1}{2}|\mathcal{A}|$ \\
    Critic                       & n/a                         & clipped double-$Q$, Polyak $\tau$                    \\
    \end{tabular}
\end{table}

\paragraph{LLM-agent RL.}
For both Search-R1 and Sokoban, all settings in Table~\ref{tab:hyperparams-shared} are shared across all tested algorithms.
The replay buffer size is listed at the bottom of Table~\ref{tab:hyperparams-shared}.
GRPO replaces FTW's elite filtering with group-relative advantages over $n$ rollouts per prompt, combined with a PPO-clip objective rather than the cross-entropy SFT loss.
PPO replaces the group baseline with a learned critic, a second Qwen2.5-VL-3B-Instruct initialised from the base model and trained at lr $10^{-5}$ with value clipping $0.5$, using GAE with $\lambda{=}1$ and $\gamma{=}1$ so the advantage is the Monte-Carlo return minus the value estimate.
REINFORCE++ \citep{hu2025reinforcestabilizingcriticfreepolicy} keeps a single rollout per prompt and normalises returns by the global batch mean and standard deviation, with the same PPO-clip objective, KL penalty, and entropy coefficient as GRPO.
Both baselines run directly through the verl-agent implementation \citep{feng2025groupingroup} by switching its advantage estimator, with all other settings held fixed.
The FTW configurations differ only in the minibatch size $C$ and the number of elites per minibatch $K$.
On Search-R1 we use $C{=}4$ with $K{=}1$, and on Sokoban $C{=}5$ with $K \in \{1, 2, 3\}$ retained trajectories, sweeping from strong to weak selection pressure.
The limiting case $C{=}2$, $K{=}1$ is binary selection, which coincides with DPO's pairwise potential and is not run here.
For all FTW configurations, the number of minibatches per gradient step $J$ is set proportional to $K$ such that each update receives $J\cdot K$ training targets at a fixed effective batch size.
The cross-environment differences in Table~\ref{tab:hyperparams-shared} (e.g.\ KL coefficient, discount $\gamma$) are inherited from the published baseline settings for each benchmark rather than tuned for FTW.
Default system prompts for the LLMs are kept at their default values as provided by the packages pinned in Table~\ref{tab:software-stacks}.

\begin{table*}[ht]
    \centering
    \caption{
        LLM-agent hyperparameters.
        All runs use \texttt{bfloat16} rollouts and optimize with AdamW \citep{loshchilov2019decoupled}.
        Unless otherwise stated, we use the exact hyperparameter defaults as described in the respective baselines as implemented by verl-agent~0.3.1.dev \citep{feng2025groupingroup}.
        For FTW, the number of minibatches per update is $J = 128 / K$, so every update uses $J \cdot K = 128$ training targets regardless of $K$.
        }
    \label{tab:hyperparams-shared}
    \renewcommand{\arraystretch}{1.2}
    \vspace{1em}
    \setlength{\tabcolsep}{4pt}
    \begin{tabular}{@{}lll@{}}
    \textbf{Parameter}                            & \textbf{SearchR1}        & \textbf{Sokoban} \\
    \midrule
    Learning rate                                 & $1{\times}10^{-6}$       & $1{\times}10^{-6}$ \\
    Weight decay                                  & $0.01$                   & $0.01$ \\
    Gradient clip                                 & $1.0$                    & $1.0$ \\
    Rollout batch (prompts)                       & $256$                    & $32$ \\
    Rollouts per prompt $n$ (GRPO only)           & $5$                      & $8$ \\
    Mini-batch size                               & $512$                    & $64$ \\
    Epochs per update                             & $1$                      & $1$ \\
    KL coefficient                                & $10^{-3}$                      & $10^{-3}$ \\
    Entropy coefficient                           & $0.001$                      & $0.001$ \\
    Invalid-action penalty                        & $0.01$                   & $0.1$ \\
    $\gamma$ (multi-turn discount)                & $0.95$                   & $1.0$ \\
    Temperature (train / val)                     & $1.0$ / $0.4$            & $1.0$ / $0.4$ \\
    Top-$p$ / top-$k$                             & $1.0$ / $-1$             & $1.0$ / $-1$ \\
    Max prompt / response                         & $4096$ / $512$           & $1024$ / $512$ \\
    Max model len (vLLM)                          & $8192$                   & $1536$ \\
    \midrule
    Replay buffer size $N$                              & $10\;000$                   & $2000$ \\
    \end{tabular}
\end{table*}

\clearpage
\section{Supplementary Results}
\label{appendix:results}

This appendix collects results that complement the main paper.
Section~\ref{appendix:results:training} reports in-distribution learning curves for the LLM-agent experiments (test-time curves appear in the main text, Figure~\ref{fig:llm_post_training}).
Section~\ref{appendix:results:sigma_toggle} dissects the FTW selection operator on the $\sigma$-trap, grading the polynomial filter by selection pressure and sweeping the temperature of an exponential-rank alternative.
Section~\ref{appendix:results:bandit_sweep} extends the Pareto-$\alpha$ sweep of Figure~\ref{fig:tabular_auc} to four additional return distributions, isolating which features of the return distribution drive the failure mode of cardinal-exponential potentials.
Section~\ref{appendix:results:state_aliasing} expands the state-aliasing experiment with a learning-rate sweep, showing that the polynomial-vs-exponential concentration gap is not an artifact of a single learning rate (which dampens learning instability).
Section~\ref{appendix:results:tie_saturation} identifies a solution bias of elite filtering over the replay buffer when returns saturate and comparisons are dominated by ties, complementing the robustness results above with the adversarial regime of the rank-based potential.

\subsection{Training-time Success Rates for Sokoban and Search-R1}
\label{appendix:results:training}

\begin{figure}[h]
    \centering
    \begin{minipage}[t]{0.49\linewidth}
        \vspace{0pt}
        \includegraphics[width=\linewidth]{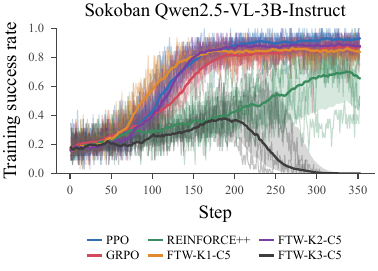}
    \end{minipage}
    \hspace{\fill}
    \begin{minipage}[t]{0.49\linewidth}
        \vspace{0pt}
        \includegraphics[width=\linewidth]{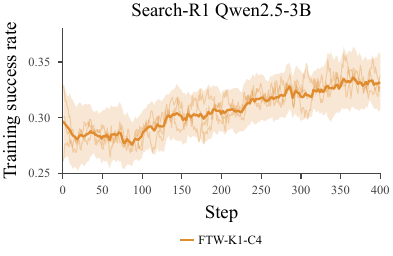}
    \end{minipage}
    \caption{
    Online episode success rate on Sokoban (left) and Search-R1 (right) with EMA-smoothed means and $\pm$ 1 standard deviation over seeds, individual seeds are drawn faintly.
    Note that, the Search-R1 training split covers only NQ and HotpotQA, so the right panel is not on the same question distribution as the main-text reported test accuracy.}
    \label{appendix:figure:training_curves}
\end{figure}

Figure~\ref{appendix:figure:training_curves} shows success rates on training episodes that complement the test-time evaluations in the main text.
On Sokoban (left), our FTW-K1-C5 and FTW-K2-C5 and the GRPO and PPO baselines cluster almost indistinguishably together, whereas REINFORCE++ trains markedly slower and with larger seed variance.
In contrast, our FTW-K3-C5 peaks on average near $0.4$ and then collapses to zero.
This collapse can partly be attributed to Sokoban's effectively binary returns.
Whenever a group contains fewer than $K$ successes the remaining elite slots are always filled with zeros, and such groups are far more common at $K{=}3$ than at $K{=}1$, where one success out of $C$ already suffices.
Although the imitated elite threshold still averages above the buffer mean in expectation (Appendix~\ref{ap:prop:risk_table}), the share of imitated zeros grows as the success rate falls and this drags each consecutive update further down.
Ultimately this is a finite-sample effect rather than a flaw in the operator, a larger replay buffer would smooth the K3 curve out at the cost of slower learning.
On Search-R1 (right), FTW-K1-C4 dips slightly during initial training, and then steadily improves.
Note that the training success rate is not directly comparable to the test success rate, due to the validation set only covering NQ and HotpotQA questions.

\subsection{Varying FTW Selection Operators for the noisy Bandit-trap}
\label{appendix:results:sigma_toggle}

\begin{figure}[t]
    \centering
    \includegraphics[width=0.5\linewidth]{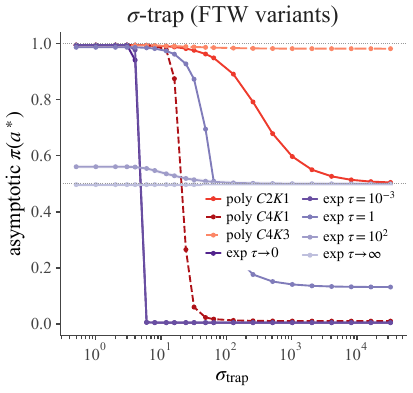}
    \caption{$\sigma$-trap solutions for FTW selection variants (10 seeds; shaded: bootstrap 95\% CIs).}
    \label{appendix:figure:sigma_toggle}
\end{figure}

Extending upon Figure~\ref{fig:tabular_auc} (left) we provide more examples of the FTW filtering pressure on the $\sigma$-trap in Figure~\ref{appendix:figure:sigma_toggle}.
All variants share the identical backbone of FTW, they only differ in how the filtering (winner selection) is applied to the replay buffer.
As in the main text, results are averaged over buffer sizes $h \in \{16, 64, 256, 1024, 4096\}$ for FTW-polynomial and with $h \in \{16, 64\}$ for the exponential filtering on the ranking, as FTW-exp requires sorting the entire replay buffer, which gets expensive.
Note that the FTW-exp-marg from the main text (Fig.~\ref{fig:tabular_auc}) filtered using a softmax over the buffer returns, whereas this experiments uses a multinomial over their ordered rank (argsort index) for a given temperature $\tau$.
This was done to eliminate the ordinal vs. marginal confounding argument of our FTW operator.

The polynomial selection (red lines, Figure~\ref{appendix:figure:sigma_toggle}) are determined by the selection pressure $\nicefrac{C}{K}$.
We see for a mild $C{=}4, K{=}3$ filter that the risk-profile stays minimal, even at larger $\sigma$.
In contrast the more aggressive $C{=}4, K{=}1$ filter collapses onto the trap arm, as suspected.
For the exponential-ordinal filter, we see that the greedy limit $\tau \to 0$ (buffer argmax) collapses onto the trap even swifter than the aggressive polynomial configuration. 
Unsurprisingly, the other extreme $\tau\rightarrow \infty$ stays constant to its 1/2 prior.
The temperatures inbetween $\tau \in \{0.01, 1.0, 100.0\}$ form a type of U-curve that finds the optimal arm for moderate or low noise (temp = 1.0) before interpolating back to uniform for large $\tau$ or back to the overly aggressive $\tau=0$ version.

\subsection{Bandit Distribution Sweep}
\label{appendix:results:bandit_sweep}

\begin{figure}[p]
    \centering
    \includegraphics[width=0.55\linewidth]{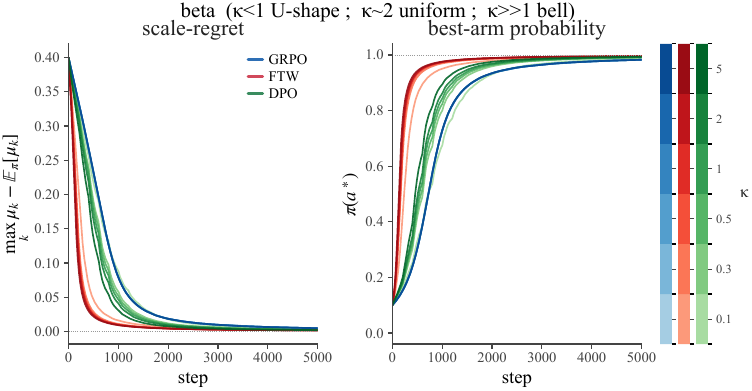}
    \includegraphics[width=0.55\linewidth]{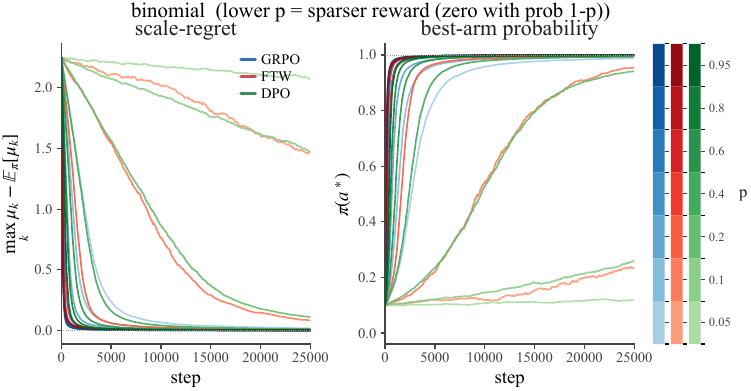}
    \includegraphics[width=0.55\linewidth]{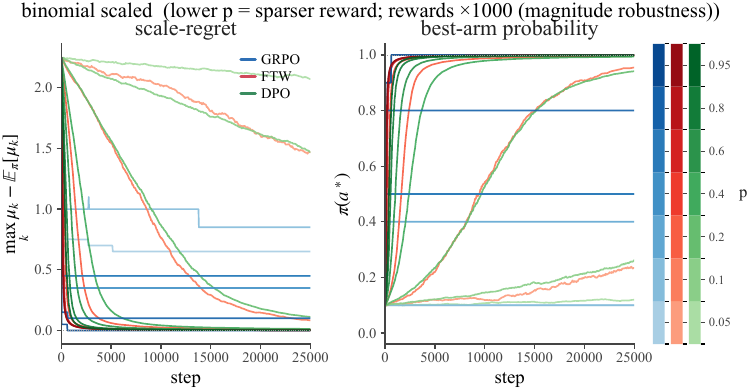}
    \includegraphics[width=0.55\linewidth]{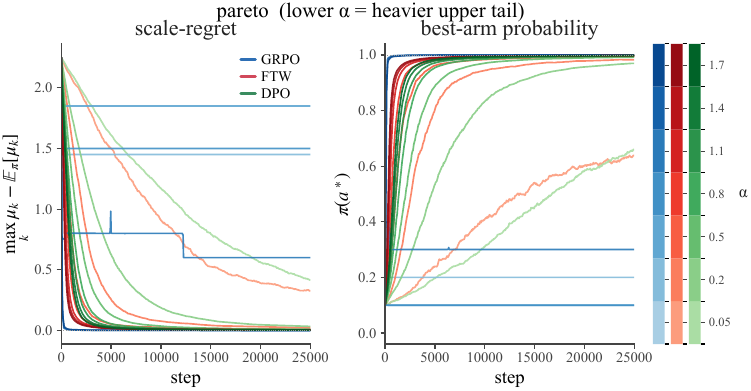}
    \includegraphics[width=0.55\linewidth]{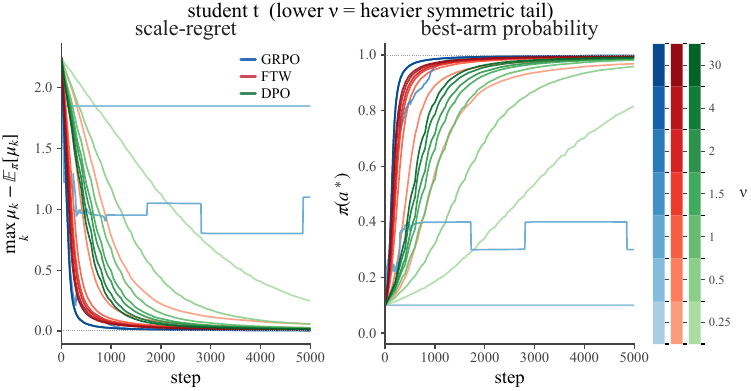}
    \caption{Bandit sweep across five return distributions (Beta, Binomial, scaled Binomial, Pareto, Student-$t$). Left column shows the scale-regret $\max_k \mu_k - \mathbb{E}_\pi[\mu_k]$, right column shows the probability of selecting the optimal arm. Color encodes the distribution-shape parameter, and we use the same legend as our Pareto-$\alpha$ result in the main text.}
    \label{appendix:figure:bandit_distributions_supp}
\end{figure}

Figure~\ref{appendix:figure:bandit_distributions_supp} extends the Pareto-$\alpha$ sweep of Figure~\ref{fig:tabular_auc} (center) to four additional return distributions.
The five rows isolate distinct features of the return distribution and check whether the theoretical risk profile of Proposition~\ref{prop:risk} predicts the empirical failure modes correctly.

\textbf{Beta ($\kappa$ shape).}
Bounded support, where varying $\kappa$ moves between U-shaped, uniform, and bell-shaped returns.
All three methods solve the bandit, and FTW shows slightly faster scale-regret reduction in the early phase.
As expected, no method is destabilized when returns are bounded, regardless of shape.

\textbf{Binomial.}
The clean sparsity sweep.
All three methods identify the optimal arm.
GRPO converges visibly slower than the rank-based methods at low $p$, consistent with its cardinal treatment of returns under sparse signals.

\textbf{Binomial scaled ($\times 1000$).}
The same Bernoulli-rate sweep with rewards scaled by $1000$, isolating magnitude robustness from sparsity.
GRPO's curves remain bounded but the slow-convergence gap at low $p$ widens substantially under the scale.
FTW and DPO are scale-invariant by construction (rank-based), and their curves overlap their unscaled counterparts.
This confirms that magnitude (not sparsity per se) is the destabilizing factor for the cardinal baseline.

\textbf{Pareto-$\alpha$.}
Reproduces the main-text result for the non-truncated version of the distribution.
GRPO fails to identify the optimal arm at heavy-tail $\alpha \leq 1$, where the moment-generating function diverges.
FTW and DPO remain stable, as predicted by the rank-based potentials' well-defined fixed point under any continuous return distribution.

\textbf{Student-$t$ ($\nu$).}
Symmetric, polynomially-tailed distribution on $\mathbb{R}$.
At $\nu \leq 2$ the variance of returns becomes undefined, and GRPO's group-mean baseline destabilizes accordingly, with scale-regret curves drifting noticeably.
FTW and DPO remain unaffected, since the rank-based potential depends only on the order statistic.
This confirms that the destabilization of GRPO under heavy tails is not specific to the one-sided support of Pareto.

Across all five rows, the rank-based potentials never destabilize, while GRPO destabilizes precisely in the regimes where the moment-generating function diverges.
This matches the prediction of Proposition~\ref{prop:risk}.
The cardinal-exponential potential has an undefined fixed point under heavy tails, while rank-based potentials depend only on the order statistic and are insensitive to tail behavior.

\subsection{State-Aliasing under Learning-Rate Variation}
\label{appendix:results:state_aliasing}

\begin{figure}[h]
    \centering
    \includegraphics[width=0.9\linewidth]{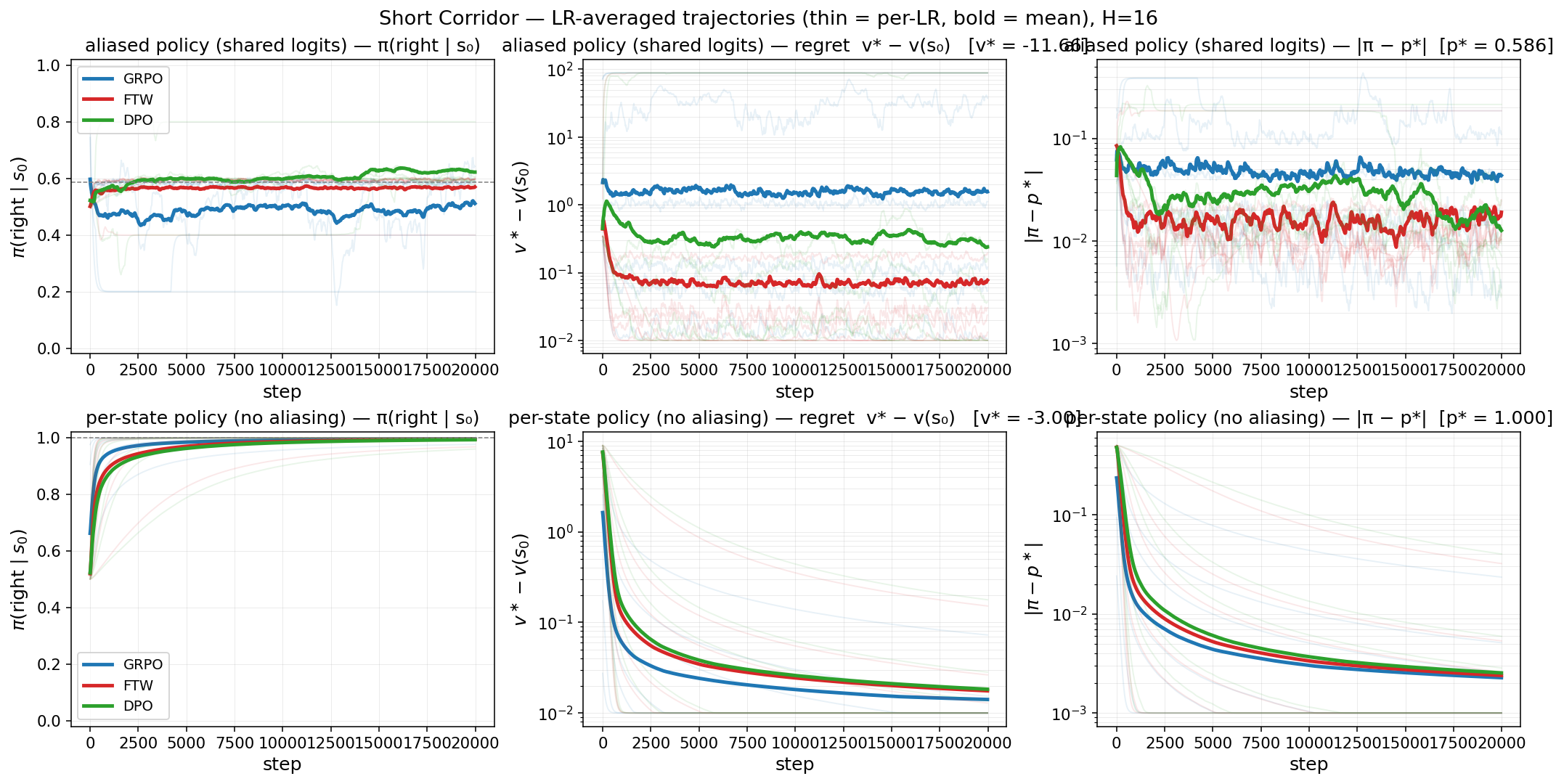}
    \caption{State-aliasing (corridor) task with a learning-rate sweep at horizon $H=16$. We plot average L1 distance $|\pi(a) - \pi^*(a)|$ between the learned and optimal action distributions, averaged across learning rates within each method. Lower is better.}
    \label{appendix:figure:corridor_lr}
\end{figure}

Figure~\ref{appendix:figure:corridor_lr} extends the state-aliasing result of Figure~\ref{fig:tabular_auc} (right) by sweeping the learning rate.
The main text reports a single curve per method.
Here we average across a learning-rate population to verify that FTW's lower L1 distance is not a property of a favorable learning-rate choice.
The ordering FTW $<$ DPO, GRPO holds across the swept range, supporting the claim that polynomial concentration retains more useful stochasticity under partial observability than exponential concentration, independently of step-size tuning.

\subsection{FTW under Tie-Saturated Rewards}
\label{appendix:results:tie_saturation}

This subsection briefly characterizes the effect of tie-breaking when returns frequently saturate around similar values, which is an important discussion item as it eliminates the discriminative signal that elite-filtering is supposed to provide. 
This is common under Binomial returns or sparse discrete reward settings, like our LLM post-training on SearchR1.
We isolate the effect in a two-armed bandit with Bernoulli returns $G \in \{0, 1\}$ and success probabilities $p_1 = 0.999$ and $p_2 < p_1$, whose difference defines the gap $\delta \eqdef p_1 - p_2$, swept over $\delta \in \{0.099, 0.009, 0.004, 0.001, 10^{-4}\}$.
FTW is configured as in the main text with $C = 2$, $K = 1$, $J = 32$, and we ablate the buffer's maximum sample age $h$, where $h = \infty$ never deletes data. 
On-policy GRPO is used as a reference.

\paragraph{Mechanism.}
Consider a single FTW update in this setting.
When both arms succeed with probability close to one, the two sampled returns in a pairwise comparison are almost always $1$ and $1$, which makes the arms close to indistinguishable.
To make this precise, let $\hat{q}(a)$ denote the buffer-occupancy of arm $a$.
The probability that arm $a$ wins from $b$ under the sketched Bernoulli is $p_a(1 - p_b)$, and suppose it wins half of the tied cases giving $\Pr(a \text{ beats } b) = \tfrac{1}{2}(1 + p_a - p_b)$.
This gives the general tilting distribution as a special case of Eq.~\eqref{eq:probability_of_tourney_win},
\begin{equation}
    \Pr(\text{elite} = a) \;\propto\; \hat{q}(a)\,\bigl(1 + p_a - \mathbb{E}_{b \sim \hat{q}}[p_b]\bigr).
    \label{eq:tie_winner_dist}
\end{equation}
Where $\mathbb{E}_{b \sim \hat{q}}[p_b] = \hat{q}(1)\,p_1 + \hat{q}(2)\,p_2$ is the average success probability of a random buffer sample.
In expectation, the better arm receives the factor $1 + \hat{q}(2)\,\delta$ and the worse arm the factor $1 - \hat{q}(1)\,\delta$.
Selection therefore prefers the better arm, but only by a factor of at most $1 + \delta$ regardless of the buffer contents, so under saturation elite-filtering is almost indistinguishable from the buffer occupancy.

Under saturation, learning then entails imitating the buffer plus an $\mathcal{O}(\delta)$ nudge toward the better arm.
Thus, the policy can only improve as fast as the buffer itself moves.
With a finite sample age $h$, old data expires, the buffer tracks the recent and improving policy, and the nudge compounds over updates.
With $h = \infty$ nothing expires.
Each new batch is then a vanishing $\nicefrac{1}{t}$ fraction of the buffer, so the policy is perpetually trained to imitate its own near-uniform early history.

GRPO does not inherit this property for a simple reason.
Its advantage $G_i - \bar{G}$ is exactly zero on a tied group, so where elite filtering must imitate, GRPO simply performs no update and waits for an informative batch.
Thus, FTW's learning dynamic is to oscillate around indistinguishable winners, whereas GRPO stays put. 

\begin{figure}[t]
    \centering
    \includegraphics[width=0.99\linewidth]{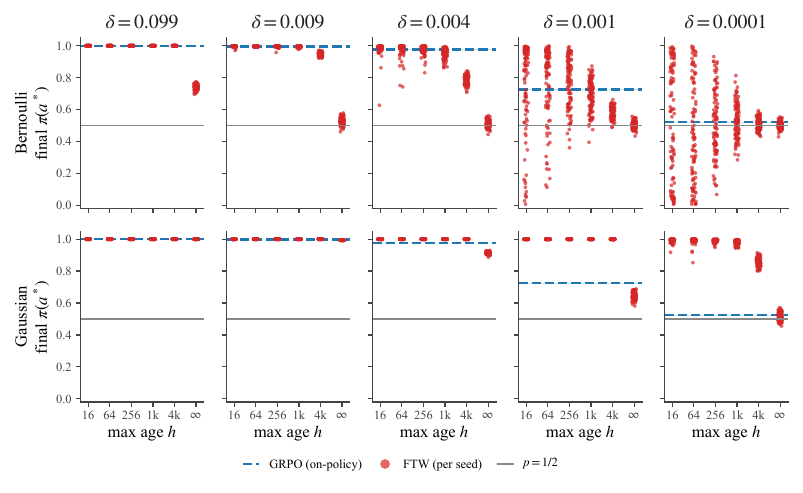}
    \caption{Population solutions after $T = 8 \times 10^5$ steps of training vs.\ maximum sample age $h$ (100 seeds). Dashed line: on-policy GRPO mean and 95\% CI. The top row shows results under Bernoulli returns and the bottom row under homoscedatic Gaussian noise $\mathcal{N}(\mu_a, 0.01)$ returns.
    }
    \label{appendix:figure:tie_strips}
\end{figure}

\begin{figure}[t]
    \centering
    \includegraphics[width=0.99\linewidth]{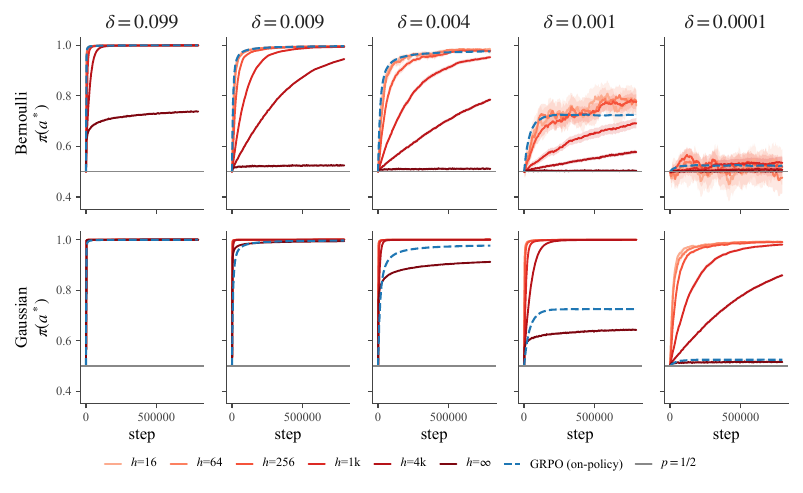}
    \caption{Mean learning curves for Figure~\ref{appendix:figure:tie_strips} (shaded: 95\% CI).
    Evidently, convergence takes proportionally longer as $\delta$ decreases.
    }
    \label{appendix:figure:tie_curves}
\end{figure}

\paragraph{Result.}
Figure~\ref{appendix:figure:tie_strips} shows the population of solutions across buffer sizes, and Figure~\ref{appendix:figure:tie_curves} shows the corresponding average learning curves.
Under Bernoulli returns (top row), the never-delete buffer anchors FTW at chance for small $\delta$, and only the large gap $\delta = 0.099$ produces a marginal shift toward the optimal arm, as the tilt in Eq.~\eqref{eq:tie_winner_dist} scales with $\delta$.
Finite buffers avoid the anchor because dropping old samples lets the policy improvement compound, but smaller buffers turn tie-breaking into selection and update noise, whereas GRPO's tied advantages vanish and its learning is noticeably more stable.
Under Gaussian returns (bottom row), every finite buffer for FTW finds the optimal arm whereas GRPO's learning signal seems to get stale for small $\delta$.

The key takeaway is that tie-breaking is inherent to any RL algorithm on saturated rewards, but the ordinal tilting of methods like FTW induces different learning dynamics than typical marginal methods.
The results present a buffer-size trade-off: larger buffers reduce selection noise and in expectation resolve ties through more accurate non-parametric estimation, at the cost of smoothing and slower policy movement.

\subsection{Epistemic Contextual Bandit}
\label{appendix:results:epistemic}

This subsection is a case study in extracting the \emph{maximum} policy improvement that a selection operator and replay buffer allow.
The mirror-descent update of Eq.~\eqref{eq:belief_update} sets the learning rate and the KL weight exactly as derived through our theory, eliminating the need for tuning.
At least this goes for FTW and DPO, GRPO in contrast cannot be instantiated as a graphical model and its centred-advantage loss admits no posterior precision update rule that yields a positive-definite covariance (Appendix~\ref{appendix:belief}).
Thus, GRPO is run in three imperfect variants to approximate FTW's and DPO's learning rule, one version where normal GRPO is tuned for a learning rate, one where we assume that the covariance is learned through FTW's update rule (ONG), and one where we use the local Fisher information as a gradient preconditioner (NPG).
The environment is the linear contextual bandit of Appendix~\ref{appendix:experimental_setup:environments}, rewards are deterministic and bounded, every arm has the same marginal $\mathbb{E}_x[r(x,a)] = \nicefrac{1}{2}$, so all uncertainty is epistemic and positional structure needs to be exploited for low regret.

The reason why we structured the learning rule and policy for this case study in this particular way is because the explicit Gaussian belief and the bootstrapping from the replay buffer form an analogous stand-in to how data is generated and resampled in typical LLM post-training. 
The LLM predictive (policy) behaves like an amortized posterior predictive that inserts its samples into the buffer. 
Resampling from that buffer exposes implicitly what the model does and does not know, at least structurally.
This case-study then tests: \emph{given} an agent whose uncertainty is represented explicitly and changing over time, does elite filtering act on that uncertainty and reduce it over time?
An affirmative answer would hint at the idea that our framework and instantiation through FTW induces knowledge-seeking in LLM post-training.
This question is also different to plain in-context learning, as data shouldn't only reduce posterior uncertainty, they should also concentrate on the optimal policy.

\begin{figure}[t]
    \centering
    \includegraphics[width=\linewidth]{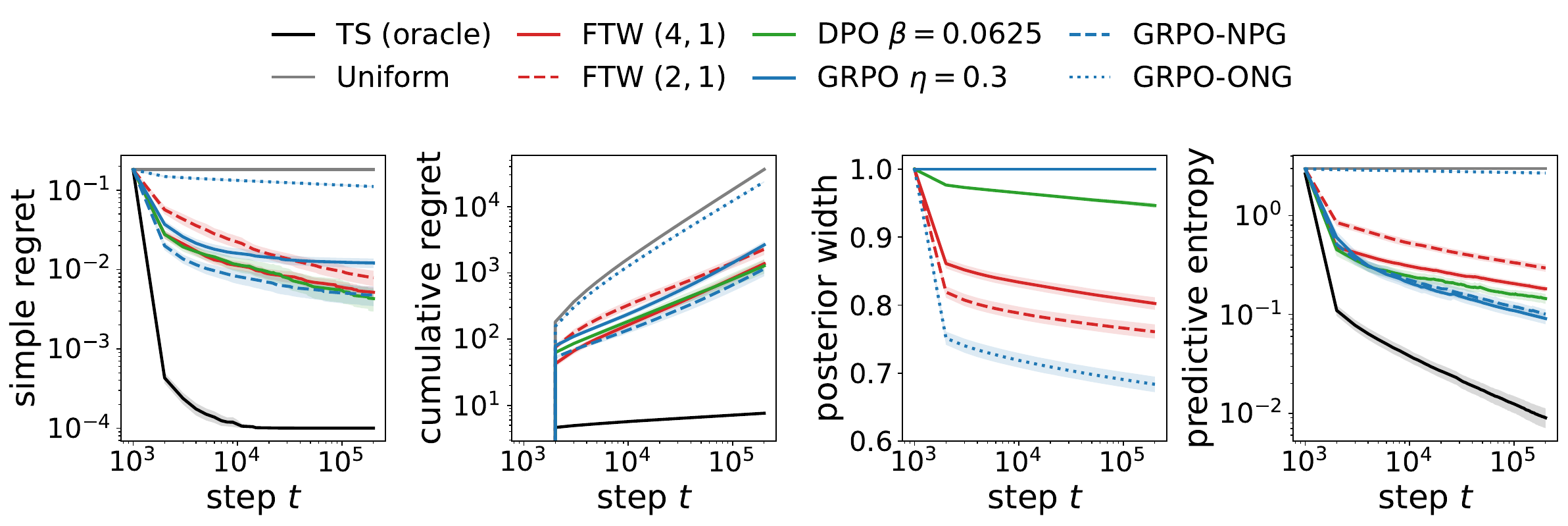}
    \includegraphics[width=\linewidth]{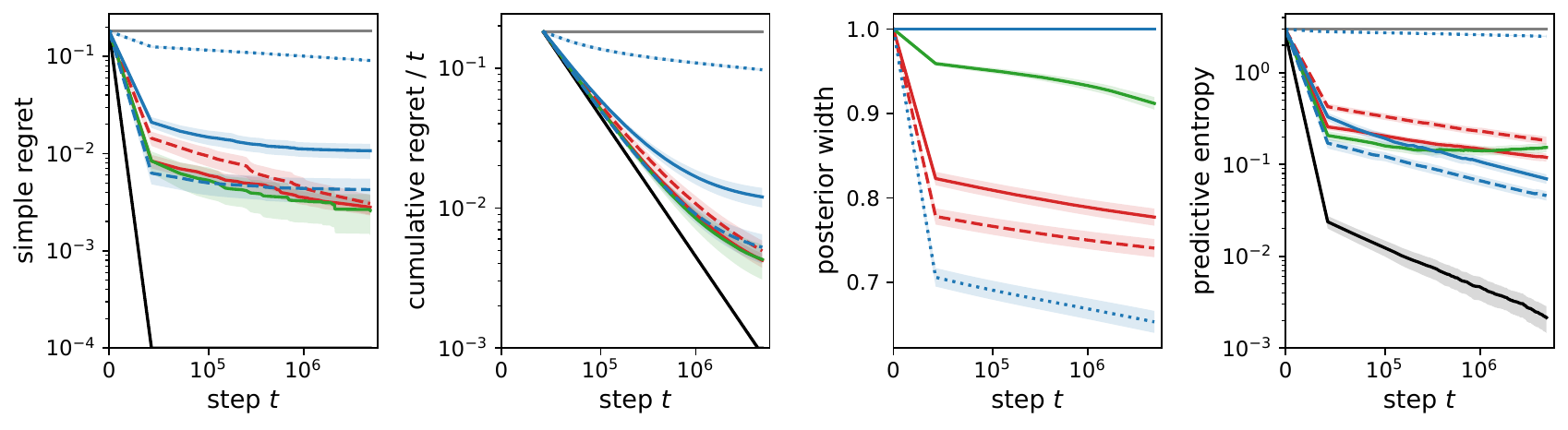}
    \caption{Learning curves at $h = 128$ (30 seeds, mean and Gaussian $95\%$ bands). Left to right: offline simple regret of the posterior predictive, cumulative regret divided by $t$, posterior width, and predictive entropy. Colors denote the method family and line styles its variants. Thompson sampling is an oracle handed the true linear-Gaussian structure whereas the policy methods must learn it through the variational learning rule.}
    \label{appendix:fig:epistemic_headline}
\end{figure}

\begin{figure}[t]
    \centering
    \includegraphics[width=\linewidth]{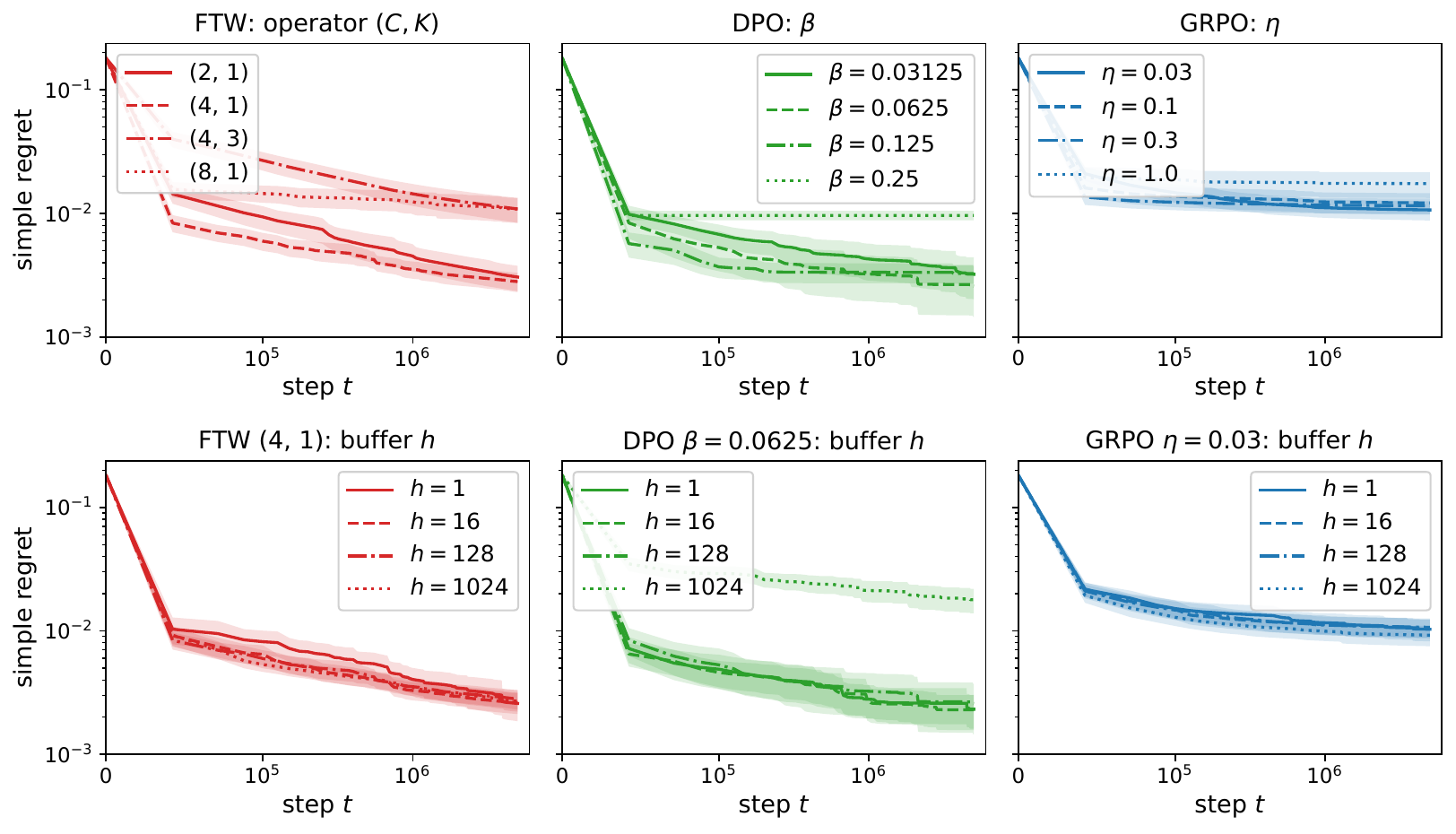}
    \caption{Offline simple regret (30 seeds, Gaussian $95\%$ bands). Top row: each method's concentration knob at $h = 128$. Bottom row: the buffer-depth ladder at each method's best knob setting.}
    \label{appendix:fig:epistemic_sweeps}
\end{figure}

\paragraph{Results.}
Figure~\ref{appendix:fig:epistemic_headline} shows the learning curves of the main configurations, we show regret-metrics alongside the belief and predictive diagnostics.
Then Figure~\ref{appendix:fig:epistemic_sweeps} sweeps each method's concentration parameters (top row) and the buffer depth (bottom row).

One somewhat obvious finding is that too little selection pressure never concentrates the policy, and too much prematurely converges on early winners before the belief has identified them, so regret decreases faster for intermediate selection pressure under each methodology (Figure~\ref{appendix:fig:epistemic_sweeps}, top row).
The effect is of similar size for FTW and DPO, roughly a fourfold spread in final simple regret across their knobs (Figure~\ref{appendix:fig:epistemic_sweeps}, top row).
In general, FTW and DPO are statistically indistinguishable at their best parameter-configurations.

What is also interesting is how the belief precision differs between FTW and DPO, despite achieving similar regret. 
The posterior width for FTW decreases more sharply than for DPO, and its predictive entropy ends lower as well (right of Figure~\ref{appendix:fig:epistemic_headline}).
At equal regret, DPO thus retains a wider belief and a more diffuse predictive than FTW.
GRPO's belief metrics are either stale or overcontract due to the model misspecification or mismatch that prevents implementing a proper update rule in our framework.

Plain GRPO converges prematurely at every fixed step size, whereas the Fisher preconditioner of GRPO-NPG removes most of this gap in both simple and cumulative regret (Figure~\ref{appendix:fig:epistemic_headline}).
The misspecified FTW precision update used by GRPO-ONG contracts the belief before the mean has moved, so its policy stays near uniform and it achieves the worst regret of all tested methods aside from the uniform upper bound.
Finally, the buffer-depth ladder (Figure~\ref{appendix:fig:epistemic_sweeps}, bottom row) shows FTW insensitive to $h$ over three orders of magnitude, whereas DPO degrades sharply at $h = 1024$ and GRPO is flat at its higher regret level.

\end{document}